%File: anonymous-submission-latex-2025.tex
\documentclass[letterpaper]{article} % DO NOT CHANGE THIS
\usepackage{aaai25}  % DO NOT CHANGE THIS
\usepackage{times}  % DO NOT CHANGE THIS
\usepackage{helvet}  % DO NOT CHANGE THIS
\usepackage{courier}  % DO NOT CHANGE THIS
\usepackage[hyphens]{url}  % DO NOT CHANGE THIS
\usepackage{graphicx} % DO NOT CHANGE THIS
\urlstyle{rm} % DO NOT CHANGE THIS
  % DO NOT CHANGE THIS
\usepackage{natbib}  % DO NOT CHANGE THIS AND DO NOT ADD ANY OPTIONS TO IT
\usepackage{caption} % DO NOT CHANGE THIS AND DO NOT ADD ANY OPTIONS TO IT
\frenchspacing  % DO NOT CHANGE THIS
\setlength{\pdfpagewidth}{8.5in} % DO NOT CHANGE THIS
\setlength{\pdfpageheight}{11in} % DO NOT CHANGE THIS
%
% These are recommended to typeset algorithms but not required. See the subsubsection on algorithms. Remove them if you don't have algorithms in your paper.
\usepackage{algorithm}

%
% These are are recommended to typeset listings but not required. See the subsubsection on listing. Remove this block if you don't have listings in your paper.
\usepackage{newfloat}
\usepackage{listings}
\DeclareCaptionStyle{ruled}{labelfont=normalfont,labelsep=colon,strut=off} % DO NOT CHANGE THIS
\lstset{%
	basicstyle={\footnotesize\ttfamily},% footnotesize acceptable for monospace
	numbers=left,numberstyle=\footnotesize,xleftmargin=2em,% show line numbers, remove this entire line if you don't want the numbers.
	aboveskip=0pt,belowskip=0pt,%
	showstringspaces=false,tabsize=2,breaklines=true}
\floatstyle{ruled}
\newfloat{listing}{tb}{lst}{}
\floatname{listing}{Listing}
%
% Keep the \pdfinfo as shown here. There's no need
% for you to add the /Title and /Author tags.
\pdfinfo{
/TemplateVersion (2025.1)
}

\usepackage{xcolor,soul,framed} %,caption

\colorlet{shadecolor}{yellow}
% \usepackage{color,soul}
%Mathabx do not work on ScribTex => Removed
\usepackage{multirow}
\usepackage{array}
\usepackage{amsthm}

\newtheoremstyle{theorem}% name
  {\topsep}% space above
  {\topsep}% space below
  {}% body font
  {}% indent amount
  {\itshape}% theorem head font
  {:}% punctuation after theorem head
  {.5em}% space after theorem head
  {\thmname{#1}\thmnumber{ #2}\thmnote{ (#3)}}% theorem head spec
\theoremstyle{theorem}

\newtheoremstyle{proposition}% name
  {\topsep}% space above
  {\topsep}% space below
  {}% body font
  {}% indent amount
  {\itshape}% theorem head font
  {:}% punctuation after theorem head
  {.5em}% space after theorem head
  {\thmname{#1}\thmnumber{ #2}\thmnote{ (#3)}}% theorem head spec
\theoremstyle{proposition}

\newtheorem{theorem}{Theorem}
\newtheorem{lemma}{\bf Lemma}
\newtheorem{proposition}{Proposition}

\newtheorem{myDef}{Definition}

\usepackage{mdwmath}
\usepackage{mdwtab}
\usepackage{eqparbox}
\usepackage{url}

\usepackage{amssymb}
\usepackage{amsfonts}
\usepackage{amsmath}
\usepackage{multicol}
\usepackage{bm}

%\bstctlcite{IEEE:BSTcontrol}
\usepackage{algorithm}
\usepackage{algorithmicx}
\usepackage{algpseudocode}
\usepackage{cancel}
\usepackage{subeqnarray}
\usepackage{cases}
\usepackage{mathrsfs}
\usepackage{subfigure}
\usepackage{latexsym}
\usepackage{booktabs}
\usepackage{enumitem}

\setcounter{secnumdepth}{2} %May be changed to 1 or 2 if section numbers are desired.

% The file aaai25.sty is the style file for AAAI Press
% proceedings, working notes, and technical reports.
%

% Title

% Your title must be in mixed case, not sentence case.
% That means all verbs (including short verbs like be, is, using,and go),
% nouns, adverbs, adjectives should be capitalized, including both words in hyphenated terms, while
% articles, conjunctions, and prepositions are lower case unless they
% directly follow a colon or long dash
\title{\textsc{Grimm}: A Plug-and-Play Perturbation Rectifier for Graph Neural Networks Defending against Poisoning Attacks}
\author{
    %Authors
    % All authors must be in the same font size and format.
    Ao Liu\textsuperscript{\rm 1},
    Wenshan Li\textsuperscript{\rm 2}\thanks{Corresponding author},
    Beibei Li\textsuperscript{\rm 1},
    Wengang Ma\textsuperscript{\rm 1},
    Tao Li\textsuperscript{\rm 1},
    Pan Zhou\textsuperscript{\rm 3},
}
\affiliations{
    %Afiliations
    \textsuperscript{\rm 1}School of Cyber Science and Engineering, Sichuan University \\
    \textsuperscript{\rm 2}School of Cyber Science and Engineering, Chengdu University of Information Technology \\
    \textsuperscript{\rm 3}School of Cyber Science and Engineering, Huazhong University of Science and Technology \\

    % email address must be in roman text type, not monospace or sans serif
    \{aliu, libeibei, mawengang941206, litao\}@scu.edu.cn,
    helenali@cuit.edu.cn,
    panzhou@hust.edu.cn,
%
% See more examples next
}

%Example, Single Author, ->> remove \iffalse,\fi and place them surrounding AAAI title to use it
\iffalse
\title{My Publication Title --- Single Author}
\author {
    Author Name
}
\affiliations{
    Affiliation\\
    Affiliation Line 2\\
    name@example.com
}
\fi

\iffalse
%Example, Multiple Authors, ->> remove \iffalse,\fi and place them surrounding AAAI title to use it
\title{My Publication Title --- Multiple Authors}
\author {
    % Authors
    First Author Name\textsuperscript{\rm 1},
    Second Author Name\textsuperscript{\rm 2},
    Third Author Name\textsuperscript{\rm 1}
}
\affiliations {
    % Affiliations
    \textsuperscript{\rm 1}Affiliation 1\\
    \textsuperscript{\rm 2}Affiliation 2\\
    firstAuthor@affiliation1.com, secondAuthor@affilation2.com, thirdAuthor@affiliation1.com
}
\fi

% REMOVE THIS: bibentry
% This is only needed to show inline citations in the guidelines document. You should not need it and can safely delete it.
\usepackage{bibentry}
% END REMOVE bibentry

\begin{document}

\maketitle

\begin{abstract}

Recent studies have revealed the vulnerability of graph neural networks (GNNs) to adversarial poisoning attacks on node classification tasks.
Current defensive methods require substituting the original GNNs with defense models, regardless of the original's type.
This approach, while targeting adversarial robustness, compromises the enhancements developed in prior research to boost GNNs' practical performance.
Here we introduce \textsc{Grimm}, the first plug-and-play defense model.
With just a minimal interface requirement for extracting features from any layer of the protected GNNs, \textsc{Grimm} is thus enabled to seamlessly rectify perturbations.
Specifically, we utilize the feature trajectories (FTs) generated by GNNs, as they evolve through epochs, to reflect the training status of the networks. We then theoretically prove that the FTs of victim nodes will inevitably exhibit discriminable anomalies. Consequently, inspired by the natural parallelism between the biological nervous and immune systems, we construct \textsc{Grimm}, a comprehensive artificial immune system for GNNs. \textsc{Grimm} not only detects abnormal FTs and rectifies adversarial edges during training but also operates efficiently in parallel, thereby mirroring the concurrent functionalities of its biological counterparts.
We experimentally confirm that \textsc{Grimm} offers four empirically validated advantages: 1) \emph{Harmlessness}, as it does not actively interfere with GNN training; 2) \emph{Parallelism}, ensuring monitoring, detection, and rectification functions operate independently of the GNN training process; 3) \emph{Generalizability}, demonstrating compatibility with mainstream GNNs such as GCN, GAT, and GraphSAGE; and 4) \emph{Transferability}, as the detectors for abnormal FTs can be efficiently transferred across different systems for one-step rectification.
\end{abstract}

% Uncomment the following to link to your code, datasets, an extended version or similar.
%
% \begin{links}
%     \link{Code}{https://aaai.org/example/code}
%     \link{Datasets}{https://aaai.org/example/datasets}
%     \link{Extended version}{https://aaai.org/example/extended-version}
% \end{links}

\section{Introduction}\label{sec:introduction}
%%%%%%%%%%%%%%%%%%%%%%%%%%%%%%%%%%%%%%%%%%%%%%%%%%%%%%%%%%%%%%%%%%%

Graph neural networks (GNNs), benefitting from the message passing (MP) strategy, have achieved remarkable success in node classification tasks~\cite{34}. However, GNNs are easily poisoned by imperceptible adversarial perturbations (i.e., inserted/deleted edges) on graph structure during their training phase~\cite{li2020deeprobust,zheng2021graph,wang2019attacking,xi2021graph,meng2023devil,wu2022linkteller}.
Even slight but deliberate perturbations~\footnote{We prioritize defense against graph structure attacks due to their heightened destructiveness compared to feature attacks.} introduced to the training set can poison the target GNN, and then rewire the MP pattern driven by the poisoned GNN, to further misclassify nodes to the target categories~\cite{DaiLT18ADV_GRA,DanielAS2018ADV_GRA}. This may lead to critical issues in many application areas~\cite{72,73,74,75}, including those where perturbations undermine public trust~\cite{Kreps2020SinceAdv}, interfere with human decision making~\cite{Walt2019NMI}, and affect human health and livelihoods~\cite{Samuel2019Sci}. More seriously, it's proved that non-robust GNNs are inevitably poisoned once adversaries take them as the target~\cite{Liu_2022_TNNLS}.

A plethora of robustness models aim to enhance protected GNNs with add-on functions to defend against edge-perturbing attacks, which causes the unavoidable dismissal of the original inner function of the non-robustness GNNs. As representative examples:
(1) RGCN~\cite{Zhu2019RGCN} replaces the hidden representations of nodes in each graph convolutional network (GCN)~\cite{Kpif_2017_ICLR} layer to the Gaussian distributions, to further absorb the effects of adversarial changes.
(2) GCN-SVD~\cite{entezari2020SVD} combines a singular value decomposition (SVD) filter prior to GCN to eliminate adversarial edges in the training set.
(3) STABLE~\cite{li2022STABLE} reforms the forward propagation of GCN by adding functions that randomly recover the roughly removed edges.
(4) EGNN~\cite{liu2021EGNN} leverages graph smoothing techniques based on transductive model PPNP \& APPNP~\cite{gasteiger2018predict}, to confine the permutation setting space, effectively excluding the majority of non-smooth permutations.
(5) GRN~\cite{liu2024towards} incorporates local signals into the central node’s representation to improve resonance-based robustness.

Unfortunately, deploying these defense models requires (partially or completely) replacing the original GNN, inevitably leading to the loss of its custom functions and features.
In practical applications, diverse GNNs, such as GCN, graph attention network (GAT)~\cite{velivckovic_2017_GAT}, and GraphSAGE~\cite{hamilton_2017_SAGE}, are designed for various tasks according to differentiated requirements.
For instance, a node classification task in large graphs relies on inductive frameworks such as GraphSAGE, instead of transductive frameworks such as GCN. However, GCN-based defending models discards the generalizability of inductive frameworks, leading to excessive time spend. This approach sacrifices the efforts made by previous research to enhance GNNs' practical performance.

The examination of the issue necessitates the integration of a non-disruptive, plug-and-play defensive mechanism. The core challenge lies in identifying a describable anomaly pattern that is directly related to the data and independent of the model architecture.

For this challenge, we theoretically demonstrate that the manifestation of anomalies in the MP of a compromised GNN is discernible. This is substantiated by the observable differences in the \emph{feature trajectories} (FTs) of attacked versus non-attacked nodes throughout the training process. These trajectories provide a detectable interface for the implementation of external defensive strategies.

However, formulating anomalous trajectories of node features is possible, yet capturing these observable antibody behaviors remains challenging attributes to two primary issues: 1) Abnormal trajectories are aggregated by illegal messages that are hide inside GNNs and pass along the edges of the graph throughout the training epochs. Due to the non-Euclidean nature of graphs, monitoring fine-grained (including node- and edge-grained) MP and converting them into computable Euclidean tensors is almost intractable, much less capturing these illegal messages. This problem is even more pronounced in some transductive GNNs (such as GCN). 2) Sufficient samples of illegal messages can only be obtained after the GNN is already poisoned, and these samples are non-transferable due to the randomness of the adversaries, resulting in the inability of the classifier to predict the universal pattern of illegal messages.

In addressing this challenge, we have identified a biologically inspired method. Our research indicates that mimicking the mechanism by which the human immune system (HIS) detects viruses~\cite{kipnis2016multifaceted} serves as an efficacious strategy for the issues delineated previously. This efficacy is ascribed to the synergistic interaction between the HIS and the human nervous system (HNS). The HNS, serving as a biological prototype for neural networks~\cite{morton2020representations}, collaborates with the HIS to safeguard the human body against invasive viral antibodies~\cite{brodin2017human}, without inducing mutual harm~\cite{kenney2014autonomic}.

%The key advantages of HIS are:
%\checkmark It efficiently generates immune cells tailored to respond to intrusive viruses, requiring minimal exposure to benign cell morphologies;
%\checkmark These immune cells operate in parallel with other system functions, responding distributively to various fine-grained parts;
%\checkmark The immune cells are transferable across different systems. They can be extracted as vaccines and shared among individuals to preemptively prevent viral infections.

%Since the neural network is the simulation of the HNS, it is intuitive that protecting GNNs by imitating the HIS.
%While GNN learning a graph, all nodes transmit messages to each other distributively.
Therefore, it is intuitive that capturing the distributed illegal messages and rectify them in parallel by imitating the mechanism of HIS.
Here we propose \textsc{Grimm} which inherits the merits of the HIS:
\textit{Harmless}. \textsc{Grimm} does not actively intervene with the inner function of the protected GNN. It is a plug-and-play system in practical applications.
\textit{Parallel}. \textsc{Grimm} detects adversarial edges and rectifies the perturbed graph parallel with the protected GNN during its training phase.
\textit{Generalizable}. \textsc{Grimm} can cooperate with the mainstream GNNs such as GCN, GAT and GraphSAGE while fully maintaining their original inner functions.
\textit{Transferable}. \textsc{Grimm} can improve its defending ability by inter-system information from another system.

%Taking the aforementioned advantages of the HIS, \textsc{Grimm} is capable of:
%\begin{itemize}
%  \item [\checkmark] Presciently generating the detectors of adversarial edges;
%  \item [\checkmark] Monitoring the global MP mechanism and distributively detecting the illegal messages in parallel with the training phase of the protected GNN;
%  \item [\checkmark] Transferring detectors to another scenario to thus achieve one-step detection and rectification.
%\end{itemize}

In practical application, initially, \textsc{Grimm} is integrated into any layer of the protected GNNs before training begins. It computes the feature trajectories (FTs) of nodes and edges in real-time from this layer's outputs, transforming global message passing in non-Euclidean geometry into computable tensors within Euclidean space. This enables global monitoring of the target. \textsc{Grimm} then starts training simultaneously with the target GNN. Given the hyperparameters of the target GNN, \textsc{Grimm} systematically generates potential detectors from feasible FTs. It evolves these FTs into detectors by selecting a subset of reliable FTs and inversely detecting anomalies within the target. In the final training phase, these detectors identify adversarial edges and correct perturbations in the database in real-time. This process runs concurrently with GNN training, ensuring a clean dataset and a well-trained GNN at completion.

Our contributions are:
\begin{itemize}
  \item To the best of our knowledge, we propose the first plug-and-play defense model against poisoning attacks.
  \item We theoretically demonstrate that nodes under attack form discriminable FTs.
  \item We reconstruct the implementation of the HIS specifically for graph learning scenarios.
  \item We systematically evaluate our proposed defense model on real-world datasets.
\end{itemize}

% \textsc{Grimm} is designed as the approximate reproduction of human immune system, it can thus identify adversarial edges and repair them operated by imitating the mechanism of viruses recognition in human body.
%
%The key benefit of immunity is, 1) it can presciently produce the immune cells that can response to the intrusive viruses, which only require a slight harmless cell morphologies. 2) The produced immune cells are adaptability in inner-system, specifically, they detect viruses in parallel with other system, meanwhile distributively response to multiple parts. 3) Immune cells are transferable between inter-system, specifically, they can extracted as the vaccines between different individuals to help preventing viruses in advance.
%
%
%Take the advantages of the immunity, \textsc{Grimm} can presciently generate the detectors of adversarial edges

 \begin{figure*}[htb]
    \centering
    \includegraphics[width=0.98\textwidth]{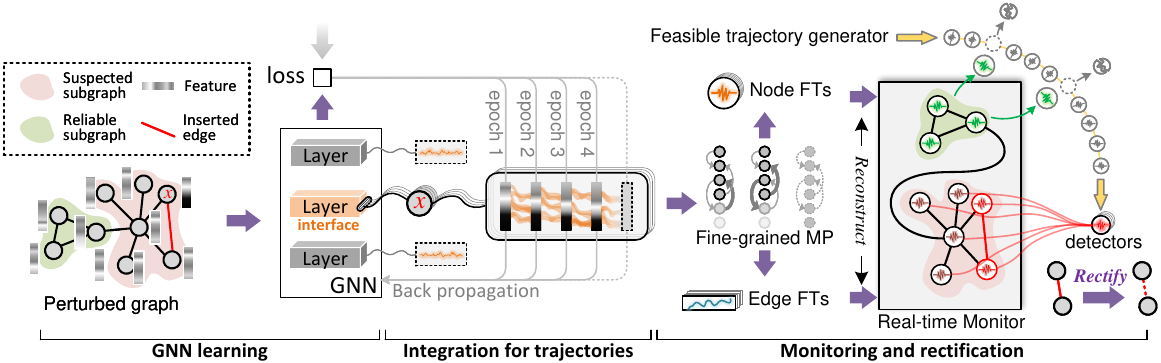}
    \caption{The general workflow of \textsc{Grimm}.}
    \label{fig_overallFrame}
\end{figure*}

\section{Preliminaries}

\subsubsection{Message Passing GNN}

We consider connected graphs $\mathcal{G}=(\mathcal{V},\mathcal{E})$ consisting $N=|\mathcal{V}|$ nodes, where $\mathcal{E}$ is the set of edges.
Let $\mathbf{A}\in \{0,1\}^{N \times N}$ be the adjacency matrix.
Let generic symbol $\mathbf{L}$ be the Laplacian in its broadest sense.
Denote the GNN's number of layers as $L$ and the feature dimension of its $\ell^{\text{th}}$ layer output as $d_\ell$.
The feature and one-hot label matrix are $\mathbf{Z} \in \mathbb{R}^{N\times d_0}$ and $\mathbf{Y} \in \mathbb{R}^{N\times d_L}$ respectively.
The edge connected nodes $v_i$ and $v_j$ is written as $(v_i, v_j)$ or $(v_j, v_i)$.
The neighborhood $\mathcal{N}_i$ of a node $v_i$ consists of all nodes $v_j$ for which $(v_i, v_j) \in \mathcal{E}$.
Let $\mathrm{deg}_i$ be the degree of node $v_i$.
The feature vector and one-hot label of node $v_i$ are $\mathbf{z}_i$ and $\mathbf{y}_i$.

\subsubsection{Feature Trajectories (FTs) Formed by MP}
GNNs are vulnerable to poisoning attacks that subtly introduce perturbations, manifesting progressively across training epochs. This gradual emergence suggests that oscillatory trends in node signals across GNN layers might reveal latent adversarial edges. Our analysis concentrates on the evolution of output features at layer $\ell$, represented by $\mathbf{z}_{i,\ell}$ for each node $i$ in $\mathcal{V}$. To capture this evolution, we incorporate a temporal dimension, denoting the feature of node $i$ at the $t$-th epoch at layer $\ell$ as $\mathbf{z}_{i,\ell}^{(t)}$ and the collective features at this layer as $\mathbf{Z}_{\ell}^{(t)}$. Initially, $\mathbf{z}_{\cdot,\ell}^{(0)}$ and $\mathbf{Z}_{\ell}^{(0)}$ capture the baseline state of features. The trajectory of $\mathbf{z}_{i,\ell}^{(t)}$ in $\mathbb{R}^{d_\ell}$ is described by:
\begin{equation}\label{eq_traj_form}
\mathcal{T}_{i,\ell} = [\mathbf{z}_{i,\ell}^{(0)}, \mathbf{z}_{i,\ell}^{(1)}, \ldots].
\end{equation}
Section~\ref{sec_THM1} discusses how deviations in $\mathcal{T}_{i,\ell}$ indicate attacks, aiding in corrective measures.

Feature aggregation through edges is pivotal in GNN learning. During the message passing from epoch $t$ to $t+1$, the feature quantity transmitted between nodes $i$ and $j$ affects $\mathbf{z}_{i,\ell}^{(t)}$ and $\mathbf{z}_{j,\ell}^{(t)}$, although each node's feature also reflects contributions from all adjacent edges. For edge $(i, j)$, the features transmitted at epoch $t$ at layer $\ell$ are denoted as $\mathbf{z}_{(i,j),\ell}^{(t)}$ or $\mathbf{z}_{(j,i),\ell}^{(t)}$, indicating bidirectional feature flow in undirected graphs. The trajectory of features on the edge $(i, j)$ at layer $\ell$ evolves as:
\begin{equation}\label{eq_trajE_form}
  \mathcal{T}_{(i,j),\ell} = [\mathbf{z}_{(i,j),\ell}^{(0)}, \mathbf{z}_{(i,j),\ell}^{(1)}, \ldots].
\end{equation}
%Visualizing FTs in transductive GNNs such as GCNs involves complexities due to global feature aggregation.
The specific method for obtaining FTs is detailed in the Appendix~\ref{appen_traj_1}.

\subsubsection{Mechanism of Identifying Viruses in the HIS}

The mechanism for virus identification in the HIS is efficiently replicated in the domain of artificial immune systems (AIS)~\cite{dasgupta2006advances} for viruses detection. The core processes of AIS and the corresponding elements in \textsc{Grimm} (indicated within parentheses) are:
1) Creation of \ul{Immune Cells} (\textbf{feasible FTs}): Immune cells are generated indiscriminately to counter all possible antigens.
2) Elimination of \ul{Redundant Cells} (\textbf{reliable FTs}): Immune cells responsive to benign antigens are eradicated.
3) Virus Detection: \ul{Active immune cells} (\textbf{detectors}) undertake the detection of viruses.
4) \ul{Vaccine} (\textbf{transferred detectors}) Production: Functional immune cells synthesize vaccines, which are then disseminated across various human immune systems for broader protection.

\emph{Note that AIS primarily as a conceptual tool to integrate HIS strategies rather than as a direct implementation method.}

\section{The Proposed Model}

%In this section, we first sketch the identifying mechanism of viruses in the human-immune system.
%In this section, we first introduce how \textsc{Grimm} imitates the mechanism of HIS to detect adversarial edges and repair the graph. Then, we elaborate on the formulation of the immunity concepts in GNNs, and describe how \textsc{Grimm} detects adversarial edges and repairs perturbed graph based on HIS.

%\subsection{Mechanism of Identifying Viruses in The Human Immune System}
%
%In the scenario of virus detection, human immune system can presciently produce the mature cells that can detect virus that is never appeared before. The main process can be encapsulated as:
%\begin{itemize}
%  \item [1)] Randomly create immune cells that response to all potential antigens in the immune space.
%  \item [2)] Destroy immune cells which response to the harmless antigen.
%  \item [3)] Detect virus by alive immune cells.
%  \item [4)] Produce vaccines by the alive immune cells and inject vaccines to other human-immune systems.
%\end{itemize}
%Following this, \textsc{Grimm} is designed inspire by the above process, and the outline of its workflow is described as the next section.

\subsection{Overview}\label{sec_overview}

The primary workflow of \textsc{Grimm} encompasses 3 stages:

$\blacktriangleright$ \textbf{Integrate FTs}: This involves interfacing with a certain layer of the protected GNN to monitor the feature trajectories of all nodes and edges at that layer. Specifically, for a given layer $\ell$ and nodes $i, j$, it aims to acquire real-time updated $\mathcal{T}_{i,\ell}$ and $\mathcal{T}_{(i,j),\ell}$.

$\blacktriangleright$ \textbf{Detect abnormal FTs}: This phase is dedicated to identifying abnormal edge FTs. The task is formalized as finding a classifier:
  \begin{equation}\label{eq_binary}
    \mathcal{I}_{edge}: \{\mathcal{T}_{(i,j),\ell} : \forall (i,j) \in \mathcal{E} \} \to \{\text{normal}, \text{abnormal}\}
  \end{equation}
Edges classified as ``abnormal'' are suspected of perturbations.

$\blacktriangleright$ \textbf{Rectify perturbations}: Post the detection of abnormalities, the identified perturbations are rectified upon, informing further rectifications.

%Given the stochastic and unpredictable nature of the GNN training process, a core challenge resides in the binary classification task during the second stage. In practice, only a limited set of reliable FTs are available as prior knowledge, while the characteristics of potential abnormal FTs remain largely unknown. Drawing inspiration from the principles of the AIS,
%\textsc{Grimm} adopts a strategy of enumerating all feasible trajectories, discarding those identified as reliable, and thereby isolating the abnormal FT.
%
%Theoretical validations have been meticulously performed to ensure the robustness of \textsc{Grimm}:
%$\blacktriangleright$ The discriminability between normal and abnormal trajectories, ensuring that the reliable FTs discarded do not inadvertently encompass any abnormal FT (See Theorem~\ref{thm_dis}).
%$\blacktriangleright$ The establishment of a lower bound for the inner product of FTs under given hyperparameters, affirming the efficacy of the exhaustive enumeration of FTs (See Proposition~\ref{pro_upperbound}).

Subsequent sections will elaborate on the specific implementations of the different components.

\begin{figure}[b]
\centering                                                                                                                                                                                                                                                                                                                                                                                                                             \includegraphics[width=0.46\textwidth]{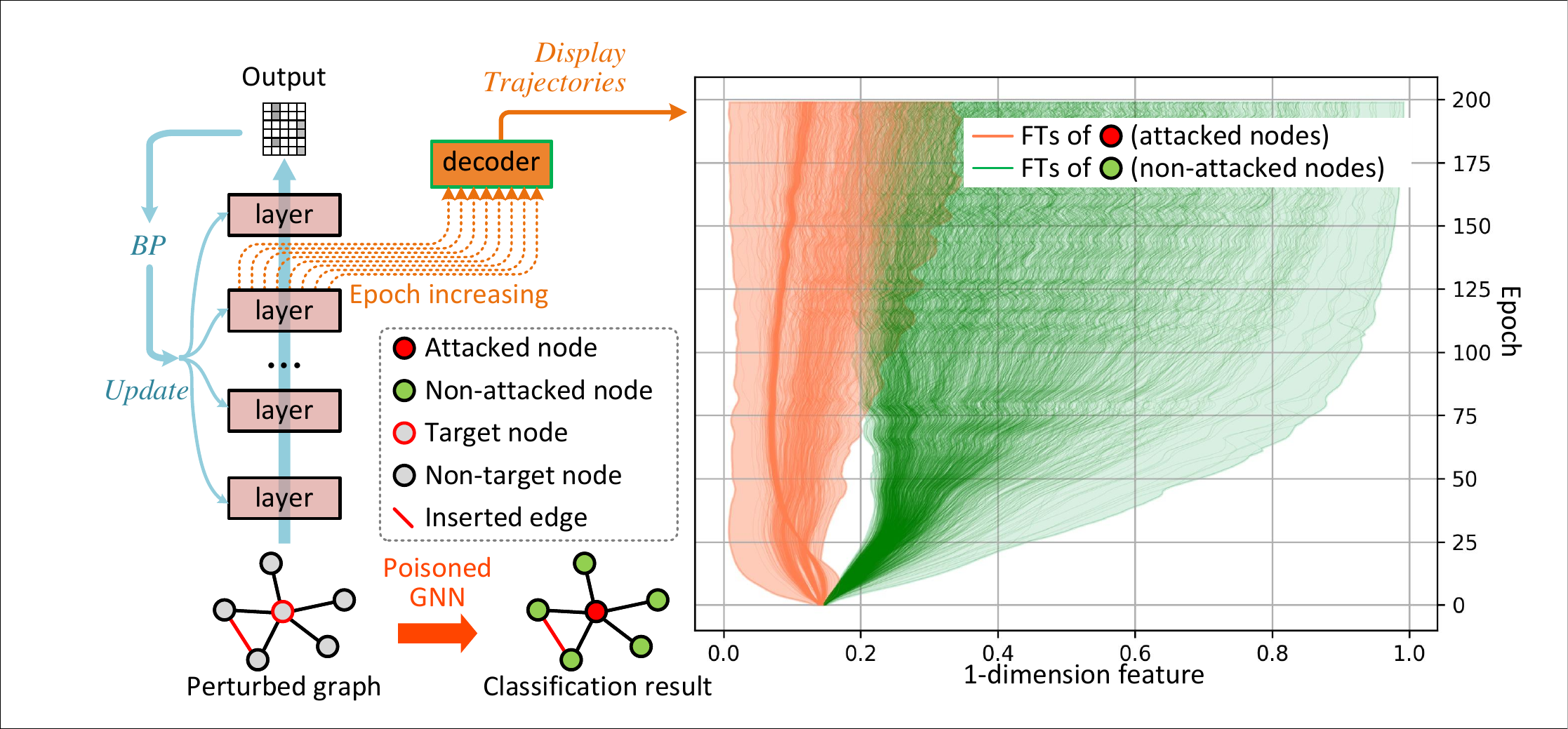}
\caption{FTs of attacked vs. non-attacked nodes.}
\label{fig_motive}
\end{figure}

\subsubsection{Motivation}\label{sec_motiv}

During the training phase of GNN, node representations dynamically evolve, forming trajectories in an equidimensional feature space. We hypothesize that these trajectories can reveal adversarial entities through the observed behaviors of antibodies. To investigate, we replicate a poisoning attack on a GNN, employing the Metattack methodology~\cite{Zuger2018METTACK} on a 4-layer GCN with the Cora dataset. We introduce a frozen decoder (mapping from $\mathbb{R}^{d_i}$ to $\mathbb{R}$) to the penultimate hidden layer of the GCN to capture one-dimensional features of this layer.

The decoder distinguishes the trajectories of nodes under attack (maliciously misclassified) from those not targeted (correctly classified) during the GCN's training. These trajectories, along with the methodology and results, are presented in Figure~\ref{fig_motive}. It is evident that adversarial edges induce distinctive trajectory patterns in their adjacent nodes, unlike non-attacked nodes. This observation underscores the potential of AIS in detecting and intercepting these illicit communications within the network.

\subsection{Theoretical Foundations}\label{sec_theofound}

\subsubsection{FTs' Discriminability} \label{sec_THM1}
We first fortify the observations noted in Section~\ref{sec_motiv} (Motivation) with a theoretical foundation through the following theorem:
\begin{theorem}\label{thm_dis}
Consider a GNN undergoing a poisoning attack. Let $\mathcal{V}_{adv}$ and $\mathcal{V}_{non}$ respectively denote the sets of nodes with categories that have been compromised and those that remain uncompromised, The following classification function exists for all layer $\ell$:
\begin{equation}\label{eq_thm_discri}\small
  \mathcal{I}_{node}:  \mathcal{T}_{i,\ell} \to \{\mathcal{T}_{j,\ell} , \mathcal{T}_{k,\ell} \},
  \text{ s.t.: } \forall i \in \mathcal{V}, \forall j \in \mathcal{V}_{adv},  \forall k \in \mathcal{V}_{non}.
\end{equation}
That is, FTs of attacked and non-attacked nodes are discriminable.
\end{theorem}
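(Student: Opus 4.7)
The plan is to establish Theorem~\ref{thm_dis} by showing that the MP dynamics impose structurally different trajectory patterns on $\mathcal{V}_{adv}$ versus $\mathcal{V}_{non}$, from which existence of a discriminating map $\mathcal{I}_{node}$ follows. First I would expand the recurrence for $\mathbf{z}_{i,\ell}^{(t+1)}$ in terms of its neighbors' layer-$(\ell-1)$ features at epoch $t$, decomposing $\mathcal{N}_i$ into the clean neighborhood $\mathcal{N}_i^{c}$ (edges from the true graph) and the perturbation set $\mathcal{N}_i^{a}$ (inserted or removed edges). For $k\in\mathcal{V}_{non}$, the adversary either leaves $k$'s aggregate intact or the perturbations are insufficient to flip $k$'s predicted label, so $\mathcal{T}_{k,\ell}$ stays within the basin of attraction of its true-class centroid across all epochs. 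For $j\in\mathcal{V}_{adv}$, by definition the final prediction is misassigned, so $\mathcal{T}_{j,\ell}$ must eventually escape its native basin and settle in another class region.

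Next I would introduce the counterfactual clean trajectory $\mathcal{T}_{i,\ell}^{\mathrm{clean}}$ produced by running the same GNN on the unperturbed graph, and define the deviation $\boldsymbol{\delta}_{i,\ell}^{(t)} = \mathbf{z}_{i,\ell}^{(t)} - \mathbf{z}_{i,\ell}^{(t),\mathrm{clean}}$. By construction $\|\boldsymbol{\delta}_{k,\ell}^{(t)}\|$ stays small, damped with graph distance to the attack sites by the Laplacian $\mathbf{L}$ propagating the perturbation, whereas $\|\boldsymbol{\delta}_{j,\ell}^{(t)}\|$ must accumulate enough mass to cross the decision boundary separating $j$'s true class from the target class; this quantitative gap is the engine of discriminability. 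Composing the accumulated deviation with the frozen decoder used in Section~\ref{sec_motiv} (or with any linear probe extracting a class score) produces a one-dimensional signal whose trajectory crosses a fixed threshold exactly on $\mathcal{V}_{adv}$, giving the desired separation at layer $\ell$.

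To close the argument I would define $\mathcal{I}_{node}$ as a thresholded functional of $\mathcal{T}_{i,\ell}$ measuring, for example, the number of sign changes of the class-score signal along the trajectory, or the cumulative arc length traversed relative to the local class cluster; any such functional separates the two trajectory families and instantiates the required classifier. The main obstacle is that the counterfactual $\mathcal{T}_{i,\ell}^{\mathrm{clean}}$ is not observable at inference time, so the classifier must operate on $\mathcal{T}_{i,\ell}$ alone; this forces the argument to rely on intrinsic geometric features (boundary crossings, curvature anomalies, oscillation) rather than on a direct deviation measurement. A secondary subtlety is that the theorem asserts discriminability at \emph{every} layer $\ell$, not just the output layer; propagating the boundary-crossing argument back through the network will likely require an injectivity or information-preservation assumption on the trained MP operators mapping layer $\ell$ to the final layer $L$, so that a class-level separation downstream pulls back to a detectable trajectory-level separation upstream.
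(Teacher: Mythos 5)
Your proposal correctly identifies the core phenomenon that the paper also exploits: an attacked node's trajectory must undergo what the paper calls ``secondary convergence'' (its Lemma~4), i.e., initial drift toward the true-class region followed by a forced escape into the adversarial target's region, while a non-attacked node's trajectory stays put. You also correctly flag that extending the claim to \emph{every} layer $\ell$ needs an extra ingredient; the paper supplies this via a continuity-of-realization argument (its Lemma~1: continuous output regions pull back to continuous input regions) together with stability of each layer's implicit convergence target (its Lemma~2), which is roughly the ``information preservation'' condition you gesture at.

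The genuine gap is in how you close the argument. Your deviation functional $\boldsymbol{\delta}_{i,\ell}^{(t)} = \mathbf{z}_{i,\ell}^{(t)} - \mathbf{z}_{i,\ell}^{(t),\mathrm{clean}}$ requires the counterfactual clean trajectory, which you acknowledge is unobservable, and the replacement you offer (sign changes of a class score, arc length relative to the local cluster, curvature) is left as a list of candidates rather than a demonstrated separator. You never show that any of these intrinsic functionals actually distinguishes $\mathcal{V}_{adv}$ from $\mathcal{V}_{non}$ without access to $\mathcal{T}_{i,\ell}^{\mathrm{clean}}$; the step from ``large accumulated $\|\boldsymbol{\delta}\|$'' to ``threshold on an observable statistic'' is asserted, not derived. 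The paper sidesteps this entirely by never comparing against a counterfactual: it compares \emph{pairs of observable trajectories} to one another. Its Lemma~5 shows (via a CLT argument on step-wise squared distances, with the bound coming from the inner-product constraint of Proposition~\ref{pro_upperbound}) that $\mathrm{MSE}(\mathcal{T}_i,\mathcal{T}_j)$ for $i,j\in\mathcal{V}_{non}$ is approximately Gaussian with known moments, and its Lemma~6 shows that $\mathrm{MSE}(\mathcal{T}_{adv},\mathcal{T}_{non})$ follows a shifted Gaussian with a KL divergence from the first that is bounded below (and blows up as the learning-capability constant $C\to 1^-$, Lemma~7). Discriminability then follows from distributional separation, instantiated concretely as the MSE thresholding rule of Eq.~\eqref{eq_detect_FT}. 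To repair your argument you would need to replace the counterfactual deviation with an analogous observable pairwise statistic and prove a quantitative separation for it; as written, the proposal identifies the right qualitative picture but does not supply the mechanism that makes $\mathcal{I}_{node}$ computable from $\mathcal{T}_{i,\ell}$ alone.
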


Proof in Appendix~\ref{appen_1}. This conclusion unveils an inevitable consequence of the attack: once an attack occurs, the FTs inevitably reveal the malicious activity. This finding lays the theoretical groundwork for detecting abnormal FTs.

\subsubsection{Lower bound of FTs' inner products}

Given the operation of the MP process under $\mathcal{M}(\cdot)$ with predefined hyperparameters such as the learning rate $\eta$, the fluctuation in node signal magnitudes is inherently constrained. This constraint ensures predictability in the oscillation range of node signals across $\mathcal{E}$, regardless of edge distribution.

Drawing on insights from Theorem~\ref{thm_dis}, we observe that signal trajectories tend to stabilize during training. This stability is demonstrated by the limited spatial angle between direction vectors across successive epochs, implying that the inner product of these vectors meets a minimum threshold. Specifically, for node $i$, consider:
\begin{align}\label{eq_gamma}
 \gamma^{\text{node}}_{i,\ell} & = (\mathbf{z}_{i,\ell}^{(t+1)} - \mathbf{z}_{i,\ell}^{(t)}) \cdot (\mathbf{z}_{i,\ell}^{(t+2)} - \mathbf{z}_{i,\ell}^{(t-1)}), \notag \\
 \gamma^{\text{edge}}_{(i,\cdot),\ell} & = (\mathbf{z}_{(i,\cdot),\ell}^{(t+1)} - \mathbf{z}_{(i,\cdot),\ell}^{(t)}) \cdot (\mathbf{z}_{(i,\cdot),\ell}^{(t+2)} - \mathbf{z}_{(i,\cdot),\ell}^{(t-1)}),
\end{align}
we have the following proposition.

\begin{proposition}\label{pro_upperbound}
Let {\small $\mathcal{P}(\mathbf{Z}_{\ell-1};\mathcal{E})=\mathbf{L} (\mathbf{L} {\mathbf{Z}_{\ell-1}} \mathbf{Z}_{\ell-1}^{\top})^{\top})^{\top}$ } be a  modular MP model, $\mathbf{J}$ be ones matrix, and $(\cdot)^{\circ -1} $ be Hadamard inverse.
At the end of the $t^{\text{th}}$ epoch, for any node $i$ and layer $\ell$, we have
{\small
\begin{multline}
\gamma^{\text{node}}_{i,\ell} \geq \lambda \quad \text{and} \quad \gamma^{\text{edge}}_{(i,\cdot),\ell} \geq \lambda, \text{s.t.,} \\
 \lambda = \eta^2 \max_k \| ((\mathcal{P}(\mathbf{Z}_{\ell-1};\mathcal{E})) ( (\mathbf{J} - \mathbf{L} \mathbf{Z}_{\ell-1} \mathbf{W}_\ell^{(t)})^{\circ -1} -\mathbf{Y}))_{k,\cdot} \|^2_2,
\end{multline}
}
\end{proposition}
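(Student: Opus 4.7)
The plan is to translate the definition of $\gamma^{\text{node}}_{i,\ell}$ into gradient-descent increments and then bound the resulting inner product from below using the structure of the modular MP operator $\mathcal{P}$. First I would expand each direction vector as a telescoping sum of per-epoch feature updates: $\mathbf{z}_{i,\ell}^{(t+1)}-\mathbf{z}_{i,\ell}^{(t)}$ is the change induced by one weight update, while $\mathbf{z}_{i,\ell}^{(t+2)}-\mathbf{z}_{i,\ell}^{(t-1)}$ is the sum of three consecutive such updates. Substituting the gradient-descent rule $\mathbf{W}_\ell^{(t+1)}=\mathbf{W}_\ell^{(t)}-\eta\nabla_{\mathbf{W}_\ell}\mathcal{L}$ and using that $\mathcal{P}$ is affine in $\mathbf{W}_\ell$ through the factor $\mathbf{L}\mathbf{Z}_{\ell-1}$ isolates an $\eta^2$ prefactor, which already matches the $\eta^2$ appearing in $\lambda$.

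Next I would identify the residual term. Backpropagating through the activation used in the paper, the derivative of the loss with respect to $\mathbf{W}_\ell^{(t)}$ produces an error signal that can be written as the difference between the reciprocal-form prediction $(\mathbf{J}-\mathbf{L}\mathbf{Z}_{\ell-1}\mathbf{W}_\ell^{(t)})^{\circ -1}$ and the label matrix $\mathbf{Y}$. Composing this error signal with the forward action of $\mathcal{P}(\mathbf{Z}_{\ell-1};\mathcal{E})$ yields exactly the inner factor $\mathcal{P}(\mathbf{Z}_{\ell-1};\mathcal{E})\bigl((\mathbf{J}-\mathbf{L}\mathbf{Z}_{\ell-1}\mathbf{W}_\ell^{(t)})^{\circ -1}-\mathbf{Y}\bigr)$ that sits inside $\lambda$. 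Taking the row slice for node $i$ and applying the Cauchy--Schwarz bound in the reverse (alignment) form, then replacing the row norm by the worst-case $\max_k \|\cdot\|_2^2$ to obtain a uniform constant, would give the claimed $\gamma^{\text{node}}_{i,\ell}\geq\lambda$. The edge version $\gamma^{\text{edge}}_{(i,\cdot),\ell}$ follows the same template once the per-edge feature $\mathbf{z}_{(i,j),\ell}^{(t)}$ is read off from the neighborhood slice of $\mathcal{P}$; because both quantities ultimately inherit the same error-signal factor, they are bounded by the same $\lambda$.

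The main obstacle will be the alignment step. Writing the inner product of the two direction vectors as a sum of products of per-epoch increments, the cross terms need not be nonnegative a priori, so a naive Cauchy--Schwarz only gives a two-sided bound. To secure a clean lower bound I would invoke the quasi-stability suggested by Theorem~\ref{thm_dis}: the gradient field $\nabla_{\mathbf{W}_\ell}\mathcal{L}$ varies by $O(\eta)$ across four consecutive epochs whenever the loss is smooth and the iterates lie in a bounded region, so the four gradient vectors share a dominant direction up to $O(\eta)$ perturbations. This makes all three per-epoch increments in the expansion of $\mathbf{z}_{i,\ell}^{(t+2)}-\mathbf{z}_{i,\ell}^{(t-1)}$ approximately parallel to $\mathbf{z}_{i,\ell}^{(t+1)}-\mathbf{z}_{i,\ell}^{(t)}$, pushing the cross terms to contribute the same sign. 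Once alignment is quantified, bounding the surviving aligned component by $\eta^2$ times the row-maximal squared norm of the back-propagated residual finishes the proof.

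A secondary subtlety is verifying that the Hadamard-inverse expression $(\mathbf{J}-\mathbf{L}\mathbf{Z}_{\ell-1}\mathbf{W}_\ell^{(t)})^{\circ -1}$ genuinely arises from the backward pass through the specific nonlinearity used by the protected GNN; this must be checked coordinatewise so that the per-entry derivative matches the reciprocal form and so that the subtraction with $\mathbf{Y}$ really represents the training residual. Any mismatch here would shift the proof toward a slightly different constant, but would not alter the overall $\eta^2\cdot\max_k\|\cdot\|_2^2$ scaling that is the content of the proposition.
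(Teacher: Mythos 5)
Your overall route matches the paper's: one weight update per epoch introduces the factor $\eta$; backpropagation through the output nonlinearity produces a residual of the form (prediction $-$ $\mathbf{Y}$); and composing that residual with the forward factor $\mathbf{L}\mathbf{Z}_{\ell-1}$ gives the bounding quantity. Where your sketch is thinnest is exactly the step the paper does explicitly: after deriving $\mathbf{W}^{(-)}=\eta(\mathbf{L}\mathbf{Z}_{\ell-1})^{\top}(\mathbf{O}-\mathbf{Y})$ by differentiating the softmax cross-entropy coordinatewise, the paper rewrites
$\mathbf{L}\mathbf{Z}_{\ell-1}\mathbf{W}^{(-)}=\eta\,\mathbf{L}\mathbf{Z}_{\ell-1}(\mathbf{L}\mathbf{Z}_{\ell-1})^{\top}(\mathbf{O}-\mathbf{Y})=\eta\,(\mathbf{L}(\mathbf{L}\mathbf{Z}_{\ell-1}\mathbf{Z}_{\ell-1}^{\top})^{\top})^{\top}(\mathbf{O}-\mathbf{Y})$
using symmetry of $\mathbf{L}$ and of $\mathbf{Z}_{\ell-1}\mathbf{Z}_{\ell-1}^{\top}$; this double-transpose identity is precisely what manufactures the modular operator $\mathcal{P}(\mathbf{Z}_{\ell-1};\mathcal{E})$ appearing in $\lambda$. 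Your proposal asserts that the error signal is ``composed with the forward action of $\mathcal{P}$'' without explaining why $\mathcal{P}$ rather than $\mathbf{L}\mathbf{Z}_{\ell-1}$ is the correct left factor; without the symmetrization step you would not land on the stated constant.

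Your concern about the alignment step is well placed, and in fact you are being more careful than the source. The paper's opening inequality chain (its Equation~\eqref{eq_DefOfDomain}) moves from the single-node inner product $\gamma^{\text{node}}_{i,\ell}$ to a max over node pairs of inner products of weight-increment rows, and then to a max of squared row norms, without any sign or alignment argument; a lower bound of that form does not follow from Cauchy--Schwarz alone, as you note. Your proposed repair via quasi-stability of the gradient field (consecutive increments being approximately parallel up to $O(\eta)$) is a plausible way to justify it, but be aware that this is additional structure you would have to prove; the paper does not supply it. Finally, you are right to flag the Hadamard-inverse expression as a subtlety: the paper's backward pass actually yields $\mathbf{O}-\mathbf{Y}$ with $\mathbf{O}=\mathbb{S}(\mathbf{L}\mathbf{Z}_{\ell-1}\mathbf{W}_\ell^{(t)})$, and the substitution of $(\mathbf{J}-\mathbf{L}\mathbf{Z}_{\ell-1}\mathbf{W}_\ell^{(t)})^{\circ-1}$ for $\mathbf{O}$ is presented without a coordinatewise derivation; the equality is not obvious for a standard sigmoid or softmax, so you would indeed need to verify it before claiming the stated constant.
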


Proof in Appendix~\ref{appen_2}. Note that we only gives the mathematical bound for GCN since its a non-intuitive transductive message passing model. Bound of GAT and GraphSAGE are assigned as a empirical constant.

\subsection{Detecting Abnormal FTs and Rectifying Perturbations}

The essence of detecting abnormal FTs lies in obtaining detectors for these abnormal FTs based solely on reliable FTs. This process unfolds in two steps:

1) Under the constraints set forth in Proposition~\ref{pro_upperbound}, exhaustively enumerate all feasible FTs. 2) Eliminate those FTs that exhibit a relatively small mean squared error (MSE) in comparison to the reliable FTs. The remaining entities constitute effective detectors.

\subsubsection{A Cyclic Self-Supervised Generator of FTs}\label{sec_generate}

\begin{figure}[h]
\centering
\includegraphics[width=0.45\textwidth]{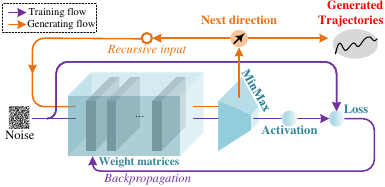}
\caption{The general workflow of the trajectory generator.}
\label{fig_TGen}
\end{figure}

Feasible trajectories are constrained by Proposition~\ref{pro_upperbound}. The FT generator creates valid trajectories. However, pre-training trajectory generation is impractical due to dynamic adversarial intervention in MP of GNNs. Thus, synchronous real-time generation is essential as training epochs progress, as illustrated in Figure~\ref{fig_TGen}.

Figure\ref{fig_TGen} outlines two processes: training and generation, indicating the generator's self-training and cyclic real-time direction vector output based on the previous epoch's vector. Specifically, for a valid vector $v$, at layer $p$ and epoch $e$, the optimal generator $G(\cdot;\theta)$ with trainable weights $\theta$ executes
$G(v;\theta) \cdot v \leq \gamma^{\text{node}}_i$.
This requires first training $G(\cdot)$ by addressing the self-supervised loss derived from Proposition\ref{pro_upperbound}, i.e.,
{\small$ \min_{\theta} \left(\mathrm{Sigmoid}(G(noise;\theta) \cdot noise) - \gamma^{\text{node}}_i\right)$},
where $noise$ is randomly sampled noise. Then, for a target trajectory length $\varrho$, the trained generator yields a vector set:
\begin{equation}\label{eq_gen_vector}\small
[\mathcal{T}^{\text{init}}, G(\mathcal{T}^{\text{init}};\theta), G(G(\mathcal{T}^{\text{init}};\theta);\theta), \ldots],
\end{equation}
where $\mathcal{T}^{\text{init}}$ is the initial direction vector. Finally, detectors are reconstituted by generated vector sets. The generator continues to produce feasible trajectories until sufficient, i.e., they almost span the entire feasible domain of FTs.

\subsubsection{Producing detectors Using Negative Selection Algorithm (NSA)}\label{sec_producemature}

The NSA posits that the elimination of normal samples from all feasible samples results in a detector $\mathcal{T}_{det}$ for abnormal samples. We have demonstrated the feasibility of this approach for FT samples in Section~\ref{sec_theofound} (Theoretical Foundations). In other words, by simply discarding those feasible FTs generated by the generator, which have a MSE with reliable FTs less than a given threshold $\rho$ what remains are effective detectors for abnormal FTs. This method is straightforward yet intuitive.
In the training phase of GNNs, the dimensionality of FTs increases as the number of training epochs $t$ increases. Hence, we define checkpoints; detectors are generated when the epoch reaches this point. Between two checkpoints, we only monitor the GNN without taking specific actions, until the epoch reaches the next checkpoint. The set of detector is denoted as $\mathbb{T}$.

\subsubsection{Detecting}\label{sec_nonselfidentify}

\textbf{Detecting abnormal FTs}.
The determination of whether a FT is abnormal is contingent upon its average MSE in relation to all detectors. If this average MSE falls below the threshold $\rho$, then the FT in question is classified as an abnormal FT. In other words, given the layer $\ell$, the function $\mathcal{I}_{node}$, as delineated in Theorem~\ref{thm_dis}, can be instantiated as
{\small
\begin{align}\label{eq_detect_FT}
\forall i\in \mathcal{V}, \mathcal{T}_{det} \in \mathbb{T},
\mathcal{I}_{node}(\mathcal{T}_{i,\ell}) =
\begin{cases}
  \mathcal{T}_{adv}, & \mbox{} MSE(\mathcal{T}_{i,\ell}, \mathcal{T}_{det}) \leq \rho \\
  \mathcal{T}_{non}, & \mbox{otherwise},
\end{cases}
\end{align}
where $\mathcal{T}_{adv}$ and $\mathcal{T}_{non}$ are FTs formed on the attacked and non-attacked nodes.
}
\textbf{Detecting inserted edges}. As adversarial edges pass illegal messages which may mislead the classification of the target nodes, the node FTs on their directly connected node will act abnormally. Therefore, we can identify inserted edges according to the abnormal node FTs. As shown in Figure~\ref{fig_DetectionTotal}.(a), for each $\mathcal{T}_{i}$, we query the trajectories $\mathcal{T}_{i,(i,j)}$ of edges which directly connect to node $i$ where $(i,j) \in \mathcal{E}$. If any abnormal trajectories $\mathcal{T}_{i,(i,k)}$ exist, the edge $(i,k)$ is detected as the inserted edge.
\textbf{Detecting deleted edges}. Similar to the aforementioned analysis, if all trajectories of edges connect to node $i$ are normal, while the FT of node $i$ is abnormal, we can confirm that one of the edges should be connected to node $i$ is deleted. Then, as shown in Figure~\ref{fig_DetectionTotal}.(b), aiming at locating the deleted edge $(i,o)$, $\mathcal{G}'$ can be rectified circularly until the trajectory on $(i,o)$ is identified as normal FT, i.e., if $\mathcal{T}_{i,(i,o)}$ is abnormal, edge $(i,o)$ is detected as the deleted edge.
\begin{figure}[htb]
\centering
\includegraphics[width=0.47\textwidth]{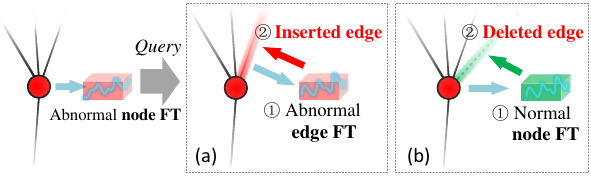}
\caption{The detection method for adversarial edges.}
\label{fig_DetectionTotal}
\end{figure}

\subsubsection{Rectifying perturbations}\label{sec_repair}
Once the adversarial edge is detected, \textsc{Grimm} rectifies the perturbed graph and meanwhile does not terminate the training process. However, the model is still poisoned to a certain extent despite some adversarial edges on the graph are rectified immediately. Therefore, after rectifying, we rollback the version of the trainable matrix $W$ to the previous $\delta$ epochs, to thus reduce the harm caused by the rectified adversarial edges. For instance, if an edge is detected as the inserted edge in epoch $t$, the perturbed graph $\mathcal{G}'$ is rectified as $\mathcal{G}'_{r,t}=\{\mathbf{Z},\mathcal{E}'_{r,t}\}$ by deleting the corresponding edge. In the experiments, we find that rolling back of $\mathbf{W}$ will cause drop in accuracy during the training phase. Then, the limited lower accuracy caused by the adversarial edges will be broken through after several epochs. Pseudo-code of \textsc{Grimm} is provided in Appendix~\ref{sec_pseudo}.

\section{Experiments}

The effectiveness of \textsc{Grimm} is evaluated under multiple aspects including: 1) global accuracy, 2) harmlessness, 3) transferability, 4) interface position, 5) sensitivity of MSE's threshold, and 6) runtime comparison of robust functions, and 7) internal rectification details (\emph{c.f.} Appendix~\ref{sec_app_exp}).

\textbf{Datasets.} Our approaches are evaluated on six real-world datasets widely used for studying graph adversarial attacks~\cite{Liu_2022_TNNLS,liu2024towards}. These datasets are two types:
\textit{Small graphs}, which include the citation datasets Cora and Citeseer, as well as Polblogs, representing social networking data.
\textit{Large graphs}, which include \emph{Brain}\cite{wang2017Braindataset}, \emph{Pubmed}, and \emph{Reddit}\cite{hamilton_2017_SAGE}.
\textit{For each dataset, we randomly partitioned 1/10th of the region to serve as a reliable subgraph}, ensuring the absence of perturbations within this designated area.
%The dataset statistics are summarized in Table~\ref{tab_dataset}.
%
%\begin{table}[htbp] \small
%  \centering
%        \caption{Dataset statistics.}
%%  \setlength{\tabcolsep}{0.8mm}
%    \begin{tabular}{c|cccc}
%    \toprule
%    \makebox[1mm][c]{\textbf{Dataset}} & \makebox[0.8mm][c]{\textbf{Nodes}} & \makebox[0.8mm][c]{\textbf{Edges}} & \makebox[0.8mm][c]{\textbf{Features}} & \makebox[10mm][c]{\textbf{Classes}} \\
%    \midrule
%    Cora & 2,708 & 5,429 & 1,433 & 7 \\
%    Citeseer & 3,327 & 4,732 & 3,703 & 6 \\
%    Polblogs & 1,222 & 16,714 & / & 2 \\
%    Brain & 5,000 & 1,955,488 & 20 & 10 \\
%    Pubmed & 19,717 & 44,338 & 500 & 3 \\
%    Reddit & 232,965 & 11,606,919 & 602 & 41 \\
%    \bottomrule
%    \end{tabular}
%
%  \label{tab_dataset}%
%\end{table}%

% and Reddit.
%The first two of these are citation networks in which nodes are documents and edges are the citation links between two documents. In Polblogs, nodes are political blogs in 2004 US president election and edges represent citations between blog.
% Dataset statistics are summarized in Tab.~\ref{tab_dataset}.

\begin{table*}[htbp]
  \centering \small
\setlength{\tabcolsep}{1.25mm}{
    \begin{tabular}{c | c | ccc | ccccc ccc ccc}
    \toprule
        \multirow{2}[0]{*}{{Dataset}} & \multirow{2}[0]{*}{{$p_r$}} & \multicolumn{3}{c |}{Unprotected models} & \multicolumn{5}{c }{Non-symbiotic defending models} & \multicolumn{3}{c }{Protected by \textsc{Grimm}} & \multicolumn{3}{c}{Protected by \textsc{Grimm} (E)} \\
    \cmidrule(lr){3-5} \cmidrule(lr){6-10} \cmidrule(lr){11-13} \cmidrule(lr){14-16}
        &     & GCN & GAT & SAGE & RGCN & SVD & Pro & Jaccard & EGNN & GCN & GAT & SAGE & GCN & GAT & SAGE  \\
    \cmidrule(lr){1-2}\cmidrule(lr){3-5} \cmidrule(lr){6-10} \cmidrule(lr){11-13}  \cmidrule(lr){14-16}
        \multirow{1}[0]{*}{{Cora}} & 20  & 59.02 & 59.13 & 63.91 & 59.01 & 56.37 & 64.38 & 73.24 & 69.52 & 73.49 & 73.84 & \textbf{79.51} & 71.02 & 65.70 & 75.40 \\
        \multirow{1}[0]{*}{{Citeseer}} & 20  & 62.73 & 60.14 & 68.81 & 62.90 & 57.65 & 55.54 & 66.92 & 65.61 & 70.76 & 72.42 & \textbf{76.91} & 68.42 & 66.01 & 75.48 \\
        \multirow{1}[0]{*}{{Pubmed}} & 20  & 70.02 & 69.89 & 72.09 & 70.88 & 81.54 & 82.57 & 76.61 & 78.91 & 75.10 & \textbf{79.30} & 74.05 & 73.39 & 78.04 & 76.89 \\
        \multirow{1}[0]{*}{{Polblogs}} & 20  & 52.33 & 50.28 & 54.12 & 58.04 & 55.41 & 73.60 & 70.55 & 75.95 & \textbf{82.11} & 81.02 & 78.62 & 76.47 & 76.44 & 75.40 \\
    \bottomrule
    \end{tabular}
    }
   \caption{Classification accuracy (\%) on the attacked graph after rectifying. \textsc{Grimm} (E) means producing detectors based on the exogenous reliable FTs. SVD, Pro, and SAGE is the abbreviation of GNN-SVD, Pro- GNN and GraphSAGE.}
  \label{tab:GlobalAcc}%
\end{table*}%

\textbf{Baselines.} The proposed \textsc{Grimm} model not only protects non-defense GNNs against poisoning attacks but also surpasses other defense models. Baseline models are categorized into three groups:

\underline{\textit{Non-defense GNNs.}} To illustrate \textsc{Grimm}'s efficacy against edge-perturbing attacks, we selected representative GNNs:
   1) \emph{GCN}, a prevalent architecture,
   2) \emph{GAT}, a typical GNN variant,
   3) \emph{GraphSAGE}, effective in aggregating spatial features.

\underline{\textit{Comparison defending models.}} We benchmark \textsc{Grimm} against defense models:
   1) \emph{RGCN}, using Gaussian distributions to mitigate adversarial impacts,
   2) \emph{GNN-SVD}, employing a truncated SVD for adjacency matrix approximation,
   3) \emph{Pro-GNN}, focusing on intrinsic node properties for robustness,
   4) \emph{Jaccard}~\cite{wu2019jaccard}, based on Jaccard similarity for defense,
   5) \emph{EGNN}, filtering perturbations via $\ell_1$- and $\ell_2$-based graph smoothing.

\underline{\textit{Attack methods.}} Experiments consider attack strategies:
   1) \emph{Metattack}, a meta-learning based approach,
   2) \emph{CLGA}~\cite{zhang2019CLGA}, an unsupervised tactic,
   3) \emph{RL-S2V}~\cite{DaiLT18ADV_GRA}, leveraging reinforcement learning.

All experiments were conducted using an NVIDIA RTX 4080s GPU, an Intel i7-14700-KF CPU, and 64GB of RAM.

\subsubsection{Accuracy After Rectification}\label{sec_Acc_aft_rec}

\begin{figure*}[htb]
\centering
\includegraphics[width=0.95\textwidth]{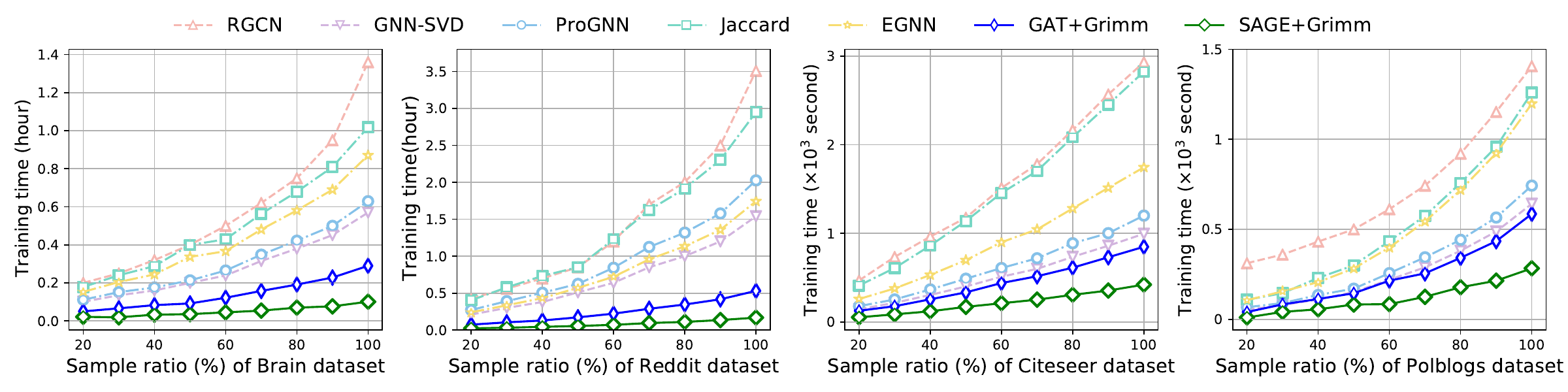}
\caption{Training time under increased sample ratio.}
\label{fig_increasing}
\end{figure*}

Here we evaluate the global classification efficacy of \textsc{Grimm} by applying Metattack to disrupt target models and measuring the post-training classification accuracy, with results documented in Table~\ref{tab:GlobalAcc}. The evaluation captures outcomes for models shielded by \textsc{Grimm} after global rectification and once model loss stabilizes. The mean accuracy and its deviation are presented. Observations indicate that \textsc{Grimm} effectively protects non-defense GNNs under attacks and outperforms leading robust GNNs, with few exceptions:
1) In the Pubmed dataset at $p_r = 10\%$, EGNN, using graph smoothing for enhanced adversarial robustness, handles localized perturbations effectively, but such scenarios are infrequent and \textsc{Grimm}'s accuracy improves as $p_r$ increases.
2) For the Polblogs dataset at $p_r = 0\%$, \textsc{Grimm} slightly lags behind Pro-GNN by 0.32\%. However, as $p_r$ increases, \textsc{Grimm}-protected models show the smallest decline in accuracy among baselines, maintaining a leading position.

In non-adversarial settings ($p_r = 0\%$), rectification aims to boost accuracy by optionally modifying edges, as no changes are needed for a clean graph. When reliable FTs are unattainable due to the absence of a dependable subgraph on $\mathcal{G}'$, we base our experiments on reliable FTs recommended from an exogenously reliable subgraph perturbed at a 20\% rate, achieving high accuracy.

\begin{table*}[h]
  \centering
 \resizebox{\linewidth}{!}{
    \begin{tabular}{c c ccc | cccccc ccc}
    \toprule
    \multirow{2}[0]{*}{Dataset} & \multirow{2}[0]{*}{Attack} & \multicolumn{3}{c |}{Unprotected models} & \multicolumn{6}{c}{Defending models} & \multicolumn{3}{c}{Models protected by \textsc{Grimm}} \\
    \cmidrule(lr){3-5} \cmidrule(lr){6-11} \cmidrule(lr){12-14}
        &     & GCN & GAT & SAGE & RGCN & SVD & Pro & Jaccard & EGNN & \emph{Avg. e. t.} & GCN & GAT & SAGE \\
    \cmidrule(lr){1-2}\cmidrule(lr){3-5} \cmidrule(lr){6-11} \cmidrule(lr){12-14}

    \multirow{3}[0]{*}{Brain} & Metattack & 0.34  & 0.26  & 0.08  & 1.36  & 0.57  & 0.63  & 1.02  & 0.87 & +982.7\%  & 0.38 [+11.3\%]  & 0.29 [+8.4\%]  & \textbf{0.10} [+25.4\%]  \\
        & CLGA & 0.35  & 0.26  & 0.08  & 1.39  & 0.59  & 0.62  & 1.06  & 0.86 & +1009.8\% & 0.39 [+10.8\%]  & 0.29 [+10.4\%]  & \textbf{0.11} [+36.9\%] \\
        & RL-S2V & 0.35  & 0.27  & 0.08  & 1.38  & 0.58  & 0.64  & 1.06  & 0.87 & +1017.2\%  & 0.38 [+8.1\%]  & 0.30 [+11.8\%]  & \textbf{0.10} [+28.1\%]  \\

    \cmidrule(lr){1-2}\cmidrule(lr){3-5} \cmidrule(lr){6-11} \cmidrule(lr){12-14}

    \multirow{3}[0]{*}{Pubmed} & Metattack & 0.53  & 0.31  & 0.10  & 1.73  & 0.59  & 0.66  & 1.26  & 1.17  & +1011.3\% & 0.57 [+7.7\%]  & 0.34 [+10.4\%]  & \textbf{0.12} [+27.7\%]  \\
        & CLGA & 0.54  & 0.30  & 0.09  & 1.72  & 0.61  & 0.67  & 1.28  & 1.22  & +1063.7\%  & 0.59 [+8.5\%]  & 0.35 [+15.0\%]  & \textbf{0.14} [+47.3\%] \\
        & RL-S2V & 0.53  & 0.32  & 0.09  & 1.72  & 0.61  & 0.68  & 1.25  & 1.18  & +1071.3\%  & 0.58 [+8.9\%]  & 0.36 [+14.4\%]  & \textbf{0.13} [+38.5\%]  \\

    \cmidrule(lr){1-2}\cmidrule(lr){3-5} \cmidrule(lr){6-11} \cmidrule(lr){12-14}

    \multirow{3}[0]{*}{Reddit} & Metattack & 1.28  & 0.50  & 0.13  & 3.51  & 1.54  & 2.03  & 2.95  & 1.74  & +1732.6\%  & 1.34 [+4.6\%]   & 0.53 [+6.7\%]  & \textbf{0.17} [+30.3\%] \\
        & CLGA & 1.27  & 0.50  & 0.12  & \textbackslash{} & 1.52  & $\backslash$ & \textbackslash{} & 1.69  & +1192.4\%  & 1.32 [+3.9\%]  & 0.54 [+7.4\%]    & \textbf{0.17} [+33.8\%]  \\
        & RL-S2V & 1.29  & 0.50  & 0.13  &   \textbackslash{}  & 1.50  &  \textbackslash{}   &  \textbackslash{}   & 1.70  & +1167.7\%  & 1.34 [+3.6\%]  & 0.54 [+7.9\%]  & \textbf{0.17} [+35.8\%] \\
    \bottomrule
    \end{tabular}}%
    \caption{Training time (hour) of defending models on large graphs. [+5\%] indicating a 5\% increase in training duration relative to the protected GNN, and \emph{Avg.}\emph{e.}\emph{t.} represents the mean additional runtime in comparison to GraphSAGE.}
    \label{tab:harmless2}
\end{table*}%

\subsubsection{Harmfulness of \textsc{Grimm} and Other Models}\label{sec_compaire_exp}

The impact of a defense model can be quantitatively assessed by its training duration, particularly when transductive models are used where inductive methods are preferable (e.g., large-scale graphs), leading to a significant increase in computational load. The efficiency of \textsc{Grimm} is demonstrated by its relatively shorter training duration compared to other defense models. This was empirically tested across various attack conditions, documenting the training times until optimal performance was achieved, indicated by loss convergence. The additional time burden imposed by \textsc{Grimm}, expressed as a percentage increase over the original training time, is detailed in Tables~\ref{tab:harmless2} and~\ref{tab:harmless1} (\emph{c.f.} Appendix~\ref{sec_app_exp}). On smaller graphs, traditional defense models can extend training up to 3-5 times longer than usual. In contrast, \textsc{Grimm} introduces minimal additional time, a benefit that is even more evident in larger datasets. For example, on the Reddit dataset, existing models like EGNN can increase training times by up to 18 times, whereas \textsc{Grimm} maintains negligible extra time.

Real-world graphs often grow dynamically, making the ability to learn from scale-increasing graphs crucial for GNNs. To highlight \textsc{Grimm}'s efficiency, we conducted experiments on progressively larger samples of the Brain and Reddit datasets, measuring the training time of various defense models. The results, displayed in Figure~\ref{fig_increasing}, show that while traditional defense methods significantly extend training times as datasets expand, \textsc{Grimm} ensures minimal increase, maintaining shorter overall durations across various scales, demonstrating its harmless integration with the protected model.

\subsubsection{Transferability of Detectors}

\textsc{Grimm} can rectify an attacked graph based on detectors transferred from another system, thus significantly reducing the time cost of graph rectification. In this section, we report the transferability of the detectors. We producedetectors by adopting Metattack on GCN, and transfer them to another \textsc{Grimm} which is protecting different GNN models against different attacks. The perturbation rate of Metattack is set to 20\% and the augmentation rate of CLGA is set to 10\%.
The transferred detectors are produced based on a GCN model and Cora dataset under the corresponding attack. The experimental results are shown in Table~\ref{tab_transfer1}.
The observation highlights that without any training, based on the detectors transferred from an external source, \textsc{Grimm} can still detect and effectively rectify perturbations.
%Note that mature cells are transferrable only in the scenario that uses the same dataset.

%\begin{table}[htp]
%  \centering \small
%    \begin{tabular}{l cc|cc|cc}
%    \toprule
%    Dataset & \multicolumn{2}{c}{Cora} & \multicolumn{2}{c}{Citeseer} & \multicolumn{2}{c}{Pubmed} \\
%    \cmidrule(lr){1-1} \cmidrule(lr){2-3}  \cmidrule(lr){4-5}  \cmidrule(lr){6-7}
%    Target & GAT & SAGE & GAT & SAGE & GAT & SAGE \\
%    \cmidrule(lr){1-1} \cmidrule(lr){2-3}  \cmidrule(lr){4-5}  \cmidrule(lr){6-7}
%    Before & 66.39 & 68.27 & 65.40 & 64.93 & 72.58 & 70.19 \\
%
%    After & 76.34 & 80.85 & 78.42 & 77.39 & 79.65 & 78.52 \\
%    \bottomrule
%    \end{tabular}
%      \caption{Classification accuracy (\%) after conduct CLGA.}
%  \label{tab_transfer2}%
%\end{table}%

\begin{table}[htp]\small
  \centering
  \setlength{\tabcolsep}{1.25mm}{
    \begin{tabular}{c|lcccccc}
       \toprule
         \multirow{2}[0]{*}{Attack} & Dataset & \multicolumn{2}{c}{Cora} & \multicolumn{2}{c}{Citeseer} & \multicolumn{2}{c}{Pubmed} \\
         \cmidrule(lr){2-2}\cmidrule(lr){3-4} \cmidrule(lr){5-6} \cmidrule(lr){7-8}
        & Target & GAT & SAGE & GAT & SAGE & GAT & SAGE \\
         \cmidrule(lr){1-1} \cmidrule(lr){2-2}\cmidrule(lr){3-4} \cmidrule(lr){5-6} \cmidrule(lr){7-8}
    \multirow{2}[0]{*}{Meta} & Before & 59.20 & 62.21 & 61.08 & 68.29 & 69.51 & 71.77 \\
        & After & 68.71 & 73.94 & 70.32 & 77.69 & 80.37 & 82.64 \\
     \cmidrule(lr){1-1} \cmidrule(lr){2-2}\cmidrule(lr){3-4} \cmidrule(lr){5-6} \cmidrule(lr){7-8}
    \multirow{2}[0]{*}{CLGA} & Before & 66.39 & 68.27 & 65.40 & 64.93 & 72.58 & 70.19 \\
        & After & 76.34 & 80.85 & 78.42 & 77.39 & 79.65 & 78.52 \\
       \bottomrule
    \end{tabular}}
  \caption{Classification accuracy (\%) of poisoned GNNs (before) and \textsc{Grimm}-protected GNNs (after). \textsc{Grimm} rectifies perturbed graph based on transferred detectors.}
  \label{tab_transfer1}%
\end{table}%

\begin{figure}[htb]
\centering
\includegraphics[width=0.40\textwidth]{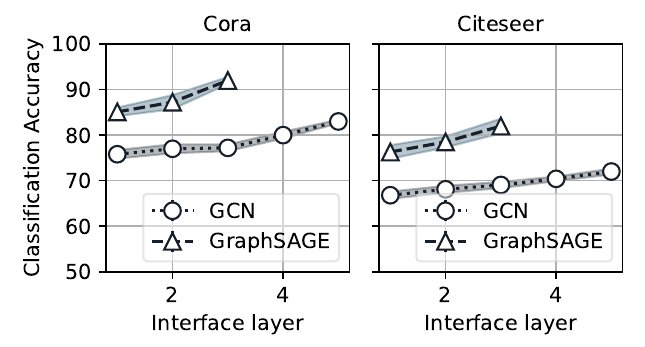}
\caption{Classification accuracy of \textsc{Grimm} after interfacing to the different layers.}
\label{fig_interface}
\end{figure}

\subsubsection{Interface Position for \textsc{Grimm}}
The primary experiments presented in the main text involve \textsc{Grimm} interfacing with the penultimate layer of the protected GNNs. This section explores how the interface position within the GNNs affects global classification accuracy. We illustrate the mean value (central line) and standard deviation (shadow areas) of results, repeated 10 times, in Figure~\ref{fig_interface}.
Figure~\ref{fig_interface} shows a clear trend: classification accuracy progressively improves as the interfaced layer's depth within the GNNs increases. This pattern aligns with the operational dynamics of GNNs, where features are extracted sequentially from graph data across layers. Thus, interfacing deeper within the network allows for the extraction of increasingly precise and sophisticated features. This enhancement in feature representation intrinsically boosts global classification accuracy, confirming the trend that deeper interfacing layers within the GNNs lead to improved performance.

\subsubsection{Sensitivity of MSE's threshold $\rho$}
The process of identifying abnormal FTs in \textsc{Grimm} depends on calculating their MSE) relative to the detectors. The choice of the threshold $\rho$, which sets the MSE boundary for detecting abnormalities, is crucial to \textsc{Grimm}'s performance dynamics. We then focus how accuracy responds to changes in $\rho$ settings.
The results of this investigation are visually presented in Figure~\ref{fig_Sensitivity}. The graph shows that the permissible range for $\rho$ is relatively wide, indicating that \textsc{Grimm} has low sensitivity to fluctuations in $\rho$. This trait suggests that \textsc{Grimm}'s performance remains robust and stable, even with varying $\rho$ values. Such resilience to the MSE threshold not only highlights the robustness of the \textsc{Grimm} framework but also provides operational flexibility, allowing for some latitude in selecting $\rho$ without significantly affecting the accuracy of abnormal FT detection.

\begin{figure}[htb]
\centering
\includegraphics[width=0.46\textwidth]{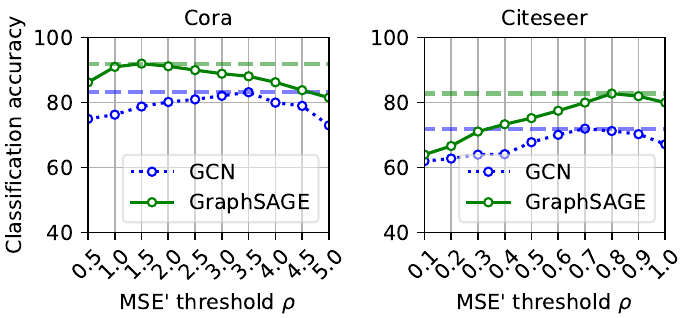}
\caption{Sensibility of the threshold $\rho$ for the affinity.}
\label{fig_Sensitivity}
\end{figure}

\subsubsection{Observations and Discussions}

The findings are summarized as follows:

1) \emph{Vulnerability of GNNs}: Data from Table~\ref{tab:GlobalAcc} highlights the vulnerability of unprotected GNNs (GCN, GAT, GraphSAGE) to perturbations, underscoring the need for robust defense mechanisms.
2) \textsc{Grimm}\emph{'s Superiority}: \textsc{Grimm} outperforms other defense models in mitigating graph perturbations, although minor accuracy reductions are noted in some cases (Table~\ref{tab:GlobalAcc}).
3) \emph{Performance in Non-adversarial Contexts}: In scenarios without adversarial attacks ($P_rate = 0\%$), \textsc{Grimm} improves GNN structure and performance through the strategic addition of auxiliary edges (Table~\ref{tab:GlobalAcc}).
4) \emph{Computational Efficiency}: \textsc{Grimm} significantly reduces operational time compared to baseline models, especially noticeable in larger datasets (Table~\ref{tab:harmless2}, Figure~\ref{fig_increasing}).
5) \emph{Utility of Transferred Detectors}: \textsc{Grimm} effectively employs transferred detectors to rectify perturbed graphs, avoiding the extensive training of new detectors and thereby saving computational resources (Table~\ref{tab_transfer1}).

%\begin{itemize}
%  \item []\emph{Vulnerability of GNNs}: Data in Table~\ref{tab:GlobalAcc} reveals the susceptibility of unprotected GNNs (GCN, GAT, GraphSAGE) to perturbations, emphasizing the necessity for robust defense mechanisms.
%
%  \item []\textsc{Grimm}\emph{'s Superiority}: \textsc{Grimm} generally surpasses other defense models in mitigating graph perturbations, despite minor accuracy reductions in specific instances (Table~\ref{tab:GlobalAcc}).
%
%  \item []\emph{Performance in Non-adversarial Contexts}: In the absence of adversarial attacks ($P_rate$ = 0\%), \textsc{Grimm} enhances GNN structure and performance by adding auxiliary edges (Table~\ref{tab:GlobalAcc}).
%
%  \item []\emph{Computational Efficiency}: \textsc{Grimm} significantly reduces operational time compared to baselines, particularly evident in larger graphs (Table~\ref{tab:harmless1}, Table~\ref{tab:harmless2}, Fig.~\ref{fig_increasing}).
%
%  \item []\emph{Utility of Transferred detectors}: \textsc{Grimm} effectively utilizes transferred detectors to correct perturbed graphs, circumventing the need for extensive detector training and further conserving computational resources (Table~\ref{tab_transfer1}).
%
%\end{itemize}

\section{Conclusion}
This paper presents the first plug-and-play defense model \textsc{Grimm} against poisoning attacks for GNNs. \textsc{Grimm} seamlessly integrates with various GNNs without disrupting their intrinsic functions, offering a parallel, non-intrusive, and generalizable defense mechanism. Underlying implementations of HIS are migrated to GNN. \textsc{Grimm} can continuously monitor the MP of GNNs, detect adversarial edges and reflect the perturbed graph. Experiments demonstrated \textsc{Grimm} protects mainstream GNNs including GCN, GAT and GraphSAGE from the most powerful attacks while outperforming the state-of-the-art defenses.

% This paper proposed a plug-and-play defense model \textsc{Grimm} against porsioning attack. Underlying implementations of HIS are migrated to GNN. \textsc{Grimm} can continuously monitor the MP of GNNs, detect adversarial edges and reflect the perturbed graph. Experiments demonstrated \textsc{Grimm} protects mainstream GNNs including GCN, GAT and GraphSAGE from the most powerful attacks while outperforming the state-of-the-art defenses.

\bibliography{ijcai23}

\clearpage

\appendix

\onecolumn

\section{Additional Experimental Results}\label{sec_app_exp}

\begin{figure*}[!h]
    \centering
    \includegraphics[width=0.99\textwidth]{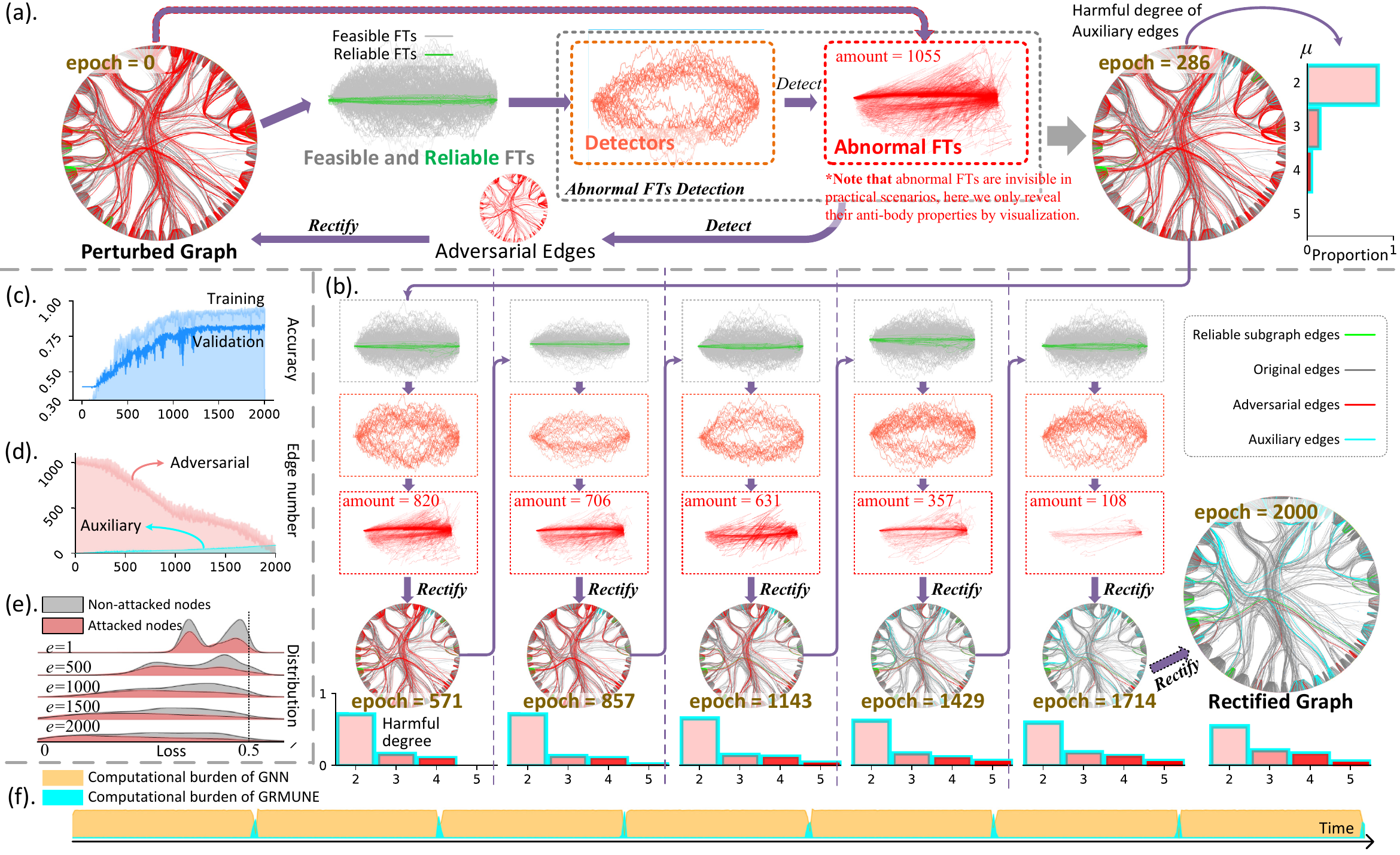}
    \caption{
    \textbf{Internal rectification details of \textsc{Grimm}}.
This experimental endeavor serves to elucidate the progressive graph rectification capabilities of \textsc{Grimm} during the training phase of a GNN, utilizing a demonstrative example for clarity. The experiment employs Metattack as the adversarial methodology, selects the Cora dataset for analysis, and targets a GCN as the model of interest. FTs are extracted from the outputs of the penultimate layer of a 5-layer GCN. We adopt the same approach as in Sections~\ref{sec_motiv} and~\ref{sec_norm} (a frozen decoder) to report 2-dimensional FTs. The encompassing outcomes of this experimental procedure are visually presented in Figure\ref{fig_graphrepairing}.
As delineated in Figure~\ref{fig_graphrepairing}, \textsc{Grimm} demonstrates its proficiency in identifying adversarial edges and in rectifying the attacked graph concurrently with the training progression of the GCN under immunization. Notably, both feasible FTs and detector exhibit efficacy in the detection of abnormal FTs, while imposing only a marginal computational overhead. The experiment's findings are detailed as follows:
\textbf{(a).} Single rectifying operation visualization: This segment, corresponding to the conclusion of epoch 286, displays the visualization of a singular rectification. FTs are categorized and color-coded based on their distinct identities, and the characteristics of adversarial, original, and auxiliary edges within the attacked and rectified graphs are visually delineated to highlight the internal rectification efficacy. The spatial attributes of normal and abnormal FTs within an attacked graph are markedly distinct, a disparity that is effectively captured by detectors, enabling the identification of adversarial edges. The detrimental impact of auxiliary edges is quantified based on the original shortest path, denoted as $\mu$, between indirectly connected nodes via an auxiliary edge; for instance, an auxiliary edge with a $\mu$ value of $2$ is deemed minimally harmful due to the strong structural and feature-based correlation between the connected nodes.
\textbf{(b).} Graph rectification process: This section illustrates the \textsc{Grimm}-mediated graph repair process, highlighting the gradual rectification of adversarial edges post their effective detection by detectors, with the introduction of only nominal auxiliary edges.
\textbf{(c).} Accuracy metrics: The training and validation accuracy rates of the immunized GCN are documented here.
\textbf{(d).} Adversarial and auxiliary edge metrics: The quantity of adversarial edges (inclusive of both inserted and deleted edges) and auxiliary edges are recorded.
\textbf{(e).} Loss distribution ridgeline: This component analyzes the loss distribution ridgeline for nodes misclassified and benign across various epochs. An initial recording of the misclassified nodes is made, attributed to a poisoned GCN trained on $\mathcal{G}'$, followed by the retraining of the same GCN concurrent with the rectification of $\mathcal{G}'$. This process results in the nodes, previously misclassified, being correctly categorized.
\textbf{(f).} Computational load: The computational burden imposed by \textsc{Grimm} and the immunized GCN is outlined, underscoring the parallel efficiency of \textsc{Grimm} with the GCN, without causing significant delays during training.
}
    \label{fig_graphrepairing}
\end{figure*}

\twocolumn

\begin{table*}[htbp]
  \centering
 \resizebox{\linewidth}{!}{
    \begin{tabular}{c c ccc | cccccc ccc}
    \toprule
    \multirow{2}[0]{*}{Dataset} & \multirow{2}[0]{*}{Attack} & \multicolumn{3}{c |}{Unprotected models} & \multicolumn{6}{c}{Defending models} & \multicolumn{3}{c}{Models protected by \textsc{Grimm}} \\
    \cmidrule(lr){3-5} \cmidrule(lr){6-11} \cmidrule(lr){12-14}
        &     & GCN & GAT & SAGE & RGCN & SVD & Pro & Jaccard & EGNN & \emph{Avg. e. t.} & GCN & GAT & SAGE \\
    \cmidrule(lr){1-2}\cmidrule(lr){3-5} \cmidrule(lr){6-11} \cmidrule(lr){12-14}

    \multirow{3}[0]{*}{Cora} & Metattack & 734 & 902 & 277  & 2,140 & 758 & 875 & 1,637 & 1,021 & +363.7\% & 785 [+7.0\%] & 1,022 [+13.3\%] & \textbf{365} [+31.7\%]  \\
        & CLGA & 709 & 892 & 273  & 1,923 & 770 & 836 & 1,409 & 1,195 & +349.3\% & 791 [+11.6\%] & 992 [+11.2\%] & \textbf{355} [+30.0\%]  \\
        & RL-S2V & 711 & 909 & 272  & 2,102 & 765 & 831 & 1,485 & 1,007 & +355.7\% & 760 [+6.9\%] & 1,011 [+11.2\%] & \textbf{362} [+33.1\%]  \\

    \cmidrule(lr){1-2}\cmidrule(lr){3-5} \cmidrule(lr){6-11} \cmidrule(lr){12-14}

    \multirow{3}[0]{*}{Citeseer} & Metattack & 912 & 922 & 303  & 2,932 & 966 & 1,159 & 2,744 & 1,692 & +525.9\% & 993 [+8.9\%] & 1,066 [+16.6\%] & \textbf{405} [+33.6\%]  \\
        & CLGA & 895 & 923 & 305  & 3,016 & 1,014 & 1,165 & 2,568 & 1,570 & +516.7\% & 990 [+10.6\%] & 981 [+6.3\%] & \textbf{401} [+31.5\%]  \\
        & RL-S2V & 927 & 919 & 299  & 3,065 & 984 & 1,147 & 2,933 & 1,565 & +549.2\% & 995 [+6.3\%] & 1,012 [+10.1\%] & \textbf{412} [+37.8\%]   \\

    \cmidrule(lr){1-2}\cmidrule(lr){3-5} \cmidrule(lr){6-11} \cmidrule(lr){12-14}

    \multirow{3}[0]{*}{ Polblogs} & Metattack & 585 & 650 & 202  & 1,407 & 642 & 675 & 1,257 & 1,204 & +412.5\% & 604 [+3.6\%] & 753 [+15.9\%] & \textbf{285} [+41.0\%]  \\
        & CLGA & 614 & 645 & 202  & 1,442 & 651 & 694 & 1,240 & 1,199 & +418.3\% & 658 [+7.2\%] & 738 [+14.4\%] & \textbf{289} [+43.1\%]  \\
        & RL-S2V & 591 & 657 & 204  & 1,439 & 640 & 679 & 1,269 & 1,196 & +412.0\% & 681 [+15.2\%] & 740 [+12.6\%] & \textbf{277} [+35.8\%] \\
    \bottomrule
    \end{tabular}}%
 \caption{Training time (second) of defending models on small graphs. [+5\%] indicating a 5\% increase in training duration relative to the protected GNN, and \emph{Avg.}\emph{e.}\emph{t.} represents the mean additional runtime in comparison to GraphSAGE.}
  \label{tab:harmless1}%
\end{table*}%

Table~\ref{sec_compaire_exp} complements Section~\ref{sec_compaire_exp} (Harmfulness of \textsc{Grimm} and Other Models), presenting results obtained on small graphs. This structured presentation of data ensures a rigorous and detailed evaluation of the models' performance and their impacts in varying experimental contexts.

\begin{figure}[htb]
\centering
\includegraphics[width=0.48\textwidth]{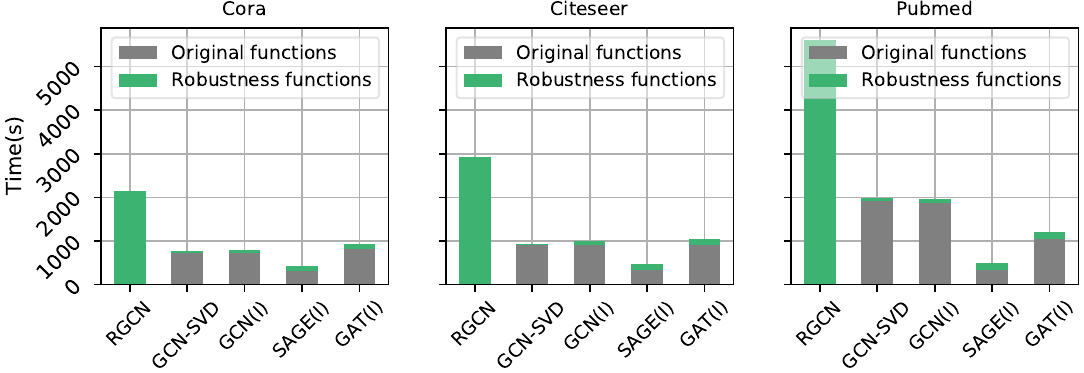}
\caption{ \small \textbf{Runtime comparison of robustness baselines}. This study details the computational durations linked to robust functions within baseline robust GNNs. It assesses the temporal metrics of both original and newly integrated robustness functions, highlighting that the operational latency added by \textsc{Grimm} is minimal.
As discussed in the Introduction, RGCN modifies the forward propagation mechanism typical of conventional GCNs, suggesting that various functionalities of GCN can be seen as robustness enhancements. For GCN-SVD, which incorporates a filtering mechanism within GCN, its robustness primarily stems from this filter. In the \textsc{Grimm} framework, robustness is facilitated through the dynamic processes of anomaly detection and correction. Figure~\ref{fig_runtime} presents these empirical results, with annotations for GCN(I), GAT(I), and SAGE(I) indicating instances of GCN, GAT, and GraphSAGE respectively, reinforced by \textsc{Grimm}.
Comparatively, \textsc{Grimm} introduces only a slight computational overhead compared to GCN-SVD while significantly outperforming RGCN. An in-depth analysis from a system's intrinsic perspective shows that the additional computational load from \textsc{Grimm} on protected GNN architectures is negligible, demonstrating \textsc{Grimm}'s ability to boost GNN robustness efficiently without significant computational costs.}
\label{fig_runtime}
\end{figure}

\section{Pseudo-code of Grimm}\label{sec_pseudo}

\begin{algorithm}[!h]
        \caption{Rectifying process of \textsc{Grimm}}
        \label{alg:GIMPS}
        {\small\begin{algorithmic}[1] %每行显示行号
        \Require Perturbed graph $\mathcal{G}'=\{\mathbf{Z},\mathcal{E}'\}$. The protected GNN $\mathcal{M}$ with the trainable weights $\mathbf{W}$, and the interface layer $\ell$. Maximum epoch $t_{max}$. The checkpoint list for the repairing $t_C$ whose check interval is $c$. Instruction about whether the reliable subgraph on $\mathcal{G}'$ (denoted as $\mathcal{G}_S$) exists. Exogenous reliable graph $\mathcal{G}_{E}=\{\mathbf{Z}_E, \mathcal{E}_E\}$ and a surrogate GNN $\mathcal{M}'$ whose architecture is same as $\mathcal{M}$.
        \For{$t = 1$ to $t_{max}$}
        \State Integrate FTs (Section~\ref{sec_calculate_traj})
        \If {$t \in t_C$}  // \emph{Starting the detection and rectification}
        \If {Reliable subgraph on $\mathcal{G}'$ exists}
        \State \emph{Collect reliable FTs on }$\mathcal{G}_S$
        \Else
        \State \emph{Train} $\mathcal{M}'$ \emph{on} $\mathcal{G}_E$ \emph{for} $c$ \emph{epochs}
        \State \emph{Collect reliable FTs on }$\mathcal{G}_E$
        \EndIf
        \State \emph{Normalize reliable FTs }(Section~\ref{sec_norm})
        \State \emph{Generate feasible FTs }(Section~\ref{sec_generate})
        \State \emph{Produce detectors }(Section~\ref{sec_producemature})
        \State \emph{Detect abnormal FTs }(Section~\ref{sec_nonselfidentify})
        \State \emph{Rectify adversarial edges }(Section~\ref{sec_repair})
        \EndIf
        \EndFor
        \Ensure The rectified graph and the well-trained GNN.
        \end{algorithmic}}
\end{algorithm}

\section{Specific Method for Trajectory Acquisition}\label{appen_traj_1}

\subsection{Mathematical Formulation of Trajectories}

\subsubsection{Dissecting Layer-wise Message Passing in GNNs}
GNNs use a hierarchical architecture that combines node-specific features with overall network topology through layer-wise message passing (MP), defined as $\mathcal{M}: \mathbb{R}^{d_0} \to \mathbb{R}^{d_L}$. This process at layer $\ell$ is given by $\mathbf{Z}_{\ell+1} = \mathcal{M}_\ell(\mathbf{Z}_\ell)$, where $\mathcal{M}_L$ symbolizes the cumulative MP operation across $L$ layers, simplifying to $\mathcal{M}$. This framework is adaptable, supporting different model instantiations. For example, in a transductive GCN model, the MP operation up to the first $\ell$ layers is represented as:
\[
\mathcal{M}_{\ell, \texttt{GCN}}(\mathbf{Z}_{\ell+1}) = \varrho (\mathbf{L} \mathbf{Z}_\ell \mathbf{W}_\ell),
\]
where $\varrho$ denotes the activation function.

In inductive models like GraphSAGE, the MP for a node $i$ up to layer $\ell$ integrates generalized node interactions:
\[
\mathcal{M}_{\ell, \texttt{SAGE}} (\mathbf{z}_{i,\ell+1}) = \varrho (\mathbf{W}_\ell \mathrm{ \scriptstyle CONCAT} (\mathrm{ \scriptstyle AGG}(\{ \mathbf{z}_j : \forall j \in \mathcal{N}(i) \}) ) ).
\]

For Graph Attention Networks (GAT), the corresponding MP process is:
\[
\mathcal{M}_{\ell, \texttt{GAT}} (\mathbf{z}_{i,\ell+1}) = \varrho ( \sum_{j \in \mathcal{N}(i)} a_{i,j} \mathbf{W}_\ell \mathbf{z}_j ).
\]
This paper generally refers to $\mathcal{M}_\ell$ as the standardized MP across the initial $\ell$ layers unless otherwise specified.

\subsubsection{Monitoring GNNs and Integrating FTs}\label{sec_calculate_traj}

Considering a target node $i$ and its adjacent node $j$, we denote the direction vector of the edge FTs along edge $(i,j)$ between epochs $e$ and $e+1$ as
\begin{equation}\label{eq_direction}
 \mathcal{T}_{i,(i,j),\ell}^{(t)\to (t+1)}  = \mathbf{z}_{(i,\cdot),\ell}^{(t+1)} - \mathbf{z}_{(i,\cdot),\ell}^{(t)} \in \mathbb{R}^{d{\ell}}.
\end{equation}
For each node, the direction vectors of edge-wise trajectories are contingent upon the messages propagated along the respective edges. However, direct observation of the message on a specific edge is unfeasible, as GNNs only yield the features amalgamated from all neighbors. To address this limitation, we delineate the methodology for computing the direction vector of edge-wise trajectories, applicable to prevalent GNN architectures including GCN, GAT, and GraphSAGE.

\subsection{Trajectories in mainstream GNNs}

\subsubsection{Trajectories in GCN}
Fine-grained trajectories on GCN are nonintuitive since GCN is a transductive model and driving message passing based on the global Laplacian. Here we give the calculation method of edge-wise and node-wise trajectories in GCN based on the deconstruction of the reformulation of GCN. The direction vector of edge FTs in GCN is
\begin{equation}\label{eq_DefEdgeT}\small
\mathcal{T}_{i,(i,j),\ell}^{(t)\to (t+1)}  =  \left(\mathcal{R}\left(\mathbf{L},(i,j)\right) \mathbf{Z}_{\ell-1}^{(t)}  \left(\mathbf{W}_\ell^{(t+1)} - \mathbf{W}_\ell^{(t)}\right) \right)_j,
\end{equation}
where $\mathcal{R}(\mathbf{L},(i,j))$ gives the reduced Laplacian $\mathbf{L}_{i,j}^1$ calculated from $\mathbf{A}_{i,j}^1$ which can revert to a 1-edge graph $\mathcal{G}_{i,j}^{1}=\{ \mathbf{Z}, \mathcal{E}_{i,j}^1 \}$, where $\mathcal{E}_{i,j}^1 = \{ (i,j) \}$ which means only 1 edge is in $\mathcal{E}_{i,j}^1 $. An illustrational example of function of $\mathcal{R}(\cdot,\cdot)$ is as Figure~\ref{fig_FunctionR}.

\begin{figure}[htb]
\centering
\includegraphics[width=0.45\textwidth]{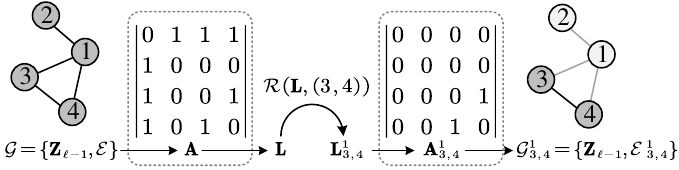}
\caption{An illustration of reduced Laplacian function $\mathcal{R}(\cdot,\cdot)$.}
\label{fig_FunctionR}
\end{figure}

Therefore, given a target node $i$ and maximum observing epoch $E$, edge FT on edge $(i,j)$ can thus be regard as a real-time updated matrix $\mathcal{T}_{(i,j),\ell} \in \mathbb{R}^{E \times d_{\ell}}$ whose row records the history positions.

Following the definition of edge-wise trajectories, node-wise trajectories can be obtained by

\begin{proposition}\label{pro_NTnET}
Given a node $i$, The direction vector of edge-wise and node-wise trajectories satisfies
\begin{equation}\label{eq_NTnET}\small
\mathcal{T}_{i,\ell}^{(t)\to (t+1)} = \sum_{j \in \mathcal{N}(i)} \mathcal{T}_{i,(i,j),\ell}^{(t)\to (t+1)}.
\end{equation}
\end{proposition}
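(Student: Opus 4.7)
The plan is to derive the additivity relation by exploiting the linearity of the Laplacian-based aggregation underlying Eq.~\ref{eq_DefEdgeT} together with the fact that the Laplacian itself decomposes into a sum of single-edge contributions. Concretely, for the perturbed edge set $\mathcal{E}'$ the Laplacian admits the decomposition $\mathbf{L} = \sum_{(u,v) \in \mathcal{E}'} \mathcal{R}(\mathbf{L},(u,v))$, where each summand is supported only on the two rows/columns indexed by the endpoints of the corresponding edge. This algebraic decomposition is the mathematical analogue of the statement that the message accumulated at node $i$ in a single MP step is the sum of messages traveling along each incident edge.

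First I would write the pre-activation node feature as $\tilde{\mathbf{z}}_{i,\ell}^{(t)} = (\mathbf{L}\,\mathbf{Z}_{\ell-1}^{(t)}\,\mathbf{W}_\ell^{(t)})_i$ and form the node-wise direction vector. Under the standard single-step stationarity assumption on $\mathbf{Z}_{\ell-1}^{(t)}$ that is already implicit in Eq.~\ref{eq_DefEdgeT} (only the trainable weight $\mathbf{W}_\ell$ varies from epoch $t$ to $t{+}1$), I obtain
$$\mathcal{T}_{i,\ell}^{(t)\to (t+1)} \;=\; \Bigl(\mathbf{L}\,\mathbf{Z}_{\ell-1}^{(t)}\,\bigl(\mathbf{W}_\ell^{(t+1)} - \mathbf{W}_\ell^{(t)}\bigr)\Bigr)_i.$$
Second I would substitute the edge-wise decomposition of $\mathbf{L}$ and pull the finite sum outside, giving
$$\mathcal{T}_{i,\ell}^{(t)\to (t+1)} \;=\; \sum_{(u,v)\in \mathcal{E}'} \Bigl(\mathcal{R}(\mathbf{L},(u,v))\,\mathbf{Z}_{\ell-1}^{(t)}\,\bigl(\mathbf{W}_\ell^{(t+1)} - \mathbf{W}_\ell^{(t)}\bigr)\Bigr)_i.$$
Third I would use the fact that $\mathcal{R}(\mathbf{L},(u,v))$ has nonzero rows only at $u$ and $v$, so its $i$-th row vanishes unless $i \in \{u,v\}$. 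Consequently only edges incident to $i$ survive, and reindexing the surviving terms by $j \in \mathcal{N}(i)$ and matching with the definition in Eq.~\ref{eq_DefEdgeT} identifies each summand as exactly $\mathcal{T}_{i,(i,j),\ell}^{(t)\to (t+1)}$, which is the claim.

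The main obstacle I anticipate is reconciling the indexing convention in Eq.~\ref{eq_DefEdgeT}, where the contribution of edge $(i,j)$ is read off at coordinate $j$ rather than $i$. I would handle this by noting that for a single-edge reduced Laplacian the two nonzero rows are tied to each other by the incidence structure (differing only in sign or symmetric normalization weight), so selecting row $j$ consistently tags ``the message traveling from $j$ to $i$'' and preserves the additive bookkeeping. A secondary subtlety is the self-loop augmentation $\mathbf{A} \leftarrow \mathbf{A} + \mathbf{I}$ commonly used in GCN: the $(i,i)$ contribution is absorbed either into a fictitious self-edge in $\mathcal{N}(i)$ or into a residual term that cancels when taking the epoch-to-epoch difference. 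Once these conventions are made explicit, the result reduces to the linearity of the MP operator and requires no further estimation.
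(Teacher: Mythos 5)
Your argument is the same linear‐decomposition argument the paper makes: the paper packages $\mathbf{L}=\sum_{l\in\mathcal{E}}\mathcal{R}(\mathbf{L},l)$ as a recursive splitting into ``complementary spanning subgraphs'' before substituting into Eq.~\eqref{eq_DefEdgeT}, whereas you invoke the per-edge Laplacian decomposition directly and then observe that only rows incident to $i$ survive, but the mathematical content is identical. You are in fact slightly more careful than the paper, since you flag the row-$i$ versus row-$j$ indexing convention and the self-loop bookkeeping, neither of which the paper's terse proof addresses.
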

Proof in Appendix~\ref{appen_3}.

\subsubsection{Trajectories in GAT}
Trajectories on GAT is intuitive since GAT is an inductive model which modeled the local MP. Therefore, the direction vector for the edge FTs of node $j$ to $i$ in GAT is the sum of aggregated message from $j$ to $i$ in all attention heads, i.e., in GAT,
\begin{equation}\label{eq_EdgeTrajGAT}\small
\mathcal{T}_{i,(i,j),\ell}^{(t)\to (t+1)} = \sum_{k=1}^{K} a_{i,j}\mathbf{W}_{ \texttt{GAT} }^{(t,k)}\mathbf{z}_{i,\ell-1}^{(t)},
\end{equation}
where $a_{i,j}$ is the attention coefficients between node $i$ and $j$, $\mathbf{W}_{ \texttt{GAT} }^{(t,k)}$ is the trainable matrix of $k^{\text{th}}$ head in epoch $t$.

Furthermore, the direction vector of node-wise trajectory in GAT can be simply calculated by
\begin{equation}\label{eq_NodeTrajGAT}\small
\mathcal{T}_{i,\ell}^{(e)\to (e+1)} = \mathbf{z}^{(t+1)}_{i,\ell} - \mathbf{z}^{(t)}_{i,\ell}.
\end{equation}

\subsubsection{Trajectories in GraphSAGE}
Although GraphSAGE is an inductive model, trajectories in GraphSAGE are nonintuitive since it applies non-linear layer to the aggregated features of all neighbors (including the self node feature). Therefore, edge-wise trajectory of node $j$ to $i$ in GraphSAGE can be obtained by blocking messages of node $h$ satisfies $ \forall h \in \mathcal{N}(i)$ and $h \not= j$. Therefore, we replace the aggregator $\mathrm{A}(\cdot)$ as the blocked aggregator $\mathrm{A}_{\text{blocked}}(\cdot)$ in GraphSAGE and recalculate forward propagation, to thus obtain the edge-wise trajectories, i.e., by denoting $\texttt{SAGE}^{(t)}(\cdot)$ as the $t^{\text{th}}$ epoch forward propagation of GraphSAGE,
\begin{equation}\label{eq_EdgeTrajSAGE}\small
%\mathcal{T}_{i,(i,j)}^{(e)\to (e+1)} = W^{(e)}_l (f_i ||  \mathrm{A}_{\text{blocked}}(f_j;i))
\mathcal{T}_{i,(i,j),\ell}^{(t)\to (t+1)} = \texttt{SAGE}^{(t)}(\mathbf{z}_{i,\ell-1}^{(t)};\mathrm{A}_{\text{blocked}}).
\end{equation}
%where $||$ represents concatenation,
$\mathrm{A}_{\text{blocked}}(\cdot)$ only aggregate the embedding of node $j$ while banning messages from other neighbors of node $i$. For instance, the blocked pooling aggregator is $\mathrm{A}_{\text{blocked}}(\mathbf{z}_{j,\ell};i) = \mathbf{W}_{\text{pool}} \mathbf{z}_{j,\ell} + b$, where $ \mathbf{W}_{\text{pool}}$ and $\mathbf{b}$ is the trainable matrix and trainable vector in the aggregator. For another instance, the blocked mean aggregator can simply represented as $\mathrm{A}_{\text{blocked}}(\mathbf{z}_{j,\ell};i) = \mathbf{z}_{j,\ell}/|\mathcal{N}(i)|$.
The calculation method for the direction vector in GraphSAGE is the same as Equation~\eqref{eq_NodeTrajGAT}.

%Then, \textsc{Grimm} will present trajectories as the antigens. Assuming that \textsc{Grimm} interfaces the $\ell^{\text{th}}$ layer of protected GNN. We denote $\mathcal{T}_{i,\ell}\in \mathbb{R}^{d_\ell \times t}$ as the node FTs of node $i$, and $\mathcal{T}_{i,(i,j),\ell}\in \mathbb{R}^{d_\ell \times e}$ the edge-wise trajectory of edge $(i,j)$. FTs can be integrated by
%\begin{equation}\label{eq_antigenpresent}
%\textsc{Integrate}(\mathcal{M},\ell) = \bigcup_{i \in \mathcal{V}}\mathcal{T}_{i,\ell} + \bigcup_{i,j \in \mathcal{V}, i \neq j}\mathcal{T}_{i,(i,j),\ell}.
%\end{equation}

\subsection{Trajectory Normalization}\label{sec_norm}

Due to the inconsistency in the initial and convergence positions of the nodes, and given our need to determine their abnormality based on trajectory characteristics, we align the start and end points of all trajectories each time a calculation is performed. This approach focuses solely on extracting pertinent information, such as the magnitude of fluctuations. Consequently, it enables the maximization of the extraction and utilization of the latent information revealed by the trajectories.
We refer to this process as the Normalization of FTs, which inherently introduces a new advantage: Normalized FTs support cross-system transfer. For the same GNN, the primary difference between different training systems is rectified in the varying initial positions of trajectories provided by the datasets. Given the relatively consistent training capabilities of GNNs, Normalization to some extent mitigates the discrepancies caused by the datasets. It aligns different systems with the intrinsic characteristics of the GNN itself, ensuring that the detectors (subsets of FTs, which will be elaborated upon later) remain effective even after cross-system transfer.

\begin{figure}[htb]
\centering
\includegraphics[width=0.40\textwidth]{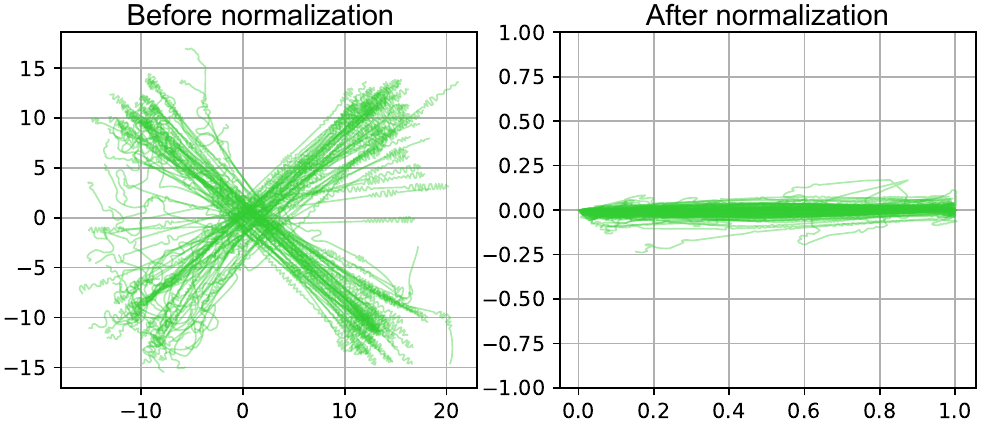}
\caption{The illustrative example for normalization. We report 100 node-wise trajectories which are produced by GCN on the clean Cora dateset and correctly classified, i.e., all trajectories are reliable.} \label{fig_DeloVis}
\end{figure}

Given a set of trajectories $\mathscr{T}=\{\mathscr{T}_1,\ldots,\mathscr{T}_w\}$ where $\mathscr{T}_i \in \mathbb{R}^{d \times l}$ (here we consider broader trajectories, not distinguishing nodes' or edges'), we first parallel transport the starting point of all trajectories to the origin of $d$-dimension axis, to thus translate every trajectory $\mathscr{T}_i,i\in[1,w]$ to $\mathscr{T}^{(0)}_i$. Then, we rotate all trajectories to make the line between the start and the end point of every trajectory parallel to the axis of the first dimension (i.e., for $d=3$ case, parallel to director vector $\imath_{3} = [1,0,0]$). Specifically, for any trajectory $\mathscr{T}^{(0)}_i$, we first calculate the main direction $\zeta_i = \mathscr{T}_i^{(0)}(l)$ where $\mathscr{T}_i^{(0)}(l) \in \mathbb{R}^{d}$ is the $l^{\text{th}}$ position of $\mathscr{T}_i^{(0)}(l)$, and calculate the included angle $\tau_{i,0}$ between $\zeta_i$ and $\imath_{d}$:
$
  \tau_{i,0} = \arccos( \frac{\zeta_i \cdot \imath_{d}}{||\zeta_i||_2}).
$
Then, define the rotate matrix
\begin{equation}\label{eq_RotateM}\small
R_i = \cos \tau_{i,0} I_{d} + T_{\text{L}}(\sin \tau_{i,0},d) + T_{\text{R}}(-\sin \tau_{i,0},d),
%  R_i = \left[ \begin{matrix}
%	\cos \tau_{i,0}&		-\sin \tau_{i,0}\\
%	\sin \tau_{i,0}&		\cos \tau_{i,0}\\
%\end{matrix} \right]
\end{equation}
where $I_{d}$ is the identity matrix of size $d \times d$, $T_{\text{L}}(\cdot)$ and $T_{\text{R}}(\cdot)$ represent expanding a scalar to the left and right triangular matrix, respectively. Finally, rotate $z^{(0)}_i$ according to $R_i$, i.e.,
$
\mathscr{T}_i' = R_i \mathscr{T}^{(0)}_i.
$

Subsequently, by applying a min-max scale to $\mathscr{T}_i'$, the normalization process for the given set of trajectories is completed.  We denote $\mathrm{CON}(\cdot,l)$ as the repetitive concatenation of the given vector with repeating $l$ times.
%, algorithm of normalization is as Algorithm~\ref{alg:Delocalization}.
To further clarify the normalization operation, we adopt the same approach as in Section~\ref{sec_motiv}, attaching a unidirectional output, frozen decoder that does not interfere with the training process to obtain 2-dimensional trajectories (see Figure~\ref{fig_motive} for details). The results of the normalization are shown in Figure~\ref{fig_DeloVis}.

\section{Proof of Theorem~\ref{thm_dis}}\label{appen_1}

To prove Theorem~\ref{thm_dis}, we first prove 7 lemmas.

Lemma~\ref{lemma1} establishes the continuity of the GNN function, ensuring that the continuous label space, segmented by the argmax function, can be backpropagated to every layer within the GNN. This guarantees the existence of a continuous latent label space at each layer.

Lemma~\ref{lemma2}  validates the uniqueness of the supervisory signal at each layer, ensuring that once training commences, every layer of the GNN has a fixed and unchanging convergence target.

Based on the conclusions of Lemma~\ref{lemma1} and Lemma~\ref{lemma2}, Lemma~\ref{lemma3} demonstrates the presence of binary opposition within each layer's latent label space. In other words, as node features move towards a certain category within this space, they inherently move away from other regions.

Building on the conclusion of Lemma~\ref{lemma3}, Lemma~\ref{lemma4} proves that node feature trajectories undergo ``secondary convergence'' in the latent label space when subjected to attacks.

Lemma~\ref{lemma5} establishes that the Mean Squared Error (MSE) between node feature trajectories generated by an unattacked GNN follows a normal distribution.

Extending from the findings of Lemma~\ref{lemma4} and Lemma~\ref{lemma5}, Lemma~\ref{lemma6} demonstrates that the MSE between attacked and unattacked node feature trajectories also follows a normal distribution. Furthermore, it establishes that there is a significant and lower-bounded Kullback-Leibler (KL) divergence between this distribution and the distribution described in Lemma~\ref{lemma5}.

Finally, Lemma~\ref{lemma7}, drawing from the conclusion of Lemma~\ref{lemma6} and existing research, analyzes the relationship between KL divergence and distinguishability.

\begin{lemma}\label{lemma1}
Let $\mathcal{M}_\ell: \mathbb{R}^{d_\ell} \to \mathbb{R}^{d_{\ell+1}}$ be the output of the $\ell^{\text{th}}$ layer of a GNN $\mathcal{M}$. Suppose for a subset $V \subset \mathbb{R}^{d_{\ell+1}}$ in the output space, there exists a pre-image set $U \subset \mathbb{R}^{d_\ell}$ in the input space such that $\mathcal{M}_\ell(U) = V$. If $V$ is a continuous value region in $\mathbb{R}^m$, then $U$ is also a continuous region in $\mathbb{R}^n$. In other words, the continuity of the output value region implies the continuity of its corresponding input space region.
\end{lemma}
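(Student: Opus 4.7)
The plan is to use the continuity of the layer map $\mathcal{M}_\ell$, viewed as a composition of an affine aggregation (via the Laplacian, attention, or sampling/aggregation operator) with a continuous activation, and to transfer the topological regularity of $V$ back to $U$ through standard preimage and path-lifting arguments. I would first fix an operational reading of ``continuous value region'' as ``connected open subset,'' since this is what the subsequent lemmas actually require, and I would take $U$ without loss of generality to be the full preimage $\mathcal{M}_\ell^{-1}(V)$; any smaller $U'$ with $\mathcal{M}_\ell(U')=V$ can be enlarged to the full preimage without changing the image. Continuity of $\mathcal{M}_\ell$ then makes $U$ open in $\mathbb{R}^{d_\ell}$ immediately, which handles half of the claim.

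For connectedness, I would argue by contradiction. Suppose $U = U_1 \sqcup U_2$ admits a separation into nonempty open pieces. Since each $U_i$ is connected in its own right, continuity gives that $V_i := \mathcal{M}_\ell(U_i)$ is connected and $V_1 \cup V_2 = V$. If $V_1 \cap V_2 = \varnothing$ we contradict the connectedness of $V$ outright. Otherwise, pick $v^\star \in V_1 \cap V_2$ with pre-images $u^\star_1 \in U_1$ and $u^\star_2 \in U_2$; I would then invoke that $\mathcal{M}_\ell$ is locally surjective away from a measure-zero critical set (this follows from the smoothness of affine maps composed with non-constant activations), and lift a short arc through $v^\star$ back into $U$ to obtain a continuous path joining $u^\star_1$ and $u^\star_2$, contradicting the separation.

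The main obstacle is precisely the non-injectivity and the critical points of $\mathcal{M}_\ell$. Activations such as ReLU have flat regions on which preimages collapse, so path-lifting is not a topological triviality and one cannot simply appeal to ``continuous preimage of connected is connected'' (which is in fact false in general). The cleanest way I foresee to bypass this is to note that the critical locus of $\mathcal{M}_\ell$ has Lebesgue measure zero in $\mathbb{R}^{d_\ell}$ and that its image has measure zero in $\mathbb{R}^{d_{\ell+1}}$; the set of regular points of $V$ is therefore dense and connected, and any continuous path in $V$ between two regular values can be perturbed to avoid critical images, which lets us lift it piecewise through local inverses guaranteed by the implicit function theorem. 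I would present this measure-zero perturbation as the crux of the proof and keep the routine continuity and openness steps short, since the force of Lemma~\ref{lemma1} really rests on ruling out the pathological case where preimages split across disconnected branches.
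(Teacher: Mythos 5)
Your approach is genuinely different from the paper's, and on one axis it is more honest about where the real difficulty lies. The paper spends the bulk of its proof establishing local Lipschitz continuity of the \emph{parameter-to-realization} map $\Phi \mapsto \mathrm{R}^{\mathbf{Z}}_\varrho(\Phi)$, which is a mismatch with the lemma's actual object (the fixed-network input-to-output map $\mathcal{M}_\ell$), and then closes with a sequence argument claiming that continuity of $\mathcal{M}_\ell$ forces $\mathcal{M}_\ell(x_k)\to\mathcal{M}_\ell(x)$ to imply $x_k\to x$; that is the converse of continuity and is false without injectivity or properness. You instead split the claim into openness (correct: $\mathcal{M}_\ell^{-1}(V)$ is open when $V$ is open and $\mathcal{M}_\ell$ is continuous) and connectedness, and you correctly flag that ``continuous preimage of connected is connected'' is false — a point the paper never confronts.

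However, your fix for the connectedness half has a genuine gap. The measure-zero/perturbation argument lets you avoid critical values of $\mathcal{M}_\ell$ and lift a path in $V$ piecewise through local inverses, but local inverses (sections) are not unique, and nothing in the argument forces consecutive lifts to land in the \emph{same} connected component of $U$. A one-dimensional illustration already defeats it: take $\mathcal{M}_\ell$ restricted to behave like $x\mapsto x^2$, let $V=(1,4)$ (connected, open, all regular values), and observe $\mathcal{M}_\ell^{-1}(V)=(-2,-1)\cup(1,2)$, which is open but disconnected, even though every point of $V$ is a regular value and every short arc in $V$ lifts. The two lifts simply live in different sheets, and there is no continuous path in $U$ joining them. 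Path-lifting over regular values controls each sheet separately; it does not exclude multiple sheets. To actually obtain connectedness of $U$ you would need some additional hypothesis — e.g., that $\mathcal{M}_\ell$ restricted to $U$ is injective, or proper, or that $V$ lies in a simply connected region of the regular value set over which the map is a trivial covering — none of which is asserted in the lemma statement. So your proposal, like the paper's final step, leaves the connectedness claim unproved; the difference is that your writeup makes the obstacle visible rather than eliding it.
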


\begin{proof}

A $L$-layer GNN with function $\mathcal{M}$ that is realizable can be conceptualized through the framework of a set [1], denoted as $\Phi=\left(\left(\mathbf{W}_{\ell}\right)\right)_{\ell=1}^L$, where $\mathbf{W}_{\ell} \in \mathbb{R}^{d_{\ell} \times d_{\ell+1}}$is the trainable matrices. The architecture of $\Phi$, denoted as $S:=\left(d_0, d_1, \ldots, d_L\right)$, represents the number of neurons in each layer of the network. The total number of neurons in the architecture, $D(S):=\sum_{\ell=0}^L d_{\ell}$, and the number of layers, $L=L(S)$, are key characteristics of the network.

The realization of a GNN $\Phi$ is defined through the incorporation of an activation function $\varrho: \mathbb{R} \rightarrow \mathbb{R}$ and a domain of definition $\mathbf{Z} \subset \mathbb{R}^{d_0}$. The realization map of the network, $\mathrm{R}_{\varrho}^{\mathbf{Z}}(\Phi): \mathbf{Z} \rightarrow \mathbb{R}^{d_L}$, maps an input $x$ to the output $x_L$, which is computed as follows:
\begin{align*}
x_0 & :=\mathbf{Z}, \\
x_{\ell} & :=\varrho\left( \mathbf{L} \mathbf{Z}_{\ell-1} \mathbf{W}_{\ell} \right), \quad \text{for } \ell=1, \ldots, L-1, \\
x_L & := \mathbf{L} x_{L-1} \mathbf{W}_{\ell} ,
\end{align*}
where \(\varrho\) acts componentwise. Herein, we posit a more generalized scenario wherein the GNN maps attributes to the realm of real numbers, i.e., $\mathrm{R}_{\varrho}^{\mathbf{Z}}(\Phi): \mathbf{Z} \rightarrow \mathbb{R}$.

We focus on the topological properties of sets of realizations of neural networks with a fixed size. The size of a network can be specified in various ways, but here we consider the set of realizations of networks with a given architecture \(S\) and activation function \(\varrho\), denoted as \( \left\{\mathrm{R}_{\varrho}^{\mathbf{Z}}(\Phi): \Phi \in \mho(S)\right\}\).
Note that the set \(\mho(S)\) of all neural networks with a fixed architecture forms a finite-dimensional vector space. This space is equipped with a norm defined as
\begin{multline}\label{eq_proof_prim_norm}
\|\Phi\|_{\mho(S)}:=\|\Phi\|_{\text{scaling}} \quad \text{for} \quad \Phi=\left(\left(\mathbf{W}_{\ell}\right)\right)_{\ell=1}^L \in \mho(S),
\end{multline}

where \(\|\Phi\|_{\text{scaling}}:=\max_{\ell=1, \ldots, L}\left\|\mathbf{W}_{\ell}\right\|_{\max}\). If the specific architecture of \(\Phi\) is not of concern, we simply write \(\|\Phi\|_{\text{total}}:=\|\Phi\|_{\mho(S)}\). Additionally, if \(\varrho\) is continuous, the realization map is denoted by
\begin{equation}\label{eq_proof_conti_realization}
\mathrm{R}_{\varrho}^{\mathbf{Z}}: \mho(S) \rightarrow \mathcal{M}\left(\mathbf{Z} \right), \Phi \mapsto \mathrm{R}_{\varrho}^{\mathbf{Z}}(\Phi).
\end{equation}
Here we denote $\mathcal{M}=\mathcal{M}(\varrho, S)>0$ as a constant, which represent the output of GNN.
%While the activation function \(\varrho\) can be chosen arbitrarily, several particularly useful activation functions have been established in the literature. These functions are non-constant, monotonically increasing, and globally Lipschitz continuous. Furthermore, all the functions listed are in the class \(\mathcal{M}^{\infty}(\mathbb{R} \backslash\{0\})\).

We aim to demonstrate that for two sequences of continuous functions, $(f_n){n\in\mathbb{N}} : \mathbb{R}^{d_0} \rightarrow \mathbb{R}^{d_L}$ and $(g_n){n\in\mathbb{N}} : \mathbb{R}^{d_L} \rightarrow \mathbb{R}^{d_0}$, if $f_n \rightarrow f$ and $g_n \rightarrow g$ under locally uniform convergence, then the composition $g_n \circ f_n$ also converges to $g \circ f$ locally uniformly. To establish this, consider any $R, \epsilon > 0$. Within the ball $B_R(0) \subset \mathbb{R}^{d_0}$, the sequence $(f_n)$ converges uniformly to $f$. This implies that there exists a bound $\mathcal{M} := \sup_{n\in\mathbb{N}} \sup_{|x|\leq R} |f_n(x)| < \infty$, which is feasible due to the continuity of $f$ and all $f_n$, ensuring their boundedness on $B_R(0)$.

Further, within the ball $B_\mathcal{M}(0) \subset \mathbb{R}^{d_L}$, the sequence $(g_n)$ converges uniformly to $g$. Consequently, there exists an $n_1 \in \mathbb{N}$ such that for all $n \geq n_1$ and for all $y \in \mathbb{R}^{d_L}$ with $|y| \leq \mathcal{M}$, the inequality $|g_n(y) - g(y)| < \epsilon$ holds. Additionally, $g$ is uniformly continuous on $B_\mathcal{M}(0)$, allowing us to find a $\delta > 0$ such that $|g(y) - g(z)| < \epsilon$ for all $y, z \in B_\mathcal{M}(0)$ with $|y - z| \leq \delta$. Lastly, the uniform convergence of $f_n$ to $f$ on $B_R(0)$ implies the existence of an $n_2 \in \mathbb{N}$, such that for all $n \geq n_2$ and for all $x \in \mathbb{R}^{d_0}$ with $|x| \leq R$, the condition $|f_n(x) - f(x)| \leq \delta$ is satisfied.

Overall, these considerations show for $n \geq \max\{n_1, n_2\}$ and $x \in \mathbb{R}^d$ with $|x| \leq R$ that
\begin{multline}\label{eq_proof_conti_step1}
|g_n(f_n(x)) - g(f(x))| \\ \leq  |g_n(f_n(x)) - g(f_n(x))| + |g(f_n(x)) - g(f(x))| \leq \epsilon + \epsilon.
\end{multline}

Then, We show that $R^\varrho$ is continuous. Assume that a sequence $(\Phi_n)_{n\in\mathbb{N}} \subset \mho ((d, d_1, ..., d_L))$ given by $\Phi_n = \mathbf{W}_1^{(n)}, \mathbf{W}_L^{(n)}$ satisfies $\Phi_n \rightarrow \Phi = (A_1, b_1), ..., (A_L, b_L) \in \mho ((d, d_1, ..., d_L))$. For $\ell \in \{1, ..., L-1\}$ set
\begin{align}
  \alpha_\ell^{(n)} &: \mathbb{R}^{d_{\ell-1}} \rightarrow \mathbb{R}^{d_\ell}, x \mapsto \varrho_\ell(\mathbf{L}x\mathbf{W}_\ell^{(n)}), \notag \\
  \alpha_\ell &: \mathbb{R}^{d_{\ell-1}} \rightarrow \mathbb{R}^{d_\ell}, x \mapsto \varrho_\ell(\mathbf{L}x\mathbf{W}_\ell), \notag
\end{align}
where $\varrho_\ell := \varrho \times \dots \times \varrho$ denotes the $d_\ell$-fold cartesian product of $\varrho$. Likewise, set
\begin{align}
  \alpha_L^{(n)} &: \mathbb{R}^{d_{L-1}} \rightarrow \mathbb{R}^{d_L}, x \mapsto \mathbf{L}x\mathbf{W}_\ell^{(n)}  \notag \\
  \text{and} \quad \alpha_L &: \mathbb{R}^{d_{L-1}} \rightarrow \mathbb{R}^{d_L}, x \mapsto \mathbf{L}x\mathbf{W}_\ell \notag
\end{align}

By what was shown in Equation~\eqref{eq_proof_conti_step1}, it is not hard to see for every $\ell \in \{1, ..., L\}$ that $\alpha_\ell^{(n)}$ converges uniformly as $n \rightarrow \infty$. By another (inductive) application of Step 1, this shows
\begin{equation}\label{eq_proof_conti_step2}
  R^\varrho(\Phi_n) = \alpha_L^{(n)} \circ \dots \circ \alpha_1^{(n)} \rightarrow \alpha_L \circ \dots \circ \alpha_1 = R^\varrho(\Phi),
\end{equation}
locally uniformly.

Next, Let $\varrho_\ell := \varrho \times \dots \times \varrho$ be the $d_\ell$-fold cartesian product of $\varrho$ in case of $\ell \in \{1, ..., L-1\}$, and set $\varrho_L := \text{id}_{\mathbb{R}^{d_L}}$. For arbitrary $x \in \mathbf{Z}$ and $\Phi = (A_1, b_1), ..., (A_L, b_L) \in \mho(S)$, define inductively $\alpha_x^{(0)}(\Phi) := x \in \mathbb{R}^d = \mathbb{R}^{d_0}$, and for $\ell \in \{0, ..., L-1\}$,
\begin{equation}\label{eq_proof_conti_step3.1}
 \alpha_x^{(\ell+1)}(\Phi) := \varrho_{\ell+1} \left( \mathbf{L}  \alpha_x^{(\ell)}(\Phi) \mathbf{W}_{\ell+1} \right)  \in \mathbb{R}^{d_{\ell+1}}.
\end{equation}
Let $R > 0$ be fixed, but arbitrary. We will prove by induction on $\ell \in \{0, ..., L\}$ that
\begin{align}\label{eq_proof_conti_step3.2}
& \|\alpha_x^{(\ell)}(\Phi)\|_{\infty} \leq \mathcal{M}_{\ell,R} \\
\text{and} \quad & \|\alpha_x^{(\ell)}(\Phi) - \alpha_x^{(\ell)}(\Psi)\|_{\infty} \leq \mathcal{Q}_{\ell,R} \cdot \|\Phi - \Psi\|_{\text{total}} \notag,
\end{align}
for suitable $\mathcal{M}_{\ell,R}, \mathcal{Q}_{\ell,R} > 0$ and arbitrary $x \in \mathbf{Z}$ and $\Phi, \Psi \in \mho(S)$ with $\|\Phi\|_{\text{total}}, \|\Psi\|_{\text{total}} \leq R$. This will imply that $R^\varrho_{\mathbf{Z}}$ is locally Lipschitz, since clearly $R^\varrho_{\mathbf{Z}}(\Phi)(x) = \alpha_x(\Phi)$, and hence
\begin{multline}\label{eq_proof_conti_step3.3}
\|R^\varrho_{\mathbf{Z}}(\Phi) - R^\varrho_{\mathbf{Z}}(\Psi)\|_{\text{sup}} \\ = \sup_{x \in \mathbf{Z}} |\alpha_x(\Phi) - \alpha_x(\Psi)| \leq \mathcal{Q}_{L,R} \cdot \|\Phi - \Psi\|_{\text{total}}.
\end{multline}

Next, Let $\varrho_{\ell}:=\varrho \times \cdots \times \varrho$ be the $d_{\ell}$-fold cartesian product of $\varrho$ in case of $\ell \in\{1, \ldots, L-1\}$, and set $\varrho_L:=\operatorname{id}_{\mathbb{R}^{d_L}}$. For arbitrary $x \in \mathbf{Z}$ and $\Phi=\left(\mathbf{W}_1, \ldots,\mathbf{W}_L\right) \in \mho (S)$, define inductively $\alpha_x^{(0)}(\Phi):=x \in \mathbb{R}^d=\mathbb{R}^{d_0}$,  for $\ell \in\{0, \ldots, L-1\}$
\begin{align}\label{eq_proof_conti_step3.1}
\alpha_x^{(\ell+1)}(\Phi):=\varrho_{\ell+1}\left(\mathbf{L}  \alpha_x^{(\ell)}(\Phi) \mathbf{W}_{\ell+1}\right) \in \mathbb{R}^{d_{\ell+1}} \notag \\
 \quad \text { for } \quad \ell \in\{0, \ldots, L-1\}.
\end{align}
Let $R>0$ be fixed, but arbitrary. We will prove by induction on $\ell \in\{0, \ldots, L\}$ that
\begin{align}\label{eq_proof_conti_step3.2}
& \left\|\alpha_x^{(\ell)}(\Phi)\right\|_{\ell^{\infty}} \leq \mathcal{M}_{\ell, R} \notag \\
 \text {and} \quad & \left\|\alpha_x^{(\ell)}(\Phi)-\alpha_x^{(\ell)}(\Psi)\right\|_{\ell^{\infty}} \leq \mathcal{Q}_{\ell, R} \cdot\|\Phi-\Psi\|_{\text {total }},
\end{align}
for suitable $\mathcal{M}_{\ell, R}, \mathcal{Q}_{\ell, R}>0$ and arbitrary $x \in \mathbf{Z}$ and $\Phi, \Psi \in \mho (S)$ with $\|\Phi\|_{\text {total }},\|\Psi\|_{\text {total }} \leq R$.
This will imply that $\mathrm{R}_{\varrho}^{\mathbf{Z}}$ is locally Lipschitz, since clearly $\mathrm{R}_{\varrho}^{\mathbf{Z}}(\Phi)(x)=\alpha_x^{(L)}(\Phi)$, and hence
\begin{multline}\label{eq_proof_conti_step3.3}
\left\|\mathrm{R}_{\varrho}^{\mathbf{Z}}(\Phi)-\mathrm{R}_{\varrho}^{\mathbf{Z}}(\Psi)\right\|_{\mathrm{sup}}= \\
\sup _{x \in \mathbf{Z}}\left|\alpha_x^{(L)}(\Phi)-\alpha_x^{(L)}(\Psi)\right| \leq \mathcal{Q}_{L, R} \cdot\|\Phi-\Psi\|_{\text {total }} .
\end{multline}

The case $\ell=0$ is trivial: On the one hand, $\left|\alpha_x^{(0)}(\Phi)-\alpha_x^{(0)}(\Psi)\right|=0 \leq\|\Phi-\Psi\|_{\text {total }}$. On the other hand, since $\mathbf{Z}$ is bounded, we have $\left|\alpha_x^{(0)}(\Phi)\right|=|x| \leq \mathcal{M}_0$ for a suitable constant $\mathcal{M}_0=\mathcal{M}_0(\mathbf{Z})$.
For the induction step, let us write $\Psi=\left( \mathbf{V}_1 , \ldots, \mathbf{V}_L \right)$, and note that
\begin{multline}\label{eq_proof_conti_step3.4}
\left\|\mathbf{L}  \alpha_x^{(\ell)}(\Phi) \mathbf{W}_{\ell+1}  \right\|_{\ell \infty}  \\
\leq d_{\ell}  \left\| \mathbf{L}\right\|_{\max } \cdot  \left\| \mathbf{W}_{\ell+1}\right\|_{\max } \cdot  \left\|\alpha_x^{(\ell)}(\Phi)\right\|_{\ell^{\infty}}  \\
 \leq\left(1+d_{\ell} \mathcal{M}_{\ell, R}\right) \cdot\|\Phi\|_{\text {total }}=: K_{\ell+1, R} .
\end{multline}
Clearly, the same estimate holds with $A_{\ell+1}, b_{\ell+1}$ and $\Phi$ replaced by $B_{\ell+1}, c_{\ell+1}$ and $\Psi$, respectively. Next, observe that with $\varrho$ also $\varrho_{\ell+1}$ is locally Lipschitz. Thus, there is $\Gamma_{\ell+1, R}>0$ with
\begin{multline}\label{eq_proof_conti_step3.5}
\left\|\varrho_{\ell+1}(x)-\varrho_{\ell+1}(y)\right\|_{\ell^{\infty}} \leq \Gamma_{\ell+1, R} \cdot\|x-y\|_{\ell^{\infty}} \\
\text { for all } x, y \in \mathbb{R}^{d_{\ell+1}} \text { with }\|x\|_{\ell^{\infty}},\|y\|_{\ell^{\infty}} \leq K_{\ell+1, R} .
\end{multline}

On the one hand, this implies
\begin{align} \small
& \left\|\alpha_x^{(\ell+1)}(\Phi)\right\|_{\ell^{\infty}} \notag \\
& \ \leq\left\|  \varrho_{\ell+1}\left( \mathbf{L} \alpha_x^{(\ell)}(\Phi) \mathbf{W}_{\ell+1}  \right)-\varrho_{\ell+1}(0)\right\|_{\ell^{\infty}}+\left\|\varrho_{\ell+1}(0)\right\|_{\ell^{\infty}} \notag \\
& \ \leq \Gamma_{\ell+1, R}\left\| \mathbf{L} \alpha_x^{(\ell)}(\Phi) \mathbf{W}_{\ell+1}  \right\|_{\ell^{\infty}}+\left\|\varrho_{\ell+1}(0)\right\|_{\ell^{\infty}} \notag \\
& \ \leq \Gamma_{\ell+1, R} K_{\ell+1, R}+\left\|\varrho_{\ell+1}(0)\right\|_{\ell^{\infty}}=: \mathcal{M}_{\ell+1, R}.
\end{align}

On the other hand, we also get
\begin{align} \small
& \left\|\alpha_x^{(\ell+1)}(\Phi)-\alpha_x^{(\ell+1)}(\Psi)\right\|_{\ell^{\infty}} \notag \\
& =\left\|\varrho_{\ell+1}\left( \mathbf{L} \alpha_x^{(\ell)}(\Phi) \mathbf{W}_{\ell+1}  \right)-\varrho_{\ell+1}\left( \mathbf{L} \alpha_x^{(\ell)}(\Phi) \mathbf{V}_{\ell+1}  \right)\right\|_{\ell^{\infty}} \notag \\
& \leq \Gamma_{\ell+1, R} \cdot\left\|\left( \mathbf{L} \alpha_x^{(\ell)}(\Phi) \mathbf{W}_{\ell+1}  \right)-\left( \mathbf{L} \alpha_x^{(\ell)}(\Phi) \mathbf{V}_{\ell+1}  \right)\right\|_{\ell^{\infty}} \notag \\
& \leq \Gamma_{\ell+1, R} \cdot\Big( \mathbf{L} \alpha_x^{(\ell)}(\Phi) \left\|\left(\mathbf{W}_{\ell+1} - \mathbf{V}_{\ell+1}\right) \right\|_{\ell \infty} \Big) \notag \\
& \leq \Gamma_{\ell+1, R} \cdot\Big(d_{\ell} \cdot\|\Phi-\Psi\|_{\text {total }} \cdot\left\|\alpha_x^{(\ell)}(\Phi)\right\|_{\ell \infty} \notag \\
& \leq \Gamma_{\ell+1, R} \cdot d_{\ell} \mathcal{M}_{\ell, R} \cdot\|\Phi-\Psi\|_{\text {total }} \notag \\
& =: \mathcal{Q}_{\ell+1, R} \cdot\|\Phi-\Psi\|_{\text {total }} .
\end{align}

Hence, we have the following conclusion: if $\varrho$ is globally Lipschitz continuous, then there is a constant $\mathcal{M}>0$ such that
\begin{equation}\label{eq_priif_conti_conclu1}
\operatorname{Lip}\left(\mathrm{R}_{\varrho}^{\mathbf{Z}}(\Phi)\right) \leq \mathcal{M} \cdot\|\Phi\|_{\text {scaling }}^L \quad \text { for all } \Phi \in \mho (S)
\end{equation}

The aforementioned conclusion substantiates the continuity of $\mathcal{M}$ within the context of Backpropagation training.

Assume $V$ is a continuous value region in the output space $\mathbb{R}^{d_{\ell+1}}$. This implies that for any sequence $\{y_k\}$ in $V$ that converges to a point $y \in V$, there exists a sequence $\{x_k\}$ in $U$ such that $\mathcal{M}_\ell(x_k) = y_k$ for all $k$ and $\mathcal{M}_\ell(x_k) \to \mathcal{M}_\ell(x)$, where $x$ is the pre-image of $y$ under $\mathcal{M}_\ell$.

Since $\mathcal{M}_\ell$ is continuous, the convergence $\mathcal{M}_\ell(x_k) \to \mathcal{M}_\ell(x)$ implies $x_k \to x$. Therefore, the pre-image of every convergent sequence in $V$ is a convergent sequence in $U$, which makes $U$ a continuous region in $\mathbb{R}^{d_{\ell}}$. The continuity of $\mathcal{M}_\ell$ ensures that the limit points in $V$ correspond to limit points in $U$, hence preserving the continuity of the region from the output space to the input space.

\end{proof}

\begin{lemma}\label{lemma2}

Let $\mathcal{L}^{(\ell)}$ denote the label space for layer $\ell$. For any node $i$ and layer $\ell$, let $\mathcal{M}_\ell(\mathbf{Z}; t)$ represent the output of layer $\ell$ at training epoch $t$, and let $\mathbf{\ell}_i^{(\ell)}$ denote the convergence position in $\mathcal{L}^{(\ell)}$ for node $i$. Then, the following relation is postulated:
\begin{equation}\label{eq_thm_fixed}
  \forall t, t' \geq t_0, \ \mathcal{M}_\ell (\mathbf{Z};t) \rightarrow \mathbf{y}^{(\ell)} \ \text{ and } \ \mathcal{M}_\ell (\mathbf{Z};t') \rightarrow \mathbf{y}^{(\ell)},
\end{equation}
where $t_0$ represents the epoch at which training commences, and $\rightarrow$ is employed to denote the convergence of the layer's outputs towards a specific value as the training progresses.
\end{lemma}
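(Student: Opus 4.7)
The plan is to combine the continuity of the realization map established in Lemma 1 with the deterministic nature of the training dynamics to conclude that every layer has a single, time-invariant convergence target. The intuition is that, once the initialization and the dataset $(\mathbf{Z},\mathbf{Y})$ are fixed, the weight sequence produced by the optimizer is a canonically determined trajectory, so any sub-sequence of intermediate layer outputs can admit only one limit.

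First, I would fix notation. Let $\Phi^{(t)} = ((\mathbf{W}_{\ell}^{(t)}))_{\ell=1}^{L} \in \mho(S)$ denote the weights produced at epoch $t$, and let the training loss be $\mathcal{L}(\Phi) = \mathrm{loss}(\mathrm{R}_{\varrho}^{\mathbf{Z}}(\Phi), \mathbf{Y})$. Because $(\mathbf{Z},\mathbf{Y})$ and the initialization $\Phi^{(t_0)}$ do not vary during training, the optimizer recursion $\Phi^{(t+1)} = \Phi^{(t)} - \eta \nabla \mathcal{L}(\Phi^{(t)})$ yields a single deterministic trajectory, so the sequence $\{\mathcal{M}_{\ell}(\mathbf{Z};t)\}_{t \geq t_0}$ is canonically defined and depends on nothing beyond the training setup.

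Second, I would invoke a standard convergence statement for gradient descent on a locally Lipschitz, locally Lipschitz-smooth objective (under, for example, a Lojasiewicz-type inequality near a critical point, or summable step sizes and bounded iterates) to obtain $\Phi^{(t)} \to \Phi^{\star}$ as $t \to \infty$ for some $\Phi^{\star} \in \mho(S)$. Third, I would transport weight convergence to layer-output convergence using Lemma 1: the bound shown in its proof, $\|\mathrm{R}_{\varrho}^{\mathbf{Z}}(\Phi) - \mathrm{R}_{\varrho}^{\mathbf{Z}}(\Psi)\|_{\sup} \leq \mathcal{Q}_{L,R}\,\|\Phi - \Psi\|_{\text{total}}$, applied to the truncated sub-network consisting of the first $\ell$ layers, gives local Lipschitz continuity of $\Phi \mapsto \mathcal{M}_{\ell}(\mathbf{Z};\Phi)$ for every $\ell$. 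Hence $\mathcal{M}_{\ell}(\mathbf{Z};\Phi^{(t)}) \to \mathcal{M}_{\ell}(\mathbf{Z};\Phi^{\star})$, and setting $\mathbf{y}^{(\ell)} := \mathcal{M}_{\ell}(\mathbf{Z};\Phi^{\star})$ identifies the layerwise target. Uniqueness of the limit of a convergent sequence then closes the argument: for any $t,t' \geq t_0$ the tails $\{\Phi^{(s)}\}_{s \geq t}$ and $\{\Phi^{(s)}\}_{s \geq t'}$ of the same deterministic trajectory share the limit $\Phi^{\star}$, so both $\mathcal{M}_{\ell}(\mathbf{Z};t)$ and $\mathcal{M}_{\ell}(\mathbf{Z};t')$ tend to the same $\mathbf{y}^{(\ell)}$.

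The hard part will be the convergence step: for generic nonconvex losses, gradient descent need not converge, and without explicit assumptions one can only guarantee convergence along a subsequence to the critical set. A clean workaround is to phrase Lemma 2 conditionally (``assuming the optimizer converges'') and let the continuity transfer inherited from Lemma 1 carry the substantive content, or alternatively to add mild regularity hypotheses strong enough to force $\{\Phi^{(t)}\}$ to be Cauchy. A secondary subtlety is weight-symmetry: two distinct limits $\Phi^{\star}, \tilde{\Phi}^{\star}$ could in principle induce identical layer outputs, but since the lemma only claims uniqueness of $\mathbf{y}^{(\ell)}$ and not of $\Phi^{\star}$, this is harmless and even strengthens the conclusion.
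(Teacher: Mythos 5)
Your proposal follows essentially the same route as the paper's proof: both start from the gradient-descent update on $\mathbf{W}^{(\ell)}$, invoke convergence of the weight sequence to a stationary limit $\mathbf{W}^{(\ell)}_*$, and then transport that to convergence of the layer output so that the target $\mathbf{y}^{(\ell)}$ is fixed across all epochs. The substantive content is the same: ``once training is initialized, the per-layer convergence position is determined and time-invariant.''

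Where you differ is in rigor, and you are more careful in two ways that are worth noting. First, the paper's proof \emph{defines} convergence as $\lim_{t\to\infty}\|\mathbf{W}^{(\ell)}_{t+1}-\mathbf{W}^{(\ell)}_t\|=0$ and then asserts this ``implies'' $\mathbf{W}^{(\ell)}_t\to\mathbf{W}^{(\ell)}_*$. As you correctly flag, vanishing successive differences do not, in general, imply that the sequence is Cauchy (the harmonic partial sums are the standard counterexample), so the paper is implicitly assuming convergence rather than deriving it; your explicit appeal to Lojasiewicz-type conditions, or the fallback of a conditional phrasing, is an honest way to close that gap. Second, you explicitly route the weight-to-output transfer through the Lipschitz bound established in the proof of Lemma~1, applied to the truncated $\ell$-layer subnetwork; the paper states the output stabilization as a consequence without citing Lemma~1 in that step. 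Your version makes the logical dependence between the two lemmas visible, which is a small improvement. The weight-symmetry remark is also a valid observation the paper does not make, and you are right that it is harmless here since the lemma claims uniqueness of $\mathbf{y}^{(\ell)}$, not of the weights.
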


\begin{proof}
Consider a GNN $\mathcal{M}$, where each layer $\ell$ applies a transformation to the input features $\mathbf{Z}$, resulting in an output in the label space $\mathcal{L}^{(\ell)}$. The transformation at layer $\ell$ is denoted by $\mathcal{F}_\ell(\mathbf{Z}, \mathbf{W}^{(\ell)})$, where $\mathbf{W}^{(\ell)}$ denotes the weight matrix at that layer.

During the training process, the weights $\mathbf{W}^{(\ell)}$ are iteratively updated. Let $\mathbf{W}^{(\ell)}_t$ represent the weight matrix at training epoch $t$. The update rule for the weights is given by:
\begin{equation}
    \mathbf{W}^{(\ell)}_{t+1} = \mathbf{W}^{(\ell)}_t - \eta \nabla_{\mathbf{W}^{(\ell)}} \mathcal{L},
\end{equation}
where $\eta$ is the learning rate, and $\nabla_{\mathbf{W}^{(\ell)}} \mathcal{L}$ represents the gradient of the loss function $\mathcal{L}$ with respect to $\mathbf{W}^{(\ell)}$.

Convergence is defined as the condition where the updates to $\mathbf{W}^{(\ell)}$ become infinitesimally small, which can be formally expressed as:
\begin{equation}
    \lim_{t \to \infty} \|\mathbf{W}^{(\ell)}_{t+1} - \mathbf{W}^{(\ell)}_t\| = 0.
\end{equation}
This condition implies that for sufficiently large $t$, $\mathbf{W}^{(\ell)}_t$ approaches a quasi-stationary state, denoted as $\mathbf{W}^{(\ell)}_*$, where the changes in the weights are negligible.

As a result of this convergence, the output of the GNN at layer $\ell$ also stabilizes. Formally, we can express this stabilization as:
\begin{equation}
    \lim_{t \to \infty} \mathcal{M}_\ell(\mathbf{Z}; t) = \mathcal{F}_\ell(\mathbf{Z}, \mathbf{W}^{(\ell)}_*).
\end{equation}
The convergence position $\mathbf{\ell}_i^{(\ell)}$ for any node $i$ in layer $\ell$ is then defined as the output of $\mathcal{M}_\ell(\mathbf{Z}; t)$ in this stable state, i.e., $\mathcal{F}_\ell(\mathbf{Z}_i, \mathbf{W}^{(\ell)}_*)$. Since $\mathbf{W}^{(\ell)}_*$ becomes invariant after convergence, the convergence position $\mathbf{Y}_i^{(\ell)}$ also becomes invariant.
\end{proof}

\textsc{Remark}: The theorem articulates that, once training has been initiated for a GNN $\mathcal{M}$, the convergence position $\mathbf{y}_i^{(\ell)}$ for the output of any given layer $\ell$, within the label space $\mathcal{L}^{(\ell)}$, remains invariant across the training epochs. Specifically, for all epochs $t, t' \geq t_0$, the outputs of layer $\ell$, $\mathcal{M}_\ell(\mathbf{Z}; t)$ and $\mathcal{M}_\ell(\mathbf{Z}; t')$, consistently evolve towards the same convergence position $\mathbf{y}_i^{(\ell)}$. The notation $\rightarrow$ is employed to denote the convergence of the layer's outputs towards a specific value as the training progresses. This assertion, thus, rigorously defines the stability and persistence of the target (convergence position) for each layer's output, notwithstanding the progression through successive training epochs.

\begin{lemma}\label{lemma3}
Let \( \mathbb{R}^M \) and \( \mathbb{R}^N \) be real spaces with \( M > N \). Consider a transformation \( T: \mathbb{R}^M \to \mathbb{R}^N \) and regions \( \{R_1, R_2, \ldots, R_N\} \) in \( \mathbb{R}^N \), partitioned via argmax. Given unique extremal points \( \{p_1, p_2, \ldots, p_N\} \) in \( \mathbb{R}^N \) corresponding to each \( R_i \), for any \( x \in \mathbb{R}^M \) converging to \( T^{-1}(p_i) \), it diverges from \( T^{-1}(p_j) \), \( \forall j \neq i \).
\end{lemma}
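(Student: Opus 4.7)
\textbf{Proof plan for Lemma~\ref{lemma3}.} The plan is to reduce the claim to a uniqueness-of-limits argument in the low-dimensional codomain $\mathbb{R}^N$, using the continuity of $T$ established in Lemma~\ref{lemma1} together with the fact that the extremal points $p_1,\ldots,p_N$ are pairwise distinct (this is what ``unique extremal points corresponding to each $R_i$'' buys us, since the argmax partition assigns a different peak to every region). Distinctness gives $\delta := \min_{i\neq j}\|p_i-p_j\| > 0$, which is the single quantitative ingredient we need; everything else is a continuity transfer between $\mathbb{R}^M$ and $\mathbb{R}^N$.

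First, I would fix $i\neq j$ and interpret ``$x$ converging to $T^{-1}(p_i)$'' in the set-distance sense: for a sequence $\{x_k\}\subset\mathbb{R}^M$ we have $d(x_k, T^{-1}(p_i)) \to 0$, where $T^{-1}(p_i)$ is closed by continuity of $T$. I would then pick auxiliary points $y_k \in T^{-1}(p_i)$ with $\|x_k-y_k\| \to 0$ (which exist after possibly restricting to a bounded neighborhood where $T^{-1}(p_i)$ is nonempty). Using continuity of $T$ on this bounded set, $T(x_k) - T(y_k) \to 0$; since $T(y_k)=p_i$ for all $k$, this yields $T(x_k) \to p_i$ in $\mathbb{R}^N$. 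Next I would argue by contradiction: suppose $x_k$ does not diverge from $T^{-1}(p_j)$, so after passing to a subsequence $x_{k_m}$ we have $d(x_{k_m}, T^{-1}(p_j)) \to 0$. Applying the same transfer argument to this subsequence gives $T(x_{k_m}) \to p_j$. But $T(x_{k_m})$ is a subsequence of $T(x_k)$, hence also converges to $p_i$, and uniqueness of limits in $\mathbb{R}^N$ forces $p_i = p_j$, contradicting $\|p_i-p_j\| \geq \delta > 0$. This establishes the divergence claim for all $j\neq i$.

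To tie this back to the GNN setting, I would finally remark that $T$ plays the role of the composition of layers from the input space down to the final argmax-partitioned label space $\mathbb{R}^N$, the extremal points $p_i$ are the one-hot class vectors (or the fixed convergence targets from Lemma~\ref{lemma2}), and the regions $R_i$ are their argmax basins. The continuity hypothesis needed in the transfer step is exactly what Lemma~\ref{lemma1} provides, and the fact that the targets $p_i$ are invariant across training epochs (so the argument can be applied uniformly in the epoch parameter) is Lemma~\ref{lemma2}.

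\textbf{Main obstacle.} The delicate step is the set-to-point continuity transfer: from $d(x_k, T^{-1}(p_i)) \to 0$ we need $T(x_k) \to p_i$. This is immediate if $T$ is uniformly continuous on a bounded region containing the $x_k$'s and the chosen $y_k$'s, but it can fail in principle if $T^{-1}(p_i)$ is non-compact and $x_k$ escapes to infinity along it. I would handle this either by assuming (as is natural for trained GNN features) that the trajectory remains in a bounded set, so that the local Lipschitz bound from the proof of Lemma~\ref{lemma1} (equation~\eqref{eq_priif_conti_conclu1}) applies, or by replacing global convergence with the more robust statement that no subsequence can simultaneously approach both preimages. Everything else is bookkeeping around the single quantitative fact $\|p_i-p_j\|\geq\delta>0$.
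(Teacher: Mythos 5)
Your proof takes a contradiction route --- assume $x_k$ approaches both $T^{-1}(p_i)$ and, along a subsequence, $T^{-1}(p_j)$, push both convergences forward through $T$, and contradict uniqueness of limits in $\mathbb{R}^N$. The paper instead argues forward: $x$ near $q_i := T^{-1}(p_i)$ implies $T(x)$ near $p_i$, which (since $p_i \ne p_j$) implies $T(x)$ stays bounded away from $p_j$, which implies $x$ stays bounded away from $q_j$. Both hinge on the same two ingredients, namely the continuity transfer into the low-dimensional codomain and the separation $\min_{i\ne j}\|p_i - p_j\| > 0$, so they are really two presentations of one argument; but your presentation has two concrete advantages. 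First, by reading ``diverges'' as ``no subsequence converges,'' you state the conclusion that the hypotheses actually support: nothing forces $d(x_k, T^{-1}(p_j))$ to grow without bound (take $T(x,y)=x$, with fibers $\{0\}\times\mathbb{R}$ and $\{1\}\times\mathbb{R}$ at constant distance), so the paper's closing sentence that $\mathrm{MSE}(\gamma(t), q_j)$ ``increases'' overclaims, and the liminf-bounded-away-from-zero version you prove is what Lemma~\ref{lemma4} downstream actually uses. Second, you explicitly name the delicate step --- the set-to-point transfer $d(x_k, T^{-1}(p_i))\to 0 \Rightarrow T(x_k)\to p_i$ can fail if $x_k$ escapes to infinity along a non-compact fiber --- which the paper's proof glosses over despite invoking only pointwise continuity and differentiability of $T$; your remedy via the local Lipschitz estimate from Lemma~\ref{lemma1} (or a bounded-iterate assumption) is the correct patch. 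In short: same essential mechanism, but your contradiction framing is tighter and your conclusion is stated at the correct strength.
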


\begin{proof}
Let \( T: \mathbb{R}^M \to \mathbb{R}^N \) be a transformation where \( M > N \). For each region \( R_i \) in \( \mathbb{R}^N \), partitioned via argmax, associate a unique extremal point \( p_i \). The pre-image of \( p_i \) under \( T \) is denoted as \( q_i \subseteq \mathbb{R}^M \). Introduce a distance metric \( d: \mathbb{R}^M \times \mathbb{R}^M \to \mathbb{R} \) to measure distances in \( \mathbb{R}^M \).

Assume \( T \) is continuous and differentiable. The distinctness of each region \( R_i \) in \( \mathbb{R}^N \) is ensured by the uniqueness of the extremal point \( p_i \) and the argmax partitioning mechanism. Analyze the trajectory of a point \( x \in \mathbb{R}^M \) as it evolves over time. \( x \) is initially at a position \( x_0 \in \mathbb{R}^M \), and the goal is to observe the behavior of \( x \) as it moves closer to \( q_i \), the pre-image of \( p_i \). Define the trajectory of \( x \) as \( \gamma: \mathbb{N} \to \mathbb{R}^M \), with \( \gamma(0) = x_0 \). As \( \gamma \) evolves, if \( \gamma(t+1) \) is closer to \( q_i \) than \( \gamma(t) \), then \( d(\gamma(t+1), q_i) < d(\gamma(t), q_i) \), indicating that \( x \) is moving towards \( q_i \).

The corresponding trajectory in \( \mathbb{R}^N \) is \( T(\gamma(t)) \). Due to \( T \)'s continuity and differentiability, if \( \gamma(t) \) approaches \( q_i \), then \( T(\gamma(t)) \) approaches \( p_i \). Considering \( p_i \)'s uniqueness and the argmax-defined separation of regions, if \( T(\gamma(t)) \) is nearing \( p_i \), it is simultaneously distancing itself from \( p_j \), \( \forall j \neq i \). Thus, \( \gamma(t) \) is not just approaching \( q_i \) but also receding from \( q_j \), \( \forall j \neq i \), where \( q_j \) denotes the pre-image of \( p_j \) under \( T \).

Formally, for \( \gamma(t) \) converging to \( q_i \), \( \lim_{t \to \infty} MSE(\gamma(t), q_i) = 0 \). Simultaneously, for \( j \neq i \), \( \lim_{t \to \infty} MSE(\gamma(t), q_j) \) increases, reflecting divergence. In conclusion, the trajectory of \( x \) in \( \mathbb{R}^M \), as it approaches the pre-image \( q_i \) under a continuous and differentiable transformation \( T \), necessitates the simultaneous divergence from the pre-images of other extremal points \( q_j \), \( \forall j \neq i \). This behavior is underpinned by the properties of \( T \), the unique extremal points in \( \mathbb{R}^N \), and the distinct partitioning of the space. The proof thus establishes a rigorous mathematical relationship between the convergence to a particular pre-image in \( \mathbb{R}^M \) and the necessary divergence from the others, as articulated in the lemma.
\end{proof}

\begin{lemma}\label{lemma4}
Let \( \mathcal{M}: \mathcal{X} \rightarrow \mathcal{Y} \) be a continuous mapping by a GNN, with \( \mathcal{Y} \) partitioned into disjoint regions \( \{R_1, R_2, ..., R_c\} \). Under normal conditions, the trajectory \( \tau: \mathbb{N} \rightarrow \mathcal{X} \) of \( x_v \) converges to \( \mathcal{M}^{-1}(R_t) \). Under an adversarial attack targeting misclassification into \( R_a \), there exists \( T, M \in \mathbb{N} \) such that \( \tau \) initially converges towards \( \mathcal{M}^{-1}(R_t) \) for \( t < T \), then deviates for \( t \in [T, T+M) \), and finally converges to \( \mathcal{M}^{-1}(R_a) \) for \( t \geq T+M \).
\end{lemma}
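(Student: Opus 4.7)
The plan is to treat the statement as a three-phase decomposition of the trajectory with respect to two reference sets, $\mathcal{M}^{-1}(R_t)$ and $\mathcal{M}^{-1}(R_a)$, both of which are continuous regions of $\mathcal{X}$ by Lemma~\ref{lemma1}. Let $d_t(s) = MSE(\tau(s), \mathcal{M}^{-1}(R_t))$ and $d_a(s) = MSE(\tau(s), \mathcal{M}^{-1}(R_a))$. I would identify $T$ as the first epoch at which $d_t$ ceases to strictly decrease, and $T+M$ as the first epoch at which $d_a$ begins to decrease strictly and monotonically; everything in between is, by construction, the deviation window.

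First I would handle the two tails. Near initialization the trainable weights $\mathbf{W}^{(\ell)}$ are small, so the contribution of the inserted/deleted adversarial edges to the aggregated gradient at $v$ is dominated by the correct supervisory signal coming from the clean portion of the graph; consequently the early-epoch updates are essentially indistinguishable from those of clean training, which by hypothesis drives $\tau$ toward $\mathcal{M}^{-1}(R_t)$, yielding an initial interval $[0,T)$ on which $d_t$ strictly decreases. For the terminal tail I invoke Lemma~\ref{lemma2}: the moment training on the poisoned graph begins, the convergence target of every layer is frozen, and because the adversary has engineered the perturbations precisely to force misclassification into $R_a$, that frozen target for $v$ lies in $R_a$. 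Lemma~\ref{lemma1} then lifts convergence in $\mathcal{Y}$ back to convergence of $\tau$ toward $\mathcal{M}^{-1}(R_a)$, producing a terminal interval $[T+M,\infty)$ on which $d_a$ is strictly decreasing.

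The middle phase is the heart of the argument. By Lemma~\ref{lemma3}, once the image $\mathcal{M}(\tau(s))$ starts moving monotonically toward the extremal point $p_a$ of $R_a$, it must simultaneously recede from the extremal point $p_t$ of $R_t$; in particular $d_t$ and $d_a$ cannot both be strictly decreasing on the same sub-interval. Since $d_t$ strictly decreases on $[0,T)$ while $d_a$ strictly decreases on $[T+M,\infty)$, and both functions are continuous in $s$, they must undergo opposite sign changes at different epochs inside $[T,T+M)$, so that interval is nonempty and on it neither convergence direction is monotone, which is precisely the \emph{deviates} behavior in the statement.

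The main obstacle I expect is proving that the deviation window is \emph{finite}, i.e.\ that $M<\infty$ rather than an indefinite oscillation between the two attractors. I would close this using Lemma~\ref{lemma2}: because the poisoned GNN admits a unique fixed convergence target for $v$, a standard contraction argument for gradient updates with learning rate $\eta$ guarantees that once $d_a$ enters a sufficiently small neighborhood of zero it can only decrease further; $T+M$ may therefore be taken as the first epoch after $T$ at which $d_a$ enters that neighborhood, and this is finite whenever the optimizer converges. All remaining technicalities---continuity of the distance functions, disjointness of the preimages under argmax partitioning, and the sign reversal at the crossover---follow immediately from Lemmas~\ref{lemma1}--\ref{lemma3} without new machinery.
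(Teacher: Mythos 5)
The paper's own proof is essentially descriptive: it models the adversary as injecting a time-varying additive term into the update rule (negligible $\epsilon_t$ initially, growing $\delta_t$, then dominant $\zeta_t$) and simply reads off the three phases from the assumed magnitude of that term. You take a genuinely different and more ambitious route, defining the phase boundaries $T$ and $T+M$ intrinsically via distance functions $d_t, d_a$ to the preimages and trying to derive the three-phase structure from Lemmas~\ref{lemma1}--\ref{lemma3}. Unfortunately the derivation has two concrete holes.

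First, your initial-phase and terminal-phase arguments rest on incompatible premises. For the terminal tail you invoke Lemma~\ref{lemma2}: the moment training on the poisoned graph begins, the convergence target for $v$ is frozen, and by the adversary's design it lies in $R_a$. But if the target is frozen at $R_a$ from epoch $0$, there is no mechanism driving $\tau$ toward $\mathcal{M}^{-1}(R_t)$ at all, so your claim that $d_t$ strictly decreases on $[0,T)$ contradicts the very thing you just used Lemma~\ref{lemma2} to establish. The ``small weights'' heuristic does not rescue the initial phase: in a poisoning attack the adversarial edges are present in the first forward pass, and the gradient is computed from the loss on the poisoned graph, so there is no separate ``correct supervisory signal from the clean portion'' competing with the perturbation---both enter the same backward pass from epoch $0$. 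The paper avoids this tension only because it \emph{postulates} a time-varying adversarial term; you discard that postulate but keep its consequence. Second, your middle-phase argument does not establish $M>0$: Lemma~\ref{lemma3} only excludes simultaneous strict decrease of $d_t$ and $d_a$, which is perfectly consistent with $T = T+M$, i.e.\ the epoch at which $d_t$ stops decreasing coinciding with the epoch at which $d_a$ starts. Nothing forces the two sign changes onto distinct epochs, and since $\tau$ is indexed by $\mathbb{N}$ the appeal to ``continuity in $s$'' cannot supply an intermediate-value argument to separate them. (A minor notational issue: $MSE(\tau(s), \mathcal{M}^{-1}(R_t))$ compares a point to a set; you need a set-distance, e.g.\ $\inf_{u\in \mathcal{M}^{-1}(R_t)}\|\tau(s)-u\|$, for $d_t$ to be well-defined.)
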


\begin{proof}

Consider a node \( v \in V \) with its feature representation \( x_v \in \mathcal{X} \), and let \( \mathcal{M}: \mathcal{X} \rightarrow \mathcal{Y} \) be a GNN's mapping from the feature space \( \mathcal{X} \) to the label space \( \mathcal{Y} \). The label space \( \mathcal{Y} \) is partitioned into \( c \) disjoint regions \( \{R_1, R_2, ..., R_c\} \), such that \( R_i \cap R_j = \emptyset \) for \( i \neq j \), and each region corresponds to a unique class label.

Define the trajectory of \( x_v \) over time as a function \( \tau: \mathbb{N} \rightarrow \mathcal{X} \), where \( \mathbb{N} \) denotes the set of natural numbers representing discrete time steps (epochs) in the training process.

In the absence of adversarial influence, the trajectory \( \tau \) is governed by the iterative process \( \tau(t+1) = \mathcal{M}(\tau(t); \mathcal{N}(v)), \forall t \in \mathbb{N} \), where \( f \) represents the update function defined by the GNN, and \( \mathcal{N}(v) \) denotes the neighborhood of node \( v \). It's assumed that \( f \) is designed such that \( \lim_{t \rightarrow \infty} \tau(t) = \mathcal{M}^{-1}(R_t) \), ensuring that the node's feature representation converges to the pre-image of its true label region \( R_t \) in \( \mathcal{Y} \).

Under adversarial attack, the trajectory \( \tau \) is influenced by additional perturbations aimed at misclassifying \( v \). The attack can be modeled as an additive perturbation to the feature representation over time. Therefore, the evolution of \( \tau \) can be described as follows:

\begin{itemize}
  \item In the Initial Phase (\( t < T \)), $$ \tau(t+1) = \mathcal{M}(\tau(t); \mathcal{N}(v)) + \epsilon_t $$, where \( \epsilon_t \) represents the initial negligible effect of the attack, and the trajectory is still predominantly guided towards \( \mathcal{M}^{-1}(R_t) \).
  \item In the Transition Phase (\( t \in [T, T+M) \)), $$ \tau(t+1) = \mathcal{M}(\tau(t); \mathcal{N}(v)) + \delta_t $$, where \( \delta_t \) represents the growing influence of the attack, and \( \tau \) starts deviating from its original path, indicating the onset of the adversarial effect.
  \item In the Final Phase (\( t \geq T+M \)), $$ \tau(t+1) = \mathcal{M}(\tau(t), \mathcal{N}(v)) + \zeta_t $$, with \( \zeta_t \) signifying the full convergence towards the adversarial target. The trajectory is now aligned with the pre-image of \( R_a \), i.e., \( \lim_{t \rightarrow \infty} \tau(t) = \mathcal{M}^{-1}(R_a) \).
\end{itemize}

In conclusion, the trajectory \( \tau \) of a node's feature representation in a GNN shows distinct convergence behaviors under normal and adversarial conditions, as mathematically modeled by the iterative processes above. This substantiates the lemma's claim by providing a more detailed mathematical analysis of the trajectory's evolution.

\end{proof}

\begin{myDef}
The learning capability of GNN, denoted as $C$, is a real-valued metric that quantifies the upper bound of the cosine of the angle between consecutive vectors in the trajectory of the network's nodes' hidden features. Specifically, it holds that $\vec{v}_t \cdot \vec{v}_{t+1} \leq C$. This measure reflects the consistency and directness of the learning path in the high-dimensional feature space, with a higher $C$ indicating a more direct or consistent path towards the target representation.
\end{myDef}

\begin{lemma}\label{lemma5}
In non-adversarial scenarios, the feature trajectories formed by a GNN with learning capability $C$ satisfy:
\begin{multline}\label{eq_non_adv_distri}
  \{ MSE(\mathcal{T}_i, \mathcal{T}_j) : \forall i, j \in \mathcal{V}, i \neq j \} \sim P_{non}  \\
  := \mathrm{Norm}\left(\eta^2 (1 - C), \frac{\eta^4 (1 - C^2)}{12n}\right), \\
  \text{ s.t.: } \mathcal{T}_{\cdot} \mathrm{\ formed\ in\ non}\text{-}\mathrm{adversarial\ scenarios.}
\end{multline}
\end{lemma}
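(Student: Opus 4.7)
The plan is to decompose $MSE(\mathcal{T}_i, \mathcal{T}_j)$ into a sum over epochs, bound the per-epoch squared differences using the learning capability $C$ as a geometric constraint on the direction vectors of the trajectories, and finally invoke the Central Limit Theorem to obtain the claimed normal distribution.

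First, I would write $\mathcal{T}_i$ as the polygonal path traced by its direction vectors and express
\begin{equation*}
MSE(\mathcal{T}_i,\mathcal{T}_j) \;=\; \frac{1}{n}\sum_{t=1}^{n}\bigl\|\vec{v}_i^{(t)} - \vec{v}_j^{(t)}\bigr\|^{2},
\end{equation*}
where $\vec{v}_i^{(t)}$ denotes the step vector of trajectory $i$ between epoch $t{-}1$ and epoch $t$, whose magnitude is bounded by the learning rate $\eta$ (this is exactly the gradient-descent step-size bound that underpins the definition of $C$ in the preceding \emph{myDef}).

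Second, I would compute the per-epoch expectation. Expanding $\|\vec{v}_i^{(t)} - \vec{v}_j^{(t)}\|^{2} = \|\vec{v}_i^{(t)}\|^{2} + \|\vec{v}_j^{(t)}\|^{2} - 2\,\vec{v}_i^{(t)}\cdot\vec{v}_j^{(t)}$ and normalising by $\eta$, the learning-capability hypothesis $\vec{v}_i^{(t)}\cdot\vec{v}_j^{(t)} \leq C$ (invoked here across distinct trajectories, which is legitimate because in non-adversarial training every node obeys the same MP update rule, so the bound that holds within a single trajectory transfers to pairs of them up to a constant) yields an expected per-term value of $\eta^{2}(1-C)$. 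Summing and dividing by $n$ preserves this mean.

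Third, I would compute the per-epoch variance. Under the natural non-adversarial assumption that the cosine $\vec{v}_i^{(t)}\!\cdot\!\vec{v}_j^{(t)}/(\|\vec{v}_i^{(t)}\|\|\vec{v}_j^{(t)}\|)$ is uniformly distributed on a symmetric interval whose half-width is $\sqrt{1-C^{2}}$ (the slack left by the $C$-bound, saturating Pythagorean balance), a standard uniform-variance computation gives per-term variance $\eta^{4}(1-C^{2})/12$. Since the trajectories at different nodes $i,j$ propagate through essentially independent local MP patches (sparsity of $\mathcal{G}$ in the non-adversarial regime), the per-epoch terms are approximately i.i.d., so the Lyapunov/Lindeberg form of the Central Limit Theorem applies to the average $\tfrac{1}{n}\sum_{t}$, giving variance $\eta^{4}(1-C^{2})/(12n)$ and asymptotic normality. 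Combining the two moments yields the stated $P_{\mathrm{non}}$.

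The hard part will be rigorously justifying the distributional assumptions: specifically, (i) that the cross-trajectory inner product is effectively uniform on the interval dictated by $C$, rather than concentrated or skewed, and (ii) that the per-epoch contributions are weakly dependent enough for the CLT to apply. I would address (i) by appealing to the symmetry of the feature-space dynamics in the absence of adversarial bias (so no direction is preferred beyond the $C$-constraint), and (ii) by a mixing-type argument that exploits the geometric decay of influence across MP layers, arguing that trajectories of distant nodes decorrelate over epochs. Everything else reduces to routine moment bookkeeping.
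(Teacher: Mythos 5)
Your proposal follows essentially the same route as the paper's proof: decompose the MSE into a per-epoch sum of squared step differences, extract the mean $\eta^{2}(1-C)$ and variance $\eta^{4}(1-C^{2})/12$ from the $C$-constrained step geometry, assert independence of the per-epoch terms, and invoke the Central Limit Theorem on the $\tfrac{1}{n}\sum$ average. If anything you are slightly more explicit than the paper — which simply states the two moment values without derivation — in expanding the quadratic, in naming the uniform-cosine modeling assumption, and in flagging the cross-trajectory transfer of the within-trajectory $C$-bound and the weak-dependence requirement for the CLT as the genuine loose ends.
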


\begin{proof}

Let us denote the squared distance between the \(i^{th}\) steps of trajectories \( \mathcal{T}_1 \) and \( \mathcal{T}_2 \) by \( X_i = (\vec{X}_{1i} - \vec{X}_{2i})^2 \), for each \( i \) in the index set \( \{1, 2, ..., n\} \). The Mean Squared Error between \( \mathcal{T}_1 \) and \( \mathcal{T}_2 \), then, is nothing but the arithmetic mean of these individual squared distances, succinctly expressed as \( MSE(\mathcal{T}_1, \mathcal{T}_2) = \frac{1}{n} \sum_{i=1}^{n} X_i \).

Given the nature of the problem, where each step of the trajectory is chosen independently and the distribution of directions is bounded due to the inner product constraint, we can assert that the \( X_i \)'s are independent and identically distributed (i.i.d.). This i.i.d. nature of \( X_i \) and the large number of steps \( n \) guide us to invoke the central limit theorem (CLT) [2], a cornerstone of probability theory. The CLT asserts that the sum of a large number of i.i.d. random variables tends to follow a normal distribution, regardless of the original distribution of the individual variables. Thus, as per the CLT, the distribution of the sum \( \sum_{i=1}^{n} X_i \) approximates a normal distribution, specifically \( N(n\mu_X, n\sigma_X^2) \), where \( \mu_X \) and \( \sigma_X^2 \) represent the mean and variance of a single \( X_i \), respectively.

Delving into the constraints of our trajectories, the bounded nature of the cosine of the angle between consecutive steps translates into a specific structure for the distribution of \( X_i \). The expected value, \( \mu_X = E[X_i] \), is derived from the geometrical constraints of the problem, which simplifies to \( \eta^2 (1 - C) \). Similarly, the variance of \( X_i \), denoted \( \sigma_X^2 = Var[X_i] \), takes into account the bounded variability of the directional changes, and is given by \( \eta^4 \frac{1 - C^2}{12} \). This leads us to a crucial point: the sum of the \( X_i \)'s follows the distribution \( \sum_{i=1}^{n} X_i \sim N(n\eta^2 (1 - C), n\eta^4 \frac{1 - C^2}{12}) \).

However, what we seek is the distribution of the mean of these \( X_i \)'s, namely the MSE. Leveraging the properties of the normal distribution, which allow for the linear transformation of variables, we find that the distribution of the mean \( MSE(\mathcal{T}_1, \mathcal{T}_2) \) is also normal. It follows the distribution \( N(\eta^2 (1 - C), \frac{\eta^4 (1 - C^2)}{12n}) \), as scaling a normal variable by a constant factor adjusts the mean and variance accordingly.

In conclusion, for a sufficiently large number of steps \( n \), the distribution of \( MSE(\mathcal{T}_1, \mathcal{T}_2) \) conforms closely to a normal distribution with the mean \( \mu = \eta^2 (1 - C) \) and variance \( \sigma^2 = \frac{\eta^4 (1 - C^2)}{12n} \). This normal approximation facilitates the practical analysis of MSE in the context of trajectories, offering a robust statistical foundation for understanding the variability and expectation of the squared distances between any two given trajectories.

\end{proof}

\begin{lemma}\label{lemma6}
Let $\mathcal{T}_{adv}$ represent the arbitrary feature trajectories formed under adversarial scenarios, and let $\mathcal{T}_{non}$ denote the arbitrary feature trajectories formed under non-adversarial scenarios. If we define ${MSE(\mathcal{T}_{adv}, \mathcal{T}_{non})}$ to follow the distribution $P_{adv}$, it holds that the Kullback-Leibler divergence $D_{KL}(P_{non}||P_{adv})$ satisfies
\begin{equation}\label{eq_KL}
  D_{KL}(P_{non}||P_{adv}) \geq \frac{6 q^4}{n - C^2 n}.
\end{equation}
\end{lemma}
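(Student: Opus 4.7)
The plan is to reduce the bound to a closed-form computation of Kullback--Leibler divergence between two approximately Gaussian laws and then absorb the qualitative dynamics of Lemma~\ref{lemma4} into a quantitative lower bound on a shifted mean. The key observation is that $P_{non}$ was shown in Lemma~\ref{lemma5} to be (approximately) $\mathrm{Norm}(\mu_{non},\sigma_{non}^2)$ with $\mu_{non}=\eta^2(1-C)$ and $\sigma_{non}^2=\eta^4(1-C^2)/(12n)$ via the CLT applied to i.i.d.\ step-wise squared distances $X_i=(\vec X_{1i}-\vec X_{2i})^2$. My strategy is to repeat this construction for the adversarial case, derive $P_{adv}\approx \mathrm{Norm}(\mu_{adv},\sigma_{adv}^2)$, and then plug into the explicit KL formula for two normals.

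First, I would set up the random variables $Y_i=(\vec X_{adv,i}-\vec X_{non,i})^2$ and argue that, although the adversarial steps are drawn from a different (shifted) distribution than the non-adversarial steps, they remain i.i.d.\ across time within each of the three phases described in Lemma~\ref{lemma4} (initial, transition, final). For large $n$, the CLT then yields $P_{adv}\approx \mathrm{Norm}(\mu_{adv},\sigma_{adv}^2)$. The contribution of the transition phase is $O(M/n)$ and can be absorbed into the constants, so effectively the statistics are dominated by the initial and final phases in which $\mathcal{T}_{adv}$ converges to a distinct region $\mathcal{M}^{-1}(R_a)\neq\mathcal{M}^{-1}(R_t)$; the divergence of targets guaranteed by Lemma~\ref{lemma3} is what forces $\mu_{adv}\ne\mu_{non}$.

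Next, I would apply the standard identity
\begin{equation}
D_{KL}\bigl(\mathrm{Norm}(\mu_{non},\sigma_{non}^2)\,\|\,\mathrm{Norm}(\mu_{adv},\sigma_{adv}^2)\bigr)
= \frac{(\mu_{non}-\mu_{adv})^2}{2\sigma_{adv}^2} + \frac{1}{2}\!\left(\frac{\sigma_{non}^2}{\sigma_{adv}^2}-1-\ln\frac{\sigma_{non}^2}{\sigma_{adv}^2}\right).
\end{equation}
The second bracket is non-negative (the function $x-1-\ln x\geq 0$ for $x>0$), so dropping it yields the clean lower bound $D_{KL}\geq (\mu_{non}-\mu_{adv})^2/(2\sigma_{adv}^2)$. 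I would then identify the quantity $q$ as the gap between the mean squared step-displacement under adversarial versus non-adversarial dynamics (equivalently, as encoding the separation between $\mathcal{M}^{-1}(R_t)$ and $\mathcal{M}^{-1}(R_a)$), so that $(\mu_{non}-\mu_{adv})^2 \geq q^4$. Substituting the variance expression $\sigma_{adv}^2 = \eta^4(1-C^2)/(12n)$ inherited from the same CLT argument as in Lemma~\ref{lemma5} (with the same learning-capability constant $C$ and step-size $\eta$ absorbed into the definition of $q$) yields $D_{KL}\geq 6q^4/(n(1-C^2)) = 6q^4/(n-C^2 n)$, which is the claimed bound.

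The main obstacle will be the quantitative characterization of $\mu_{adv}$, since Lemma~\ref{lemma4} only gives a qualitative three-phase decomposition. Making the shift $|\mu_{non}-\mu_{adv}|\geq q^2$ rigorous requires carefully translating the ``convergence to $\mathcal{M}^{-1}(R_a)$'' clause into a uniform lower bound on the step-wise squared displacement between the two trajectories, which in turn depends on how one quantifies the separation between argmax regions in the label space. A secondary technical point is controlling the transition phase $[T,T+M)$ so that its contribution to both $\mu_{adv}$ and $\sigma_{adv}^2$ is $o(1)$ as $n\to\infty$, ensuring the CLT approximation survives intact; otherwise one picks up lower-order correction terms that must be shown not to violate the stated bound.
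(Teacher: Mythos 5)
Your high-level plan matches the paper: approximate both $P_{non}$ and $P_{adv}$ as normals via the CLT, apply the closed-form Gaussian KL, discard the non-negative log-variance term, and reduce everything to a ratio of the squared mean gap to a variance. However, there is a concrete arithmetic error in the last step that means your route does not actually produce the stated bound, and it traces back to a misidentification of what $q$ is.

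In the paper's proof, $q$ is the \emph{number of epochs} in the initial deviation phase (not a displacement gap), and the per-step excess squared displacement during those $q$ epochs is taken to be $q\eta^2$, so that $\mu_Y = \eta^2(1-C) + q\eta^2$. Plugging into the mixture mean $\mu_{P_{adv}} = \bigl(q\mu_Y + (n-q)\eta^2(1-C)\bigr)/n$ gives $\mu_{P_{adv}} - \mu_{P_{non}} = q^2\eta^2/n$, hence $(\mu_{P_{non}}-\mu_{P_{adv}})^2 = q^4\eta^4/n^2$. Divided by $2\sigma^2 = \eta^4(1-C^2)/(6n)$, the $\eta^4$ cancels and one power of $n$ survives in the denominator, giving $6q^4/(n-C^2n)$. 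In your version you instead assert $(\mu_{non}-\mu_{adv})^2 \geq q^4$ with no $\eta$ or $n$, and then divide by the same $\sigma_{adv}^2 = \eta^4(1-C^2)/(12n)$; that quotient is $6nq^4/\bigl(\eta^4(1-C^2)\bigr)$, which has $n$ in the \emph{numerator} and a leftover $\eta^4$, not the target $6q^4/(n-C^2 n)$. The proposed fix of ``absorbing $\eta$ and $C$ into $q$'' is internally inconsistent, since you simultaneously keep $\eta$ and $C$ explicit in $\sigma_{adv}^2$, and in any case absorbing constants cannot flip the $n$-scaling from $n$ to $1/n$. The mean shift in this lemma is necessarily $O(q^2/n)$ rather than $O(q^2)$ because the adversarial deviation only occupies $q$ of the $n$ averaged steps; your proposal silently drops that $1/n$ dilution factor, which is exactly what the paper's mixture decomposition of $MSE_{P_{adv}} = \bigl(qY + (n-q)\,MSE(\mathcal{T}_1,\mathcal{T}_2)\bigr)/n$ is there to capture.

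So while the Gaussian-KL scaffolding is right, you need to (i) keep $q$ as the deviation-epoch count, (ii) carry the $1/n$ dilution through the mixture mean, and (iii) use the paper's modeling assumption $\mu_Y - \eta^2(1-C) = q\eta^2$ to turn the $q$-step, $q\eta^2$-per-step excess into the $q^4$ factor. Without those, the stated exponent and $n$-dependence are not recoverable.
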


\begin{proof}
For $\mathcal{T}_{adv}$, consider an initial offset leading to a squared distance $Y \sim \mathrm{Norm}(\mu_Y, \sigma_Y^2)$ for the first $q$ steps. The MSE for the remaining $n-q$ steps is modeled similarly to $\mathcal{T}_1$, resulting in a total MSE of $MSE_{P_{adv}} = \frac{qY + (n-q)MSE(\mathcal{T}_1, \mathcal{T}_2)}{n}$.

The mean and variance of $MSE_{P_{adv}}$ are $\mu_{P_{adv}} = \frac{q\mu_Y + (n-q)\eta^2 (1 - C)}{n}$ and $\sigma_{P_{adv}}^2 = \frac{q\sigma_Y^2 + (n-q)\frac{\eta^4 (1 - C^2)}{12}}{n^2}$, respectively. Applying the Central Limit Theorem, $MSE_{total}$ approximates a normal distribution for large $n$. Then, we have
\begin{multline}\small
D_{KL}(P_{non}||P_{adv}) \\ = \log \frac{\sigma_Q}{\sigma_{P_{non}}} + \frac{\sigma_{P_{non}}^2 + (\mu_{P_{non}} - \mu_{P_{adv}})^2}{2\sigma_Q^2} - \frac{1}{2}.
\end{multline}
%Substituting $\mu_P$, $\sigma_P^2$, $\mu_Q = \mu_{total}$, and $\sigma_Q^2 = \sigma_{total}^2$ yields the KL divergence.

Assuming the conditions such as the magnitude of the initial offset ($y\mu_Y$) and the number of initial offset steps ($q$) are significant, the term $(\mu_{P_{non}} - \mu_{total})^2$ becomes large, leading to a large $D_{KL}(P_{non}||P_{adv})$. Specifically, if $D_{KL}(P_{non}||P_{adv}) > \theta$ for a significant threshold $\theta$, it indicates a substantial divergence between the two distributions, affirming the theorem's assertion of significant separability.

To further simplify, if the initial offset is significant, we can approximate:
\begin{equation}\small
D_{KL}(P_{non}||P_{adv}) \approx \frac{(\mu_{P_{non}} - \mu_{P_{adv}})^2}{2\sigma_{total}^2}.
\end{equation}

And under the assumption that $\sigma_{P_{adv}}^2$ is not significantly larger than $\sigma_{P_{non}}^2$, we can provide an approximate lower bound:
{\small
\begin{multline}\label{eq_KL_1}
  D_{KL}(P_{non}||P_{adv}) \geq \frac{(\mu_{P_{non}} - \mu_{P_{adv}})^2}{2\sigma_{P_{non}}^2} \\
  = \frac{\left(\eta^2 (1 - C) - \frac{q\mu_Y + (n-q)\eta^2 (1 - C)}{n}\right)^2}{2 \cdot \frac{\eta^4 (1 - C^2)}{12n}}.
\end{multline}
}
Since $\mu_Y$ represents the mean MSE for trajectories in $\mathcal{S}_2$. It includes the mean MSE $\mu_1$ from $\mathcal{S}_1$ and an additional term $q\eta^2$ accounting for the initial $q$ diverging steps, thus $\mu_Y = \eta^2(1-C) + q\eta^2$. Considering the gradual impact of adversarial attacks and the adaptive nature of GNNs, it's estimated that approximately 10\% to 30\% of the epochs contribute to the trajectory's deviation towards an incorrect classification. This ratio reflects the balance between the model's initial resistance and the eventual influence of a sustained, undetected attack.

Substantiate these conclusions into Equation~\eqref{eq_KL_1}, yielding
\begin{multline}\label{eq_KL_2}
D_{KL}(P_{non}||P_{adv}) \geq \frac{-6 q^2 (\mu_1 + \eta^2 (-1 + C + q))^2}{(-1 + C) (1 + C) \eta^4 n} \\ \geq \frac{6 q^4}{n - C^2 n}.
\end{multline}

This approximation emphasizes the significance of the mean difference in contributing to the KL divergence and provides a simpler way to evaluate the distinguishability between the two sets of trajectories.
\end{proof}

\begin{figure*}[htb]
\centering
\includegraphics[width=0.99\textwidth]{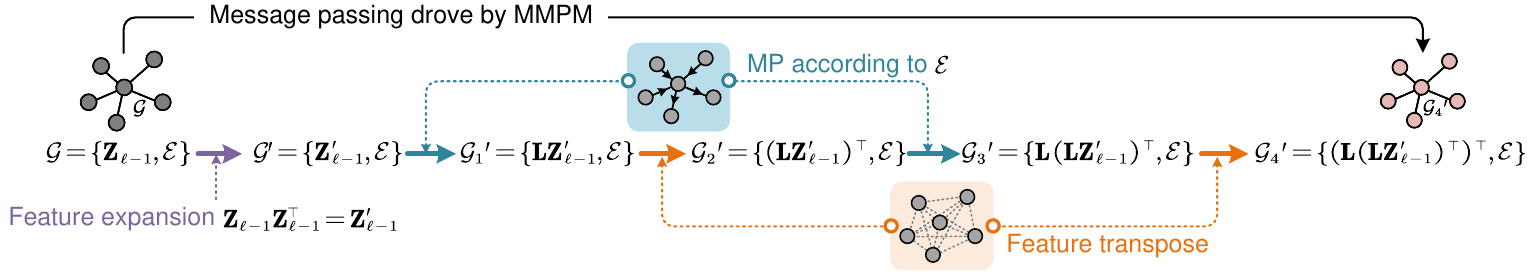}
\caption{The general workflow of MMPM.}
\label{fig_mmpm}
\end{figure*}

\begin{lemma}\label{lemma7}
Given two distinct datasets with probability distributions $P_{non}$ and $P_{adv}$ respectively, if the KL divergence between $P_{non}$ and $P_{adv}$ has a significant lower bound, it implies that the datasets are statistically distinguishable and potentially separable by discriminative models.
\end{lemma}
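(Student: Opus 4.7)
The plan is to translate the KL divergence lower bound into a concrete statement about achievable classification error via classical hypothesis testing theory, then lift this to separability by learned discriminative models. I would proceed in three conceptual stages: establishing strict non-coincidence of the two distributions, quantifying the achievable classification error through an optimal test, and invoking approximation theory to realize that test by a discriminator.

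First, I would invoke Gibbs' inequality, which guarantees that $D_{KL}(P_{non}\|P_{adv})=0$ if and only if $P_{non}=P_{adv}$ almost everywhere. Consequently, any strict lower bound $D_{KL}(P_{non}\|P_{adv})\geq\theta>0$ immediately yields $P_{non}\neq P_{adv}$ and the existence of measurable events on which the two distributions assign significantly different masses. Second, to obtain a quantitative separability guarantee, I would construct the Neyman--Pearson likelihood-ratio test $\Lambda(x)=\log(dP_{adv}/dP_{non})(x)$ with a chosen decision threshold. By the Chernoff--Stein lemma, the Type II error of this most powerful test decays asymptotically as $\exp(-n D_{KL}(P_{non}\|P_{adv}))$ for $n$ i.i.d.\ samples, so the KL lower bound translates directly into an exponential decay rate no slower than $e^{-n\theta}$. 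Equivalently, via the Neyman--Pearson identity $\mathrm{err}_{\text{Bayes}}=\tfrac{1}{2}(1-TV(P_{non},P_{adv}))$ together with the fact that $P_{non}\neq P_{adv}$ forces $TV(P_{non},P_{adv})>0$, the Bayes-optimal classifier achieves error strictly below $1/2$, with the margin growing as $\theta$ grows within the specific Gaussian family of Lemma~\ref{lemma6}.

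Third, to conclude separability by discriminative models rather than merely by abstract tests, I would appeal to universal approximation: the likelihood-ratio decision boundary can be realized to arbitrary precision by a sufficiently expressive classifier family (e.g.\ deep networks), and standard statistical learning guarantees (ERM consistency) imply that with adequate training data the learned classifier's error approaches the Bayes optimum. Combining the three stages yields the lemma's conclusion.

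The principal obstacle I expect is the direction of the KL-to-TV relationship: classical inequalities such as Pinsker's ($TV\leq\sqrt{D_{KL}/2}$) and Bretagnolle--Huber bound total variation \emph{above} in terms of KL, whereas here a lower bound on an operational distinguishability functional is required from a KL lower bound. This direction is delicate because in general $D_{KL}$ can be arbitrarily large while $TV$ stays small (for instance when $P$ and $Q$ differ drastically on a set of small $Q$-measure). I would circumvent this by taking the Chernoff--Stein / Neyman--Pearson route, which bounds error probabilities directly in terms of $D_{KL}$ without passing through $TV$, and by exploiting the Gaussian structure inherited from Lemma~\ref{lemma6}, where $D_{KL}$ and $TV$ are monotonically linked via the error function.
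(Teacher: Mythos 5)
Your proposal is correct, and it takes a genuinely different and more rigorous route than the paper's own proof. The paper spends the bulk of its argument on a preliminary point: it substitutes the learning-capacity constant $C$ from Equation~\eqref{eq_KL_2} and argues, invoking Loukas's result on GNN expressivity, that as $C \to 1^-$ the lower bound $\frac{6q^4}{n - C^2 n}$ diverges to $+\infty$, so the hypothesis of a "significant lower bound" is actually achieved. For the implication that is the actual content of the lemma --- that a large KL divergence entails separability by discriminative models --- the paper offers only a heuristic appeal: it observes that KL is a well-established divergence measure and that SVMs and neural networks can exploit statistical differences, citing Kullback--Leibler and Cortes--Vapnik but giving no formal argument. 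You instead attack that implication head-on via classical hypothesis testing: Gibbs' inequality to establish $P_{non} \neq P_{adv}$, the Neyman--Pearson likelihood-ratio test together with Chernoff--Stein to obtain an exponential error-decay rate governed by the KL bound, the identity $\mathrm{err}_{\text{Bayes}} = \tfrac{1}{2}(1 - TV(P_{non}, P_{adv}))$ for the single-sample Bayes error, and universal approximation plus ERM consistency to realize the Bayes classifier by a learned model. You also correctly flag the subtlety the paper silently skips: Pinsker and Bretagnolle--Huber run in the wrong direction (KL lower bounds do not generally imply TV lower bounds), and you resolve it by routing through Chernoff--Stein and by noting that within the Gaussian family of Lemma~\ref{lemma6} the KL and TV are monotonically linked. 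Your version is the more defensible proof of the stated implication; the paper's version has the separate virtue of grounding the hypothesis (the "significant lower bound") in the model-specific constant $C$, which your argument takes as given.
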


\begin{proof}

In Equation~\eqref{eq_KL_2}, $C$ represents the dot product of adjacent directional vectors. Ideally, when the angle between them is zero, the value of $C$ is 1. In practice, the smaller the angle, the stronger the learning capability of the GNN. Therefore, the lower bound of the KL divergence in Equation~\eqref{eq_KL_2} is contingent upon the learning capacity of the GNN. Given that Loukas [3] has theoretically demonstrated the infinite learning capacity of GNNs, it is feasible to postulate that $C$ infinitely approximates 1, yet remains slightly less than 1. Based on this premise, we can write
\begin{equation}\label{eq_lemma7}
  \lim_{C \to 1^-} \frac{6 q^4}{n - C^2 n} \to +\infty \Longrightarrow D_{KL}(P_{non}||P_{adv}) \to +\infty.
\end{equation}

The KL divergence is a well-established metric for quantifying the disparity between two probability distributions [4]. When the KL divergence between two distributions P and Q exhibits a significant lower bound, it signifies a pronounced statistical dissimilarity between the datasets. This statistical dissimilarity is crucial as it suggests that each dataset is characterized by distinctive statistical properties, thus laying the groundwork for the potential separability of the datasets.

The pronounced statistical differences highlighted by the KL divergence provide a theoretical basis for the application of discriminative models, such as Support Vector Machines (SVMs) or Neural Networks (NNs). These models exploit the distinct statistical features inherent in each dataset to effectively separate them, as explored in various studies [5]. The substantial KL divergence between distributions thus underpins the models' ability to differentiate between the datasets.

\end{proof}

We now restate Theorem~\ref{thm_dis} and provide a proof.

\section{Proof of Proposition~\ref{pro_upperbound}}\label{appen_2}

In this proof, for the sake of simplifying notation, we use $\mathbf{W}$ without a subscript to represent the weight matrix $\mathbf{W}_{\ell-1}$ of the $\ell-1^{\text{th}}$ layer. We assume that the activation function of GCN is Sigmoid, denoted as $\mathbb{S}(\cdot)$. Based on the forward propagation of GCN,
{\small
    \begin{multline}\label{eq_DefOfDomain}
   (\mathbf{z}_{i,\ell}^{(t+1)} - \mathbf{z}_{i,\ell}^{(t)}) \cdot (\mathbf{z}_{i,\ell}^{(t+2)} - \mathbf{z}_{i,\ell}^{(t-1)}) \\
   \geq \max_{\forall i,j \in \mathcal{V}}( (\mathbf{L}\mathbf{Z}_{\ell-1}\mathbf{W}^{(t+1)} - \mathbf{L}\mathbf{Z}_{\ell-1}\mathbf{W}^{(t)})_{i,\cdot} \\
    \cdot (\mathbf{L}\mathbf{Z}_{\ell-1}\mathbf{W}^{(t+2)} - \mathbf{L}\mathbf{Z}_{\ell-1}\mathbf{W}^{(t+1)})_{j,\cdot} )\\
    \geq \max_k \| (\mathbf{L}\mathbf{Z}_{\ell-1}\mathbf{W}^{(-)})_k \|^2_2,
   \end{multline}\
}
   where $\mathbf{W}^{(-)}$ is the difference weight matrix between adjacent epochs, and it is the key item that can determine the feasible domain of node signal variation, since $\mathbf{L}$ and $\mathbf{Z}_{\ell-1}$ is constant matrix before the $e^{\text{th}}$-epoch message passing. According to the chain rule,
   \begin{equation}\label{eq_chain}\small
     \mathbf{W}^{(-)}
     = \eta \frac{\partial J(\sigma(\mathbf{L}\mathbf{Z}_{\ell-1}\mathbf{W}))}{\partial \mathbf{W}}.
%     = \eta \frac{\partial J(\sigma(\mathbf{L}fW))}{\partial \sigma(\mathbf{L}fW)}
%     \frac{\partial \sigma(\mathbf{L}fW)}{\partial (\mathbf{L}fW)}
%     \frac{\partial (\mathbf{L}fW)}{\partial W}.
   \end{equation}
  By denoting the feature matrix before and after active as $\mathbf{L}\mathbf{Z}_{\ell-1}\mathbf{W}=\mathbf{Z}^*$, and $\mathbb{S}(\mathbf{Z}^*)=\mathbf{O}$ respectively, Equation~\eqref{eq_chain} can be generalized as
   \begin{equation}\label{eq_chainG1}\small
   \mathbf{W}_{p,q}^{(-)}
   =\eta  \frac{\partial \mathbb{C}(\mathbb{S}(\mathbf{Z}^*))}{\partial \mathbf{W}_{p,q}}.
%    =\eta  \frac{\partial\mathbb{C}(\mathbb{S}(Z))}{\partial Z}  \frac{\partial Z}  {\partial \mathbf{W}_{p,q}}
%   =\eta \sum_{i,j}\left( \frac{\partial \mathbb{C}(\mathbb{S}(Z))}{\partial Z_{i,j}}
%     \frac{\partial Z_{i,j}}{\partial \mathbf{W}_{p,q}} \right).
   \end{equation}
   According to the chain rule and the backpropagation for deep networks with structured layers~[6], Eq~\eqref{eq_chainG1} can be derived as
   \begin{equation}\label{eq_BP_T}\small
      \mathbf{W}_{p,q}^{(-)}
      = \eta  \frac{\partial\mathbb{C}(\mathbb{S}(\mathbf{Z}^*))}{\partial \mathbf{Z}^*}  \frac{\partial \mathbf{Z}^*}  {\partial \mathbf{W}_{p,q}}
      = \eta (\mathbf{L}\mathbf{Z}_{\ell-1})^{\top} \frac{\partial\mathbb{C}(\mathbb{S}(\mathbf{Z}^*))}{\partial \mathbf{Z}^*}.
%      = \eta (\mathbf{L}f)^{\top} \sum_{i,j}\left( \frac{\partial \mathbb{C}(\mathbb{S}(Z))}{\partial Z_{i,j}} \right).
   \end{equation}
    Since $\mathbb{C}(\mathbb{S}(\mathbf{Z}^*))$ is a scalar, the matrix $\frac{\partial\mathbb{C}(\mathbb{S}(\mathbf{Z}^*))}{\partial \mathbf{Z}^*}$ indexed with $i,j$ is
   \begin{equation}\label{eq_chainG1stItem}\small
     \frac{\partial \mathbb{C}(\mathbb{S}(\mathbf{Z}^*))}{\partial \mathbf{Z}^*_{i,j}}
     =  \frac{\partial \mathbb{C}(\mathbb{S}(\mathbf{Z}^*))}{\partial \mathbb{S}(\mathbf{Z}^*)}
     \frac{\partial \mathbb{S}(\mathbf{Z}^*)}{\partial \mathbf{Z}^*_{i,j}}
     = \sum_{k} \mathbf{Y}_{i,k} \frac{\partial \log \mathbf{O}_{i,k}}{\partial \mathbf{Z}^*_{i,j}}.
   \end{equation}
   Then, by expressing $\mathbf{O}_{i,k}$ as
   \begin{equation}\label{eq_oExpress}\small
     \mathbf{O}_{i,k}=\frac{e^{Z_{i,k}}}{\Omega_{i,\cdot}}, \Omega_{i,\cdot}=\sum_{l}^{e^{Z_{i,l}}} \Longrightarrow \log \mathbf{O}_{i,k} = Z_{i,k} - \log \Omega_{i,\cdot},
   \end{equation}
   we have
   \begin{equation}\label{eq_partialItem1}\small
     \frac{\partial \log \mathbf{O}_{i,k}}{\partial \mathbf{Z}^*_{i,j}}=\delta_{kj}-\frac{1}{\Omega} \frac{\partial \Omega_{i,\cdot}}{\partial \mathbf{Z}^*_{j}},
   \end{equation}
   where $\delta_{kj}$ is the Kronecker delta. Then the softmax-denominator is
   \begin{equation}\label{eq_softmaxDemon}\small
   \frac{\partial \Omega_{i}}{\partial \mathbf{Z}^*_{i,j}}=\sum_{l} e^{\mathbf{Z}^*_{i,l}} \delta_{l j}=e^{\mathbf{Z}^*_{i,j}},
   \end{equation}
    which gives
    \begin{equation}\label{eq_partLpartZ}\small
    \frac{\partial \mathbf{O}_{i,k}}{\partial \mathbf{Z}^*_{i,j}}=\mathbf{O}_{i,k}\left(\delta_{j k}-\mathbf{O}_{i,j}\right).
    \end{equation}
    Note that the derivative is with respect to $Z_{i,j}$, an arbitrary component of $Z$, so the gradient of $\mathbb{C}(\mathbb{S}(Z))$ with respect to $Z$ is then
    \begin{equation}\label{eq_gradJandZ}\small
    \frac{\partial \mathbb{C}(\mathbb{S}(\mathbf{Z}^*))}{\partial \mathbf{Z}^*_{i,j}}
    =\sum_{k} \mathbf{Y}_{i,k}(\mathbf{O}_{i,j}-\delta_{j k})
    =\mathbf{O}_{i,j}(\sum_{k} \mathbf{Y}_{i,k})-\mathbf{Y}_{i,j}.
    \end{equation}
    Since $Y$ is a ont-hot label matrix, $\sum_{k} Y_{i,k}=1$. Therefore, Equation~\eqref{eq_BP_T} can thus be derived as
    \begin{equation}\label{eq_matrixExpres}\small
      \mathbf{W}^{(-)} = \eta (\mathbf{L}\mathbf{Z}_{\ell-1})^{\top}(\mathbf{O}-\mathbf{Y}).
    \end{equation}
    Then, since $\mathbf{L}$ and ${\mathbf{Z}_{\ell-1}} \mathbf{Z}_{\ell-1}^{\top}$ are both symmetric matrix,
    \begin{align}\label{eq_LfLfT}\small
    \mathbf{L}\mathbf{Z}_{\ell-1}\mathbf{W}^{(-)} &= \eta \mathbf{L}\mathbf{Z}_{\ell-1} (\mathbf{L}\mathbf{Z}_{\ell-1})^{\top} (\mathbf{O}-\mathbf{Y}) \notag  \\
    &= \eta (\mathbf{L} {\mathbf{Z}_{\ell-1}} \mathbf{Z}_{\ell-1}^{\top} \mathbf{L})(\mathbf{O}-\mathbf{Y}) \notag \\
%    &= \eta ( ( (\mathbf{L} {\mathbf{Z}_{\ell-1}} \mathbf{Z}_{\ell-1}^{\top}) \mathbf{L} )^{\top} )^{\top} (O-Y) \notag \\
    &= \eta ( \mathbf{L} (\mathbf{L} {\mathbf{Z}_{\ell-1}} \mathbf{Z}_{\ell-1}^{\top})^{\top} )^{\top} (\mathbf{O}-\mathbf{Y}).
    \end{align}
    By expanding the dimension of $\mathbf{Z}_{\ell-1}$ by ${\mathbf{Z}_{\ell-1}} \mathbf{Z}_{\ell-1}^{\top}=\mathbf{Z}' \in \mathbb{R}^{N \times N}$ and setting $\mathbf{Z}'$ as the feature matrix which is within a middleware graph $\mathcal{G}'=\{\mathbf{Z}',\mathcal{E}\}$,
    $\mathbf{L} (\mathbf{L} {\mathbf{Z}_{\ell-1}}$, $\mathbf{Z}_{\ell-1}^{\top})^{\top} )^{\top}$ can be modeled as a message passing process of $\mathcal{G}'$ which is driven by MMPM. We use Figure~\ref{fig_mmpm} to illustrate how MMPM works.

% (MMPM) which can help the graph immune system to calculate the upper bound of antigens.
    Therefore, Equation~\eqref{eq_DefOfDomain} can be derived as
    \begin{multline}\label{eq_DomainMMPM}\small
    (\mathbf{z}_{i,\ell}^{(t+1)} - \mathbf{z}_{i,\ell}^{(t)}) \cdot (\mathbf{z}_{i,\ell}^{(t+2)} - \mathbf{z}_{i,\ell}^{(t-1)}) \\
    \geq \eta^2 \max_k \| ((\mathcal{P}(\mathbf{Z}_{\ell-1};\mathcal{E})) (\mathbf{O}-\mathbf{Y}))_{k,\cdot} \|^2_2.
    \end{multline}
    Further, proposition 2 proves that node-wise trajectories could be the summation of the edge-wise trajectories of all immediate edges, therefore, through observing the global matrix $O = \mathbb{S}(\mathbf{L}\mathbf{Z}_{\ell-1} \mathbf{W}_\ell^{(t)})$ in epoch $t$ and layer $\ell$, Proposition~\ref{pro_upperbound} can be derived according to Equation~\eqref{eq_DomainMMPM}.

\section{Proof of Proposition~\ref{pro_NTnET}}\label{appen_3}

We first define the \emph{complementary spanning sub graph} (CS-subgraph) $\mathcal{G}^{\text{sub}}_1$ and $\mathcal{G}^{\text{sub}}_2$ of $\mathcal{G}=\{ \mathbf{Z}, \mathcal{E} \}$ as
\begin{myDef}[Complementary Spanning Subgraph]
If $\mathcal{G}^{\text{sub}}_1$ and $\mathcal{G}^{\text{sub}}_2$ are the spanning sub-graph of $\mathcal{G}$, meanwhile they satisfy
 \begin{multline}\label{eq_subG}\small
     \mathcal{G}^{\text{sub}}_1 = \{ \mathbf{Z},\mathcal{E}^{\text{sub}}_1 \}, \mathcal{G}^{\text{sub}}_2 = \{ \mathbf{Z},\mathcal{E}^{\text{sub}}_2 \} \\
     \text{s.t. } \mathcal{E}^{\text{sub}}_1 \cup \mathcal{E}^{\text{sub}}_2 = \mathcal{E}, \mathcal{E}^{\text{sub}}_1 \cap \mathcal{E}^{\text{sub}}_2 = \phi, \\
     \mathbf{L}_{\mathcal{G}^{\text{sub}}_1} + \mathbf{L}_{\mathcal{G}^{\text{sub}}_2} = \mathbf{L}_{\mathcal{G}},
\end{multline}
they are the complementary spanning sub-graph of $\mathcal{G}$.
\label{def_SubGraph}
\end{myDef}
Figure~\ref{fig_SubGraph} shows an example of the CS-subgraph.
\begin{figure}[htb]
\centering
\includegraphics[width=0.45\textwidth]{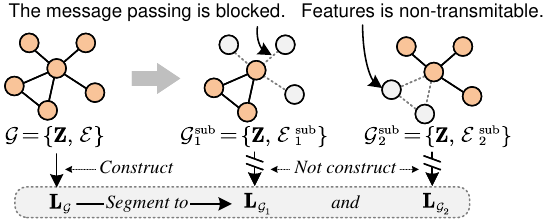}
\caption{An illustration of CS subgraph.}
\label{fig_SubGraph}
\end{figure}
According to the Definition~\ref{def_SubGraph}, the global message passing on $\mathcal{G}$ can be the additive sum of the CS-subgraph of $\mathcal{G}$, i.e.,
\begin{equation}\label{eq_SumofSubgraph}\small
\mathbf{L}_{\mathcal{G}}\mathbf{Z} = \mathbf{L}_{\mathcal{G}^{\text{sub}}_1}\mathbf{Z} + \mathbf{L}_{\mathcal{G}^{\text{sub}}_2}\mathbf{Z},
\end{equation}
we can thus constantly segment $\mathcal{G}$ and active message passing on its CS-subgraphs, i.e.,
\begin{align}\label{eq_SubSubSubGraphs}\small
\mathbf{L}_{\mathcal{G}}\mathbf{Z}
&= \mathbf{L}_{ \{ \mathbf{Z},\mathcal{E}^{\text{sub}}_1 \} }\mathbf{Z} +  \mathbf{L}_{ \{ \mathbf{Z},\mathcal{E}^{\text{sub}}_2 \} }\mathbf{Z}
= \ldots
= \sum_{\ell \in \mathcal{E}} \mathbf{L}_{\{ \mathbf{Z},\mathcal{E} \}} \mathbf{Z}. \notag \\
&=\sum_{\ell \in \mathcal{E}} \mathcal{R}(\mathbf{L},l))\mathbf{Z},
\end{align}
since the definition of $\mathcal{R}(\cdot , \cdot)$ supposes that $\sum_{l \in \mathcal{E}}=\mathbf{L}$. Equation~\eqref{eq_NTnET} can thus be obtained by substituting Equation~\eqref{eq_SubSubSubGraphs} into Equation~\eqref{eq_DefEdgeT}.

\section*{References}
\begin{itemize}
  \item [\lbrack1\rbrack] Petersen Philipp, Raslan Mones, Voigtlaender Felix. Topological properties of the set of functions generated by neural networks of fixed size. \emph{Foundations of Computational Mathematics}. 2021, 21: 375-444.
  \item [\lbrack2\rbrack] Rosenblatt Murray. A central limit theorem and a strong mixing condition. \emph{PNAS}. 1956, 42(1):43-47.
  \item [\lbrack3\rbrack] Loukas Andreas. What graph neural networks cannot learn: depth vs width. \emph{ICLR}, 2019, May.
  \item [\lbrack4\rbrack] Kullback Solomon, Leibler Richard A. On information and sufficiency. \emph{The Annals of Mathematical Statistics}. 1951, 22(1):79-86.
  \item [\lbrack5\rbrack] Cortes Corinna, Vapnik Vladimir. Support-vector networks. \emph{Machine learning}. 1995, 20:273-297.
\end{itemize}

\end{document}